\newcommand{{%
\beginpgfgraphicnamed{}
\InputIfFileExists{.tikz}{}{\input{./tikz/.tikz}}
\endpgfgraphicnamed}}[1]{{%
\beginpgfgraphicnamed{#1}
\InputIfFileExists{#1.tikz}{}{\input{./tikz/#1.tikz}}
\endpgfgraphicnamed}}
\newcommand{\InputIfFileExists{.tikz}{}{\input{./tikz/.tikz}}}[1]{\InputIfFileExists{#1.tikz}{}{\input{./tikz/#1.tikz}}}
\tikzstyle{dot}=[circle,fill=black,draw=black]
\tikzstyle{every picture}=[baseline=(current bounding box).east,scale=0.5,node distance=5mm]
\tikzstyle{none}=[inner sep=0pt]
\tikzstyle{every loop}=[]
\tikzstyle{(null)}=[]
\tikzstyle{plain}=[]
\newcommand{\semantics}[1]{[\![ #1 ]\!]} 
\newcommand{\ov}{\overrightarrow}
\newcommand{\ovl}{\overline}
\newcommand{\Rel}{\mathrm{Rel}}
\newcommand{\FdVect}{\mathrm{FdVect}}
\newcommand{\cat}[1]{\mathcal{#1}}
\newcommand{\cC}{\cat{C}}
\newcommand{\ket}[1]{|{#1}\rangle}
\newcommand\relto{\mathrel{\ooalign{$\longrightarrow$\cr\hidewidth$|$\hidewidth\cr}}}
\title{A Generalised Quantifier Theory of Natural Language  \\
in Categorical Compositional Distributional Semantics with Bialgebras
%A  theory of truth  via generalised quantifier theory for categorical compositional distributional semantics
}
\author{Jules Hedges\inst{1} \and Mehrnoosh Sadrzadeh\inst{2}}
\institute{Department of Computer Science, University of Oxford \\ \email{julian.hedges@cs.ox.ac.uk}
\and School of Electronic Engineering and Computer Science, Queen Mary University of London \\ \email{m.sadrzadeh@qmul.ac.uk}}
\begin{document}

\maketitle

\begin{center}
%\begin{tabular}{cc}
% &  \quad Mehrnoosh Sadrzadeh      \\
% &  \quad Queen Mary University of London      \\
% & \qquad  School of Electronic Eng. and Computer Science        \\
%  & \quad { \tt \small mehrs@eecs.qmul.ac.uk}
%\end{tabular}
\end{center}

\begin{abstract}
Categorical compositional distributional semantics is a model of natural language; it combines the statistical vector space models of  words  with the compositional  models of  grammar.  We formalise in this model the generalised quantifier theory of natural language, due to Barwise and Cooper.  The underlying setting is a compact closed category  with bialgebras. We start from a generative grammar formalisation and  develop an abstract categorical compositional semantics for it, then   instantiate the abstract setting to  sets and relations and to finite dimensional vector spaces and linear maps. We  prove the equivalence of the relational instantiation to  the truth theoretic semantics of generalised quantifiers. The vector space instantiation formalises the  statistical usages of words and enables us to, for the first time, reason about quantified phrases and sentences compositionally in distributional semantics. 
%Our model thus  recovers both truth theoretic and  statistical semantics. 
\end{abstract}

\section{Introduction}

%%% Local Variables: 
%%% mode: latex
%%% TeX-master: "Quant"
%%% End: 

Distributional semantics is a statistical model of natural language; it is  based on hypothesis that  words that have similar meanings often occur in the same contexts and  meanings of words can be deduced from the contexts in which they often occur. Intuitively speaking and in a nutshell,  words like `cat' and `dog' often occur in the contexts `pet',  `furry', and `cute', hence  have a similar meaning, one which is different from `baby', since the latter despite being `cute' has not  so often  occurred in the context  `furry' or `pet'.   This  hypothesis has often been traced back to the philosophy of language discussed  by Firth  \cite{Firth} and the mathematical linguistic theory  developed by Harris \cite{Harris}.  Distributional semantics has been used to reason about different aspects of word meaning, e.g.   similarity \cite{Rubenstein,Turney},   retrieval and clustering \cite{Lin,Landauer},   and disambiguation  \cite{Schutze}.  
%These frequencies are  normalised using measures from information retrieval to be better representatives of rare and  common words of the corpus \cite{Evert,BullinariaLevy}. 
A criticism to these models has been that natural language is not only about words but also about sentences, but these models  do not naturally extend to sentences, as sentences are not frequently occurring units of  corpora of text.

Models of natural language are not restricted to distributional semantics. A tangential approach puts the focus on the compositional nature of meaning and its relationship with language constructions. This approach is inspired by a hypothesis often assigned to Frege that meaning of a sentence is a function of the meanings of its parts \cite{Frege}. Informally speaking  and very roughly put, meaning of a transitive sentence such as `dogs chase cats' is a binary function of its subject and object. For instance, here the binary function is the verb `chase' and the arguments are  `dogs' and `cats'. This idea has been formalised in different ways, examples  are the early works of Bar-Hillel \cite{Bar-Hillel} and Ajdukiewicz \cite{Ajdukiewicz} on using classical logic, the context free grammars of Chomsky \cite{Chomsky}, and the first order logical approach  of Montague \cite{Montague1970}. One criticism to all these settings, however, is that they do not say much about the meanings of the parts of the sentence. For instance, here we do not know anything more about the meaning of `chase' and of `dogs' and `cats', apart from the fact that they one is a function and others its arguments. 

Compositional distributional semantics  aims to combine the compositional models of grammar with the statistical models of distributional semantics in order to overcome the above mentioned criticisms.  Among the early grammar-based formalisms  of the field is the work of Clark and Pulman \cite{ClarkPulman}, and among the first corpus-based approaches is the work of Mitchell and Lapata \cite{Mitchell-Lapata}.  The former model pairs the distributional meaning representation of a word with its grammatical role in a sentence and defines the meaning of a sentence to be a function of such pairs. The latter, takes the distributional meaning of a sentence to be the addition or multiplication of the distributional meanings of its words. The model of Clark and Pulman has not been experimentally successful and  its theory does not allow comparing  meanings of different sentences. The model of Mitchell and Lapata has been experimentally successful but forgets the grammatical structure of  sentences, since addition and multiplication are commutative. 

Categorical compositional distributional semantics is an attempt to overcome these shortcomings and unify these models. This model was first described in \cite{Clark-Coecke-Sadr} and later published in \cite{Coeckeetal}. It is based on two major developments: first is the mathematical models of grammar introduced in the  work of Lambek \cite{Lambek0,Lambek99}, which either explicitly or implicitly use the theory of monoidal categories; second, is the formulation of the distributional  representations  in terms of vectors, by many  e.g. Salton and Lund  \cite{Lund,Salton}. The categorical model uses the fact that the grammatical structures of language can be described  within a compact closed category  \cite{Preller,Lambek2010} and that finite dimensional vector spaces and linear maps  form such a category \cite{KellyLaplaza80}.  The original  formulation of this model consisted of the product of these two categories, which was later recast using a strongly monoidal functor \cite{PrellerSadr,KartSadr2013,APAL}. 
%In any form, this model can be seen as an extension of the word-based distributional model from linear algebra to multi-linear algebra; from vectors to tensors. 
The theoretical constructions of this model on an elementary fragment of language (adjective nouns phrases and transitive sentences) were  evaluated in \cite{GrefenSadr,GrefenSadrCL}  and in   \cite{kartsaklis2012,KartSadr}.  Much of recent work of the field is focused on using methods from machine learning (regression, tensor decomposition, neural embeddings)  to implement them more efficiently \cite{Milajevs,Nal,MultiStep,Tamara}.   

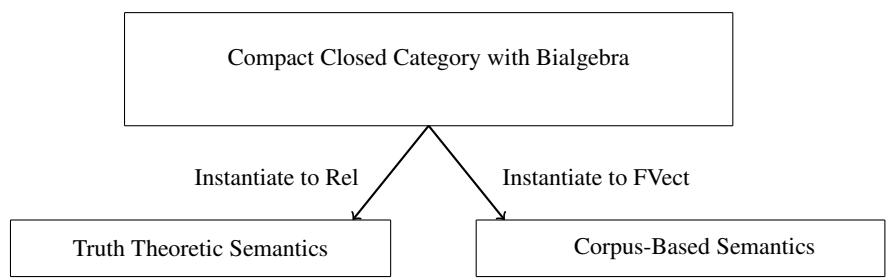
\begin{figure}[h]
\begin{center}
  {%
\beginpgfgraphicnamed{abstract-concrete-diag}
\begin{tikzpicture}
	\begin{pgfonlayer}{nodelayer}
		\node [style=none] (0) at (-8, 3) {};
		\node [style=none] (1) at (8, 3) {};
		\node [style=none] (2) at (-8, 0) {};
		\node [style=none] (3) at (8, 0) {};
		\node [style=none] (4) at (0, 1.75) {Compact Closed Category with Bialgebra};
		\node [style=none] (5) at (0, 0) {};
		\node [style=none] (6) at (-2, -2.5) {};
		\node [style=none] (7) at (2, -2.5) {};
		\node [style=none] (8) at (-11, -2.5) {};
		\node [style=none] (9) at (-1, -2.5) {};
		\node [style=none] (10) at (-1, -4) {};
		\node [style=none] (11) at (-11, -4) {};
		\node [style=none] (12) at (-6, -3.25) {Truth Theoretic Semantics};
		\node [style=none] (13) at (7, -3.25) {Corpus-Based Semantics};
		\node [style=none] (14) at (1.25, -4) {};
		\node [style=none] (15) at (12, -2.5) {};
		\node [style=none] (16) at (1.25, -2.5) {};
		\node [style=none] (17) at (12, -4) {};
		\node [style=none] (4) at (-4, -1.35) {Instantiate to Rel};
		\node [style=none] (4) at (4.4, -1.35) {Instantiate to FVect};
	\end{pgfonlayer}
	\begin{pgfonlayer}{edgelayer}
		\draw [style=none] (0.center) to (1.center);
		\draw [style=none] (1.center) to (3.center);
		\draw [style=none] (0.center) to (2.center);
		\draw [style=none] (2.center) to (3.center);
		\draw [style=none] (8.center) to (9.center);
		\draw [style=none] (9.center) to (10.center);
		\draw [style=none] (8.center) to (11.center);
		\draw [style=none] (11.center) to (10.center);
		\draw [style=none] (16.center) to (15.center);
		\draw [style=none] (15.center) to (17.center);
		\draw [style=none] (16.center) to (14.center);
		\draw [style=none] (14.center) to (17.center);
		\draw [style=thick,->] (5.center) to (6.center);
		\draw [style=thick,->] (5.center) to (7.center);
	\end{pgfonlayer}
\end{tikzpicture}}
\endpgfgraphicnamed}  
\end{center} 
\caption{Abstract and Concrete Models for Generalized Quantifiers in Compositional Distributional Semantics}
\label{fig:AbsConc}
\end{figure}

Despite all these,  dealing with  meanings of logical words such as  pronouns, prepositions, quantifiers, and conjunctives has posed challenges and open problems. In recent work \cite{SadrClarkCoecke1,SadrClarkCoecke2} and also in \cite{kartsaklisphd} we showed how Frobenius algebras over compact closed categories can become useful in modelling relative pronouns and prepositions. In this paper, we take a step further and show how bialgebras over compact closed categories  model generalised quantifiers \cite{BarwiseCooper81}.  We  first present  a  preliminary account of compact closed categories and bialgebras over them and review  how vector spaces and relations provide instances. The contributions of the paper start from section 3, where we develop an abstract categorical semantics for the generalised quantifier theory in terms of diagrams and morphisms of compact closed categories with bialgebras. We present two concrete interpretations of this abstract setting: sets and relations, as well as  finite dimensional vector spaces and linear maps.

The former is the basis for   truth theoretic semantics and the latter  for corpus-base distributional semantics. We prove that the relational instantiation of the abstract model is equivalent to the truth theoretic model of generalised quantifier theory (as presented by Barwise and Cooper). We then prove how the relational model embeds into finite dimensional vector spaces  and more importantly, show how it generalises to a compositional distributional semantic model of language.  We provide vector interpretations for quantified sentences, based on the grammatical structure of the sentences and the meaning vectors of their words. The meaning vectors of nouns, noun phrases, and verbs are as  previously developed. The meaning vectors of determiners and quantised phrases and sentences are novel.

The are two predecessors to this paper:  \cite{RypSadr}, where  Frobenius algebras were used and the equivalence between relational instantiation  and  truth theoretic semantics could  not be established, and  \cite{PrellerSadr2}, where a two-sorted functional logic was used, but  only a case for  semantics of universal quantification was presented.

%Implementing our model on data from corpora and  experimenting with it in entailment tasks constitutes work in progress. 

\section{Preliminaries}
\label{prelim}

\subsection{Vector Space Models of Natural Language}
\label{subsec:DistVect}

%These models are also referred to as \emph{distributional} models. The idea behind them  originated from  \cite{Firth} where it was argued that the meaning of a word depends on the context in which it often occurs. This idea was formalised in terms of, vector space models  and used in tasks such as  word similarity and disambiguation \cite{lin1998automatic,schutze1998automatic}. 

Given a corpus of text, a set of contexts and a set of target words, a co-occurrence  matrix has at each of its entries   `the degree of co-occurrence between the target word and the context'. This degree is determined using the notion of a \emph{window}: a span of words or grammatical relations that  slides across the corpus and records the co-occurrences that happen  within it. A context can be a word, a lemma, or a feature. A lemma is the canonical form of a word; it represents the set of  different forms a word can take when used in a corpus. For example, the set  $\{$kills, killed, to kill, killing, killer, killers, $\cdots\}$ is represented by the lemma `kill'. A feature represents a set of words that together express a pertinent linguistic property of a word. These properties can be topical, lexical, grammatical, or semantic. For example the set $\{$bark, miaow, neigh$\}$ represents a semantic feature of  animal, namely the noise that it makes, whereas the set $\{$fiction, poetry, science$\}$ represents the topical features of a book.

The lengths of the corpus and window are parameters of the model, as are the sizes of the feature and target sets. All of these  depend on the task; for  studies  on these parameters, see  for example \cite{Evert,BullinariaLevy}. 

Given  an $m \times n$ co-occurrence matrix, every target word $t$ can be represented  by a row vector of length $n$.  For each feature $f$, the entries of this vector are a function of the raw co-occurrence counts, computed as follows:
\[
\textmd{raw}_f(t) = \frac{\sum_c N(f,t)}{k}  
\]
for $N(f,t)$ the number of times the $t$ and $f$ have co-occurred in the window. Based on $L$,  the total number of times that $t$ has occurred  in the corpus, the raw count is  turned into various normalised degrees. Some common examples are probability, conditional probability, likelihood ratio and its logarithm:

\[
\textmd{P}_f(t) = \frac{\textmd{raw}_f(t)}{L},  \quad \textmd{P}(f|t) = {{P(f, t)}\over {P(t)}}, \quad 
 \textmd{LR}(f,t) = {{\textmd{P}(f \mid t)} \over{\textmd{P}(f)}},  \quad \log \textmd{LR}(f, t) =  \log{{\textmd{P}(f \mid t)} \over{\textmd{P}(f)}}
\]

We denote a vector space model of natural language produced in this way with   $V_{\Sigma}$, where $\Sigma$ is the set of features, and $V_{\Sigma}$ is the vector space spanned by it.

As an example, consider a corpus of $10^8$ words, $10^6$   target words and  $10^5$ features. Fix the window size to be 5 and suppose the co-occurrence matrix with raw counts to be as follows, where the column entries are the feature words and the row entries are the target words.

\newcommand{\dolphinblood}{0}
\newcommand{\sharkblood}{400}
\newcommand{\textlen}{10^8}

\begin{center}
\begin{tabular}{c|c|c|c|c|c|c}
 &{\bf fish} & {\bf horse} & {\bf pet} & {\bf blood}  & ... & total\\
\hline
{\it dolphin}& 500 & 10 & 700& \dolphinblood &... & 2000\\
{\it shark} & 250 & 10 & 20& \sharkblood & ... & 1000 \\
{\it plankton} & 250& 10 & 1000& 10 & ... &  1700\\
{\it pony} & 10 & 1000 & 10 & 10 & ... &  1500\\
%...\\
%\hline
%total & 1010 & 500e3 & 2000 &  500& \\
\hline
%\hline
%\textmd{ttf} &100 & 400 & 500 & 50\\
%\textmd{avgff}(\featureterm) &10 & 1250 & 4 & 10\\
\end{tabular}
\end{center}

The vector representations of the target word `dolphin' with the raw  counts and its functions, as discussed above, are as follows:

\begin{align*}
\textmd{raw}&=  (500, 10, 700, \dolphinblood)&\\
\textmd{P}: &= (\frac{5}{20}, \frac{1}{200}, \frac{7}{20},0) &\\
\textmd{LR}: &= (25000, 500, 17500, 0)&\\
\log \textmd{LR}:&= (1.397, -0.301, 1.2430, 0)&
\end{align*}

Various notions of distance (length, angle) between the  vectors have been used to measure the degree of similarity (semantic, lexical, information content) between the words.  For instance, for the cosine of the angle between the  vectors of dolphin and other target words we obtain:

\[
\cos( \ov{\textmd{dolphin}}, \ov{\textmd{shark}}) = 0.87 \qquad
\cos( \ov{\textmd{dolphin}}, \ov{\textmd{pony}}) = 0.009
\]

This indicates that  the degree of similarity between dolphin and shark is much higher than that of  dolphin and pony. These degrees directly follow  the co-occurrence degrees we have set above,  that dolphin and shark have co-occurred often with the same fearture, but dolphin and pony have done so to a much lesser degree.

\subsection{Generalised Quantifier Theory in Natural Language}
\label{subsec:GenQuant}

We briefly review the theory of generalised quantifiers in natural language as presented in \cite{BarwiseCooper81}.  
Consider the fragment of English generated  by the following context free grammar, referred to by $G_Q$: 
%Since the focus of  this paper  is on the quantifiers we defer a treatment of other logical words such as conjunction, disjunction and negation to future work. 

%Given  a context free grammar  in Chomsky normal form.  
%
%Each rule is of either of  the following two forms, with the first one referred to by as a `non-terminal rule' and the second one as a `terminal rule':
%
%\[
%\begin{cases}
%A \to BC & \text{for} \ A,B,C \in N\\
%A \to a & \text{ for}  \ A \in N \ \text{and} \  a \in T
%\end{cases}
%\]

\begin{center}
\begin{minipage}{5cm}
\begin{tabular}{ccc}
S & $\to$ & NP VP \\
VP & $\to$ & V NP \\
NP &$\to$ & Det N
\end{tabular}
\end{minipage}
\qquad
\begin{minipage}{9cm}
\begin{tabular}{ccl}
NP & $\to$ &  John, Mary, something, $\cdots$\\
N& $\to$ &   cats, dogs, men, $\cdots$\\
VP & $\to$ &  sneeze, sleep,$\cdots$\\
V & $\to$ &   love, kiss, $\cdots$\\
Det & $\to$ &  a, the, some, every, each, all, no, most, few, one, two, $\cdots$ 
\end{tabular}
\end{minipage}
\end{center}

A model for the language generated by this grammar  is a pair $(U, \semantics{\ })$, where $U$ is a universal reference set and $\semantics{\ }$ is an interpretation  function defined by induction as follows. 
\begin{enumerate}
\item  On terminals. 
\begin{enumerate}
\item The interpretation of a  determiner  $d$ generated by  `$\text{Det} \to d$'   is  a map  with the following type:
\[
 \semantics{d} \colon {\cal P}(U) \to {\cal P}{\cal P}(U)
\]
It  assigns to each $A \subseteq U$, a family of subsets of $U$. 
%The images of
 These interpretations are referred to as \emph{generalised quantifiers}. For logical quantifiers, they are  defined as follows:

\begin{eqnarray*}
\semantics{\text{some}}(A) &=& \{X \subseteq U \mid X \cap A \neq \emptyset\}\\
\semantics{\text{every}}(A) &=& \{X \subseteq U \mid A \subseteq X\}\\
\semantics{\text{no}}(A) &=& \{X \subseteq U \mid  A \cap X = \emptyset\}\\
\semantics{n}(A) &=& \{X \subseteq U \mid \ \mid X \cap A \mid = n\}
\end{eqnarray*}

\noindent
A similar method is used to define non-logical quantifiers. For example for non-logical quantifiers most, few, and many, they are defined as follows:
\begin{eqnarray*}
\semantics{\text{most}}(A) &=& \{X \subseteq U \mid X \ \mbox{has most elements of} \ U\}\\
\semantics{\text{few}}(A) &=& \{X \subseteq U \mid X \ \mbox{has few elements of} \ U\}\\
\semantics{\text{many}}(A) &=& \{X \subseteq U \mid X \ \mbox{has many elements of} \ U\}
\end{eqnarray*}

\item The interpretation of a  terminal $y \in \{np, n, vp\}$ generated by either of the rules `NP $\to$ np, N $\to$ n, VP $\to$ vp'  is $\semantics{y} \subseteq U$. That is, noun phrases, nouns and verb phrases are interpreted as subsets of  the reference set. %For the case of verb phrases, these subsets are interpreted as unary relations. 
\item The interpretation of a  terminal $y$ generated by the rule V $\to$ y is  $\semantics{y} \subseteq U \times U$. That is,  verbs are interpreted as binary relations over the reference set.  
\end{enumerate}
\item On non-terminals.
\begin{enumerate}
\item The interpretation of  expressions generated by the  rule `$\text{NP} \to \mbox{Det N}$'   is  as follows:

\begin{eqnarray*}
\semantics{\mbox{Det N}} \ = \  \semantics{d}(\semantics{n})  
&\ \text{where} \ &  X \in \semantics{d}(\semantics{n}) \ \text{\bf iff} \ X \cap \semantics{n} \in \semantics{d}(\semantics{n})\\
&\quad \text{for \ all} \ d,n \ & \text{generated \ by} \   \mbox{Det} \to d  \ \text{and} \  \mbox{N} \to n 
\end{eqnarray*}

\item The interpretation of expressions generated by  the  rule `VP $\to$ V NP'  is as follows:
\[
\semantics{\mbox{V NP}} =    \semantics{v}(\semantics{np})  \quad \text{for \ all} \  v, np \ \text{generated \ by} \  \text{VP} \to v\  np
\]
where, $R$ is a unary relation $R \subseteq U$ and for $A \subseteq U$,   $R(A)$ is the forward image of $R$ on $A$, that is $R(A) = \{y \mid y \in R, \text{for} \ x \in A, \text{s.t.} x = y \}$, indeed $R \cap A$.  

\item The interpretation of expressions generated by  the  rule `S $\to$ NP VP'  is as follows
\[
\semantics{\mbox{NP VP}} = \semantics{vp}( \semantics{np}) \quad \text{for \ all} \  np, vp \ \text{generated \ by} \  \text{S} \to  np \ vp
\]
where,    $R \subseteq U \times U$ is a binary relation and  for $A \subseteq U$,  $R(A)$ is the forward image of $R$ on $A$, that is $R(A) = \{y \mid (x,y) \in R, \mbox{for \ some}  \ x \in A\}$. 
\end{enumerate}

In either of the  2.(b) and 2.(c),  the cases where $\semantics{np}$ is a family of sets, i.e. obtained by  application of a determiner to a noun,  are defined using the  induction hypothesis. More specifically, these definitions go through  item 2.(a), which in turn  is obtained by going through   item 1.(a).

\end{enumerate}

Generalised quantifiers in natural language satisfy a property referred to by \emph{living on} or \emph{conservativity}, defined below.

\begin{definition}\label{def:livingon}
For a generalised quantifier $Q : \mathcal P (U) \to \mathcal P \mathcal P (U)$, we say that $Q$ satisfies the \emph{living on property} if, for all $X, A \subseteq U$, $X \in Q (A)$ iff $X \cap A \in Q (A)$.
\end{definition}

In  the models of natural language, a generalised quantifier is the interpretation of a determiner. In  $G_Q$,  a determiner is generated by the syntactic rule  `Det $\to$ d'. Determiners  are applied to nouns via the syntactic rule  `NP $\to$ Det N'.  Thus, when employed in  the language generated by $G_Q$, the above definition  gets the following form:

\begin{quote}
For the expressions of $G_Q$,  a determiner $\semantics{d}$  satisfies the living on property  if,   for all $A, X \subseteq U$,  $X \in \semantics{d}(A)$ iff  $X \cap A \in \semantics{d}(A)$. 
\end{quote}

Originally discussed  in \cite{BarwiseCooper81} and subsequently in almost all  the literature on generalised quantifier theory, it is easy to verify that the living on property  makes the  following  true:

\begin{lemma}
\label{lemma:livingon}
For $d, n, np, vp$,  a determiner, a noun, a noun phrase, and a verb phrase of $G_Q$, if $\semantics{d}$  satisfies the \emph{living on} property, equivalences of the following kind hold on the expressions  of $G_Q$: 
\begin{quote}
$d \ n \ vp \ \iff \ d \ n  \ \text{\bf are} \  n \ \text{\bf who}  \ vp$\\
$np \ v \ d \ n  \ \iff \  np  \ v\  d\ n \ \text{\bf who \ are}  \ n$
\end{quote}

\end{lemma}

Examples are as follows:

\begin{quote}
All men eat. \ $\iff$ \ All men are men who eat.\\
Many men run. \ $\iff$ \ Many men are men who run. \\
Men love some cats. \ $\iff$ \ Men love some cats who are  cats.
\end{quote}

Thus the quantifiers modifying the subjects and objects of sentences,  \emph{live on} these subjects and objects.  The  equivalent sentences of the right hand sides seem redundant. They seem to be a redundant way of expressing the same as their left hand side sentences. However, they express the fact  that only the part of the $vp$  that is restricted to the quantified $np$ matters.    Barwise and Cooper note that this is a property of natural language, that every natural language has determiners whose semantic role is to assign to  nouns  (more precisely to the common count nouns) quantifiers that \emph{live on} them. For instance, the determiner `all' in the first example above assigns to  the noun phrase `men'  the quantifier \emph{all} such that it  lives on `men'.  This criteria is thus used to filter out determiners whose semantics is not definable by the generalised quantifier theory, an example is the determiner `only'.  There are also  mathematical quantifiers for whom this property fails, an example is H\"{a}rtig's  \emph{equinumerous} quantifier  defined by $B \in I (A) \iff |A| = |B|$.
%$I(A, B) \iff |A| = |B|$. 

\medskip
The `meaning' of a sentence  is its truth value, defined as follows:
\begin{definition}
\label{def:truth-genquant}
A sentence in generalised quantifier theory is true  iff $\semantics{\mbox{NP VP}} \neq \emptyset$.
% and false otherwise. 
\end{definition}

As an example,   meaning of a sentence with a quantified phrase at its subject position becomes as follows:
\[
\semantics{\mbox{Det N VP}} \ = \ \begin{cases} \text{\it true} & \text{ if } \semantics{vp} \cap \semantics{n} \in \semantics{\mbox{Det N}}  \\
\text{\it false} & \mbox{otherwise}
\end{cases}
\]
obtained by applying the inductive definition  of $\semantics{\ }$ to    $\semantics{\mbox{NP \ VP}}$ to generate  $\semantics{S}$. Herein,   $\semantics{np}$ is generated by  the inductive step $\semantics{\mbox{Det \ N}}$.  For instance, meaning of `some men sneeze', which is of this form,  is true iff $\semantics{\text{sneeze}} \cap \semantics{\text{men}} \in \semantics{\mbox{some men}}$, that is, whenever the set of things that sneeze and are men is a non-empty set.  As another example, consider the  meaning of a sentence with a quantified phrase at its object position, whose meaning is as follows:
\[
\semantics{\mbox{NP V Det N}} \ = \ \begin{cases} \text{\it true} & \text{ if } \semantics{n} \cap 
\semantics{v}(\semantics{np}) \in \semantics{\mbox{Det N}}  \\
\text{\it false} & \mbox{otherwise}
\end{cases}
\] 
This is obtained by applying the inductive definition to $\semantics{\mbox{NP \ VP}}$ to generate $\semantics{S}$, wherein $\semantics{vp}$ is obtained by the inductive step   $\semantics{\mbox{V \ NP}}$ and $\semantics{np}$ is obtained by the inductive step  $\semantics{\mbox{Det \ N}}$. An example of this case is the meaning of `John liked some trees', which  is  true iff $\semantics{\text{trees}} \cap \semantics{\text{like}}(\semantics{\text{John}}) \in \semantics{\mbox{some trees}}$, that is, whenever, the set of things that are liked by John and are trees is a non-empty set. Similarly, the sentence `John liked five trees' is true iff the set of things that are liked by John and are trees has five elements in it.

\subsection{From Context Free   to Pregroup Grammars}

A pregroup algebra $P = (P, \leq, \cdot, (-)^r, (-)^l)$   is a partially ordered monoid where every element has a left and a right adjoint \cite{Lambek99}.  That is, for  $p \in P$, there are $p^l, p^r \in P$ that satisfy the following four inequalities:
\[
p \cdot p^r \leq 1 \leq p^r \cdot p\qquad
p^l \cdot p \leq 1 \leq p \cdot p^l
\]

Let $P$ be a pregroup algebra; a pregroup grammar based on $P$  is a tuple $P = (P, \Sigma,  \beta, s)$,  where  $\Sigma$  is the vocabulary of the language,    $s \in P$   is a designated sentence type,   and $\beta$ is a  relation  $\beta \subseteq \Sigma \times P$ that assigns to words in $\Sigma$  elements of the pregroup $P$.  This relation is  referred to as a `type dictionary' and the elements of the pregroup as  `types'.  

A pregroup grammar $P$ assigns a type $p$ to a string  of words $w_1 \cdots w_n$,  for  $w_i \in \Sigma$, 
 if there exist types  $p_i \in \beta(w_i)$ for $1 \leq i \leq n$ such that  $p_1 \cdot \ \cdots \ \cdot p_n \leq p$. 
 We refer to this latter   inequality as the \emph{grammatical reduction} of the string. If $p_1 \ \cdot \ \cdots \ \cdot \ p_n \leq s$ then the string is a grammatical sentence. 
   %The language of $P$  consists of all strings on $\Sigma$ to which $P$  assigns a  type $p$; we refer to this language as   ${\cal L}(P)$.

A context free grammar (CFG) is transformed into a pregroup grammar via the procedure described in  \cite{Buszkowski-prg}. In a nutshell, one first transforms the CFG into an Ajdukiewicz grammar \cite{Ajdukiewicz}, using the procedure developed by Bar-Hillel, Gaifman, and Shamir \cite{Shamir}.  The procedure developed by Buszkowski  is then applied to transform  the result into a Lambek calculus \cite{Buszkowski-syncal}. Via a translation between Lambek calculi and pregroup grammars \cite{Lambek08}, the result is finally turned into a pregroup grammar. 
%The other direction, that is going from a Lambek calculus to a CFG is more involved and has been done  by Pentus \cite{Pentus}.
\begin{center}
CFG $\stackrel{\mbox{\cite{Ajdukiewicz}}}{\longrightarrow}$ Ajdukiewicz Grammar $\stackrel{\mbox{\cite{Buszkowski-syncal}}}{\longrightarrow}$ Lambek Calculus $\stackrel{\mbox{\cite{Buszkowski-prg}}}{\longrightarrow}$ Pregroup Grammar
\end{center}

%This procedure has been summarised in \cite{Kuznetsov}. 

In a  context free grammar  in Chomsky normal form, the rules are either of the form $A \to BC$ or $A \to x$, for $A, B, C$ non-terminals and $x$ a terminal.   The rules of such a grammar are classified based on the model defined over their generated language. We recall that a rule $A \to BC$ is referred to by \emph{right-to-left} if $\semantics{A} := \semantics{C}(\semantics{B})$;   it is referred to by  \emph{left-to-right} if $\semantics{A} := \semantics{B}(\semantics{C})$; the rest of the rules are referred to by \emph{atomic}. Based on this given classification and by collating the above, we define the concept of a  pregroup grammar generated over a context free grammar  as follows:
\begin{definition}
\label{def:cfg-prg}
A  pregroup grammar $P_G$ generated over a context free grammar  $G=(T, N, S, {\cal R})$ and a set of atomic types ${\cal A}$  is the pregroup grammar $(P({\cal A}), T,  \beta, \sigma(S))$ defined as follows:
\begin{itemize}
\item  $P({\cal A})$ is the free pregroup algebra generated over the set of atomic types ${\cal A}$.
\item $\beta$ is $\{(x, \sigma(x)) \mid x \in T\}$.
\item $\sigma \colon  N \cup T  \to P({\cal A})$ is as given below:
\begin{itemize}
\item To a non-terminal $C$ in  a left-to-right  rule $A \to BC$ of $G$, it assigns $\sigma(C) := \sigma(B)^r \cdot \sigma(A)$. 
\item To a  non-terminal $B$  in a right-to-left rule $A \to BC$, it assigns  $\sigma(B) := \sigma(A) \cdot \sigma(C)^l$. 
\item To all the other non-terminals $A$, it assigns an atomic type $\sigma(A)$.
\item To all terminals $x$, generated by an atomic rule  $A \to x$, it assigns the type $\sigma(A)$.  
%\item To the designated start non-terminal $S$,  it assigns the designated type $s$. 
\end{itemize}
\end{itemize}
\end{definition}
  %In what follows, we use the CFG-to-Lambek  translation of \cite{Stepan} alongside the Lambek-to-pregroup translation of \cite{Buszkowski-prg}.   
 %A rule $A \to \alpha$ is applied by substitution  to a string $\beta A \gamma$ in $(T \cup N)^*$ to produce the string $\beta  \alpha \gamma$. If a string  in  $T^*$ can be obtained from $S$ by several applications of the rules,  we say that   string is generated by the grammar $G$.  We refer to the language so generated  by  ${\cal L}(G)$.  

%The  left-to-right  and right-to-left  labels are decided by looking at the semantics of the rules. Atomic rules are those with only a non-terminal on the right hand side. }. 

As an example, consider  $G_Q$ and the set of atomic types $\{s,n,p\}$. The pregroup grammar generated over  these is  $(P(\{s,n,p\}), T, \beta, s)$.  This grammar is in Chomsky normal form.  The rule `S $\to$ NP \ VP' is  left-to-right and  the rules `VP $\to$ V \ NP' and `NP $\to$ Det \ N' are right-to-left;  the rest of the rules are atomic.  On the non-terminals VP, V, Det,    $\sigma$ is  thus defined as follows:
\[
\sigma(VP) := \sigma(NP)^r \cdot \sigma(S) \qquad
\sigma(V) := \sigma(VP) \cdot \sigma(NP)^l \qquad
\sigma(Det)  := \sigma(NP) \cdot \sigma(N)^l
\]
On the rest of the non-terminals $\sigma$ is defined as follows:
\[
\sigma(NP) = p \qquad
\sigma(N) = n \qquad
\sigma(S) = s
\] 
 $\beta$ is as follows:
\[
\{(np,p), (n,n), (vp, p^r \cdot s), (v, p^r \cdot s \cdot p^l), (d, p \cdot n^l)\}
\]
Noun phrases  take type $p$, nouns type $n$, intransitive verbs type $p^r \cdot s$, transitive verbs type $p^r \cdot s \cdot p^l$. Determiners  take type $p \cdot n^l$.   Sample elements of $\beta$ are as follows:
\[
\{(\text{John}, p), (\text{cats}, n),  (\text{sneeze}, p^r \cdot s), (\text{stroked}, p^r \cdot s \cdot p^l), (\text{some}, p \cdot n^l), \cdots\}
\]

% 
% 
%Intuitively, the type assigned to the non-terminal $x$  via the rule `VP $\to x$, is $p^r \cdot s$;  this means that words in the  lexical category VP input an argument of type $p$ and  have to be to the right of that argument, after that they  output a string  of type sentence $s$. The type assigned to the words in the lexical  category V is $p^r \cdot s \cdot p^l$, this means that these words input two arguments of type $p$ and  have to be to the right of one and to the left of the other, then they output a sentence. Finally, the type assigned to the words in the lexical category  Det is $p\cdot n^l$, which means that these words input an argument of the type $n$ and  have to be to the left of that argument, then output a phrase of type $p$. 

In this pregroup grammar,  a quantified noun phrase such as `some cats', a sentence with a quantified phrase in its subject position such as  `some cats sneeze', and  a sentence with a quantified phrase in its object position such as `John stroked some cats'. The grammatical reductions of  these in the pregroup grammar are as follows:

\begin{center}
\begin{tabular}{ccccc}
&&some & cats &\\
&&$(p \cdot n^l)$ &$ \cdot n$  & $\leq p \cdot 1 = p$\\
&some & cats & sneeze &\\
&$(p \cdot n^l)$ &$ \cdot n$ & $\cdot (p^r \cdot s)$ & $\leq p\cdot 1 \cdot (p^r \cdot s) = p \cdot (p^r \cdot s) \leq 1 \cdot s = s$\\
John & stroked & some & cats&\\
$p$ & $\cdot (p^r \cdot s \cdot p^l)$ & $\cdot (p \cdot n^l)$ & $\cdot n$ & $\quad \leq \quad 1 \cdot (s \cdot p^l) \cdot p  \cdot 1 = (s \cdot p^l) \cdot p \leq  s\cdot 1 = s$
\end{tabular}
\end{center}

\noindent
In the first example, `some' inputs `cats' and outputs a noun phrase; in the second example, first `some' inputs `cats' and outputs a noun phrase, then `sneeze' inputs this noun phrase and outputs a sentence; in the last example, again first `some' inputs `cats' and outputs a noun phrase, at the same time the verb inputs `John' and outputs a verb phrase of type $s \cdot p^l$, which then  inputs the $p$ from the phrase `some cats' and outputs a sentence. 

In the pregroup grammar of English presented  in \cite{Lambek08}, Lambek proposes to  type the quantifiers as follows:
\[
\mbox{when modifying the subject}: ss^l\pi \pi^l \qquad
\mbox{when modifying the object}: os^rso^l
\]
For the subject case, we have the identity $ss^l\pi \pi^l  = s (\pi^rs)^l \pi^l$, which means that the quantifier inputs the subject (of type $\pi$) and the whole verb phrase and produces a sentence. Similarly, in the object case we have $os^rso^l = (so^l)^r so^l$. These types are translations of the original Lambek calculus types for quantifiers, where they were designed such that they would get  a first order logic semantics through a correspondence with lambda calculus \cite{vanBenthem}.  However, as  explained in \cite{Lambek08}, due to the ambiguities in  Lambek calculus-pregroup translations such a correspondence fails for pregroups.   Consequently,  the above types fail to provide a logical semantics for quantifiers. In this paper, we have taken a different approach and go by the types coming from the CFG of generalised quantifier theory. It will become apparent in the proceeding sections how this together with the use of compact closed categories offers a solution.

\subsection{Category Theoretic and Diagrammatic Definitions}
This subsection briefly reviews compact closed
categories and bialgebras. For a formal presentation, see
\cite{KellyLaplaza80,Kock72,McCurdy}.  A compact closed category, $\cC$, has objects $A, B$; morphisms $f \colon A
\to B$; and a monoidal tensor $A \otimes B$ that has a unit $I$, that is we have $A \otimes I \cong I \otimes A \cong A$. Furthermore, for
each object $A$ there are two objects $A^r$ and $A^l$ and  the
following morphisms:
\begin{align*}
A \otimes A^r \stackrel{\epsilon_A^r} {\longrightarrow} \; &I
\stackrel{\eta_A^r}{\longrightarrow} A^r \otimes A \hspace{1cm}
A^l \otimes A \stackrel{\epsilon_A^l}{\longrightarrow} \; I
\stackrel{\eta_A^l}{\longrightarrow} A \otimes A^l\
\end{align*}
These morphisms satisfy the following equalities, where $1_A$ is the
identity morphism on object $A$:
\begin{align*}
& (1_A \otimes \epsilon_A^l) \circ (\eta_A^l \otimes 1_A)  = 1_A 
\hspace{1cm}
&&(\epsilon_A^r \otimes 1_A) \circ (1_A \otimes
  \eta_A^r)   = 1_A\\
& (\epsilon_A^l \otimes 1_A) \circ (1_{A^l} \otimes
  \eta_A^l) = 1_{A^l}  
    \hspace{1cm}
&&    (1_{A^r} \otimes \epsilon_A^r) \circ (\eta_A^r \otimes 1_{A^r}) = 1_{A^r}
\end{align*}
\noindent These express the fact the $A^l$ and $A^r$ are the left and right
adjoints, respectively, of $A$ in the 1-object bicategory whose
1-cells are objects of $\cC$. A self adjoint compact closed category is one in which for even object $A$ we have $A^l \equiv A^r \equiv A$. 

Given a morphism $f : X \to Y$ in a self-adjoint compact closed category, its \emph{transpose} is the morphism $f^\top : Y \to X$ defined by
\[ (\epsilon_Y \otimes X) \circ (Y \otimes f \otimes X) \otimes (Y \otimes \eta_X) \]

Given two compact closed categories ${\cal C}$ and ${\cal D}$ a strongly monoidal functor $F \colon {\cal C} \to {\cal D}$  is defined as follows:
\[
F(A \otimes B) = F(A) \otimes F(B) \qquad
F(I) = I
\]
One can show that this functor preserves the compact closed structure, that is we have:
\[
F(A^l) = F(A)^l \qquad
F(A^r) = F(A)^r
\]
%A pregroup \cite{Lambek08} is a partial order compact closed category,
%which we refer to as {\em Preg}. This means that the objects of {\em
%  Preg} are elements of a partially ordered monoid, and between any two
%objects $p,q \in \Preg$ there exists a morphism of type $p
%\to q$ iff $p \leq q$. Compositions of morphisms are obtained by
%transitivity and the identities by reflexivity of the partial
%order. The tensor of the category is the monoid multiplication, and
%the epsilon and eta maps are as follows:
%\begin{align*}
%\epsilon_p^r = p \cdot p^r \leq 1 &\hspace{1cm} \eta_p^r = 1 \leq p^r \cdot p \\
%\epsilon_p^l = p^l \cdot p \leq 1  &\hspace{1cm}  \eta_p^l = 1 \leq p \cdot p^l
%\end{align*}

A bialgebra  in a  symmetric monoidal  category $({\cal
  C}, \otimes, I, \sigma)$ is a tuple $(X,  \delta, \iota, \mu, \zeta)$ where,
for $X$ an object of ${\cal C}$, the triple $(X, \delta, \iota)$ is  an internal comonoid; 
i.e.~the following are  coassociative and counital  morphisms of ${\cal
  C}$:
\begin{align*}
\delta \colon X \to X \otimes X&\qquad& \iota \colon X \to I
\end{align*}
Moreover $(X, \mu, \zeta)$ is  an internal  monoid; i.e.~the following are  associative and unital  morphisms:
\begin{align*}
\mu \colon  X \otimes X \to X  &\qquad& \zeta \colon I \to X
\end{align*}
And finally  $\delta$ and $\mu$ satisfy the four equations \cite{McCurdy}
\begin{align*}
\iota \circ \mu &= \iota \otimes \iota		&&\text{(Q1)} \\
\delta \circ \zeta &= \zeta \otimes \zeta	&&\text{(Q2)} \\
\delta \circ \mu &= (\mu \otimes \mu) \circ (\operatorname{id}_X \otimes \sigma_{X,X} \otimes \operatorname{id}_X) \circ (\delta \otimes \delta) && \text{(Q3)} \\
\iota \circ \zeta &= \operatorname{id}_I	&&\text{(Q4)}
\end{align*}

%morphisms satisfy the
%following \emph{Frobenius condition}:
%\begin{align*}
%\mbox{ $(\mu \otimes 1_X) \circ (1_X \otimes \delta) \ = \  \delta \circ \mu  \ = \  (1_X \otimes \mu) \circ (\delta \otimes 1_X)$}
%\end{align*}
%MS
Informally, the  comultiplication $\delta$  dispatches to copies  the information contained in
one object into two objects, and the  multiplication $\mu$ unifies  or merges  the
information of two objects into one. In what follows, we present three examples of compact closed categories, two of which with bialgebras. 

\subsection{Three Examples of Compact Closed Categories}

\medskip
\noindent
{\bf Example 1. Pregroup Algebras}
A pregroup algebra $P = (P, \leq, \cdot, (-)^l, (-)^r)$  is a compact closed category whose objects are the elements of the set $p \in P$ are the objects of the category and the partial ordering between the elements are the morphisms. That is,  for $p,q \in P$, we have that $p \to q$ is a morphism of the category iff $p \leq q$ in the partial order. The tensor product of the category is the monoid multiplication, whose unit is 1, and the adjoints of objects are the adjoints of the elements of the algebra.  The epsilon and eta morpshism are thus as follows:

\begin{align*}
p \cdot p^r \stackrel{\epsilon_p^r} {\longrightarrow} \; &1
\stackrel{\eta_p^r}{\longrightarrow} p^r \cdot p \hspace{1cm}
p^l \cdot p \stackrel{\epsilon_p^l}{\longrightarrow} \; 1
\stackrel{\eta_p^l}{\longrightarrow} p \cdot p^l\
\end{align*}
The above directly follow from the preroup inequalities on the adjoints.  A pregroup with a bialgebra structure on it becomes degenerate. To see this, suppose we have such an  algebra on the object $p$ of such a pregroup. Then the unit morphism of the internal comonoid of this algebra becomes the partial ordering  $\iota \colon p \leq 1$; taking the right adjoints of both sides of this inequality will yield $1 = 1^r \leq p^r$, and by the multiplying both sides of this with $p$   we will obtain $p \leq p \cdot p^r$, which by adjunction results in $p \leq p \cdot p^r \leq 1$, hence we have $p \leq 1$ and also $1 \leq p$, thus $p$ must be equal to 1. That is, assuming that we have  a bialgebra on an object will mean that that object is 1. 

\medskip
\noindent
{\bf Example 2. Finite Dimensional Vector Spaces over $\mathbb{R}$.}
These structures  together with  linear maps  form a compact
closed category, which we refer to as $\FdVect$.  Finite dimensional
vector spaces $V, W$ are objects of this category; linear maps $f
\colon V \to W$ are its morphisms with composition being the
composition of linear maps. The tensor product $V
\otimes W$ is the 
linear algebraic tensor product,
%tensor of the category, 
whose unit is the scalar
field of vector spaces; in our case this is the field of reals
$\mathbb{R}$.  Here, there is  a natural
isomorphism $V \otimes W \cong W \otimes V$. As a result of the
symmetry of the tensor, the two adjoints reduce to one and we obtain the  isomorphism $V^l \cong V^r \cong V^*$, 
where $V^*$ is the dual space of $V$. When the
basis vectors of the vector spaces are fixed, it is further the case
that we have $V^* \cong V$.  Thus, the compact closed category of finite dimensional  vector spaces with fixed basis is  self adjoint.

%Elements of vector spaces, i.e. vectors, are
%represented by morphisms from the unit of tensor to their corresponding
%vector space; that is $\overrightarrow{v} \in V$ is represented by the
%morphism $\mathbb{R} \stackrel{\overrightarrow{v}}{\longrightarrow}V$; by linearity this morphism is uniquely defined when setting $1\mapsto \overrightarrow{v}$.
%
Given a basis $\{r_i\}_i$ for a vector space $V$, the epsilon maps are
given by the inner product extended by linearity; i.e. we have:
\[
\epsilon^l  =  \epsilon^r \colon   V \otimes V \to \mathbb{R} \quad \mbox{given by} \quad
\sum_{ij} c_{ij} \ (\psi_i \otimes \phi_j)  \quad \mapsto \quad \sum_{ij} c_{ij} \langle \psi_i \mid \phi_j \rangle\]
Similarly, eta maps   are defined as follows:
\[
\eta^l = \eta^r \colon   \mathbb{R} \to V \otimes V
\quad \mbox{given by} \quad 
1 \; \mapsto \; \sum_i (\ket{r_i} \otimes \ket{r_i})
\]

Transposes in the category of finite dimensional vector spaces are given by linear-algebraic transposes of linear maps.

Let $V$ be a vector space with basis $\mathcal P (U)$, where $U$ is an arbitrary set.
We give $V$ a bialgebra structure as follows:
\begin{align*}
\iota \ket A &= 1 \\
\delta \ket A &= \ket A \otimes \ket A \\
\zeta &= \ket U \\
\mu (\ket A \otimes \ket B) &= \ket{A \cap B}
\end{align*}
Note that an arbitrary basis element of $V \otimes V$ is of the form $\left| A \right> \otimes \left| B \right>$ for $A, B \subseteq U$.
For example, the verification of the bialgebra axiom (Q3) is as follows:
\begin{align*}
((\mu \otimes \mu) \circ (\operatorname{id} \otimes \sigma \otimes \operatorname{id}) \circ (\delta \otimes \delta)) (\ket A \otimes \ket B)
=\ &((\mu \otimes \mu) \circ (\operatorname{id} \otimes \sigma \otimes \operatorname{id})) (\ket A \otimes \ket A \otimes \ket B \otimes \ket B) \\
=\ &(\mu \otimes \mu) (\ket A \otimes \ket B \otimes \ket A \otimes \ket B) \\
=\ &\ket{A \cap B} \otimes \ket{A \cap B} \\
=\ &\delta \ket{A \cap B} \\
=\ &(\delta \circ \mu) (\ket A \otimes \ket B)
\end{align*}

\medskip
\noindent
{\bf Example 3. Sets and Relations}.
Another important example of a  compact closed category is
$\Rel$, the cateogry of sets and relations. Here, $\otimes$ is
cartesian product with the singleton set as its unit $I = \{\star\}$, and $^*$ is identity on objects.  Hence $\Rel$ is also self adjoint. Closure reduces to the
fact that a relation between sets $A\times B$ and $C$ is equivalently a relation between $A$ and $B \times C$.   Given a set $S$ with elements $s_i, s_j \in S$,  the epsilon and eta maps are given as follows:

\begin{eqnarray*}
&\epsilon^l  =  \epsilon^r &\colon   S \times S \relto I \quad \mbox{given by} \quad
(s_i, s_j) \epsilon \star \iff s_i = s_j \\
&\eta^l = \eta^r& \colon   I  \relto S \times S
\quad \mbox{given by} \quad 
\star \eta (s_i, s_j) \iff s_i = s_j
\end{eqnarray*}

Transposes in the category of relations are given by inverse relations.

%\begin{eqnarray*}
%&\epsilon^l  =  \epsilon^r &\colon   S \times S \to \{\star\} \quad \mbox{given by} \quad
%\{((s_i, s_j), \star) \mid s_i, s_j \in S, s_i = s_j \}\\
%&\eta^l = \eta^r& \colon   \{\star\}  \to S \times S
%\quad \mbox{given by} \quad 
%\{(\star, (s_i, s_j)) \mid s_i, s_j \in S, s_i = s_j\}
%\end{eqnarray*}

For an object in $\Rel$ of the form $S = \mathcal P (U)$, we give $S$ a bialgebra structure by taking
\begin{align*}
\delta &\colon   S \relto S \times S \quad &&\mbox{given by} \quad
A \delta (B, C) \iff A = B = C \\
\iota& \colon S \relto  I    \quad &&\mbox{given by} \quad A \iota \star \iff \text{ (always true)} \\
\mu & \colon   S \times S \relto S
\quad &&\mbox{given by} \quad 
(A, B) \mu C \iff A \cap B = C \\
\zeta& \colon  I  \relto S  \quad &&\mbox{given by} \quad \star \zeta A \iff A = U
\end{align*}
The axioms (Q1) -- (Q4) can be easily verified by the reader.

It should be noted that since both $\FdVect$ and $\Rel$ are $\dagger$-categories, these constructions dualize to give two pairs of bialgebras.
However these bialgebras are not interacting in the sense of \cite{bonchi14}, and the Frobenius axiom does not hold for either.

%Every object in $\Rel$  has a
%Frobenius algebra over it given by the diagonal and codiagonal
%relations, as described below: 
%
%
%
%\begin{eqnarray*}
%&\delta &\colon   S \to S \times S \quad \mbox{given by} \quad
%\{(s_i, (s_j, s_k)) \mid s_i, s_j, s_k \in S, s_i = s_j  = s_k\}\\
%& \mu & \colon   S \times S \to S
%\quad \mbox{given by} \quad 
%\{(s_i, s_j), s_k) \mid s_i, s_j, s_k\in S, s_i = s_j= s_k\}\\
%& \iota& \colon S \to  \{\star\}    \quad \mbox{given by} \quad \{(s_i, \star) \mid s_i \in S\}\\
%&\zeta& \colon  \{\star\}  \to S  \quad \mbox{given by} \quad \{(\star, s_i) \mid s_i \in S\}
%\end{eqnarray*}

%For the details of verifying that for each of the two examples above,  the corresponding conditions hold see \cite{McCurdy}. 

\subsection{String Diagrams} 
\label{string}

The framework of compact closed categories and bialgebras
comes with a  diagrammatic calculus that visualises
derivations, and which also simplifies the
categorical and vector space computations. Morphisms are depicted by
boxes and objects by lines, representing their identity morphisms. For
instance a morphism $f \colon A \to B$, and an object $A$ with the
identity arrow $1_A \colon A \to A$, are depicted as follows:

\begin{center}
  {%
\beginpgfgraphicnamed{compact-diag}
\begin{tikzpicture}[scale=0.75]
	\begin{pgfonlayer}{nodelayer}
		\node [style=none] (0) at (-9, -1) {};
		\node [style=none] (1) at (-7, -1) {};
		\node [style=none] (2) at (-7, 1) {};
		\node [style=none] (3) at (-9, 1) {};
		\node [style=none] (4) at (-8, 0) {$f$};
		\node [style=none] (5) at (-8, 1) {};
		\node [style=none] (6) at (-8, 2) {};
		\node [style=none] (7) at (-8, -1) {};
		\node [style=none] (8) at (-8, -2) {};
		\node [style=none] (9) at (-8, 2.5) {$A$};
		\node [style=none] (10) at (-8, -2.5) {$B$};
		\node [style=none] (11) at (-2.5, 2) {};
		\node [style=none] (12) at (-2.5, -2) {};
		\node [style=none] (13) at (-1.75, 0) {$A$};
	\end{pgfonlayer}
	\begin{pgfonlayer}{edgelayer}
		\draw [style=thick] (3.center) to (0.center);
		\draw [style=thick] (3.center) to (2.center);
		\draw [style=thick] (2.center) to (1.center);
		\draw [style=thick] (1.center) to (0.center);
		\draw [style=thick] (6.center) to (5.center);
		\draw [style=thick] (7.center) to (8.center);
		\draw [style=thick] (11.center) to (12.center);
	\end{pgfonlayer}
\end{tikzpicture}}
\endpgfgraphicnamed}
\end{center}

Morphisms from $I$ to objects are depicted by triangles with strings emanating from them. In concrete categories, these morphisms represent  elements within the  objects. For instance, an element $a$ in $A$ is represented by the morphism $a: I \to A$ and depicted by a  triangle with one string emanating from it. The number of strings of such triangles depict the tensor rank of the element; for instance, the
diagrams for ${a} \in A, {a'} \in A \otimes B$, and ${a''}
\in A \otimes B \otimes C$ are as follows:

\begin{center}
  {%
\beginpgfgraphicnamed{compact-diag-triangle}
\begin{tikzpicture}[scale=0.75]
	\begin{pgfonlayer}{nodelayer}
		\node [style=none] (0) at (-5, 0) {};
		\node [style=none] (1) at (-1, 0) {};
		\node [style=none] (2) at (-3, 1.25) {};
		\node [style=none] (3) at (-3.75, 0) {};
		\node [style=none] (4) at (-3.75, -1) {};
		\node [style=none] (5) at (-3.75, -1.5) {$A$};
		\node [style=none] (6) at (-2.25, 0) {};
		\node [style=none] (7) at (-2.25, -1.5) {$B$};
		\node [style=none] (8) at (-2.25, -1) {};
		\node [style=none] (9) at (2.75, -1) {};
		\node [style=none] (10) at (2.75, -1.5) {$B$};
		\node [style=none] (11) at (2.5, 1.5) {};
		\node [style=none] (12) at (1.5, 0) {};
		\node [style=none] (13) at (5.25, 0) {};
		\node [style=none] (14) at (1.5, -1.5) {$A$};
		\node [style=none] (15) at (1.5, -1) {};
		\node [style=none] (16) at (0.25, 0) {};
		\node [style=none] (17) at (2.75, 0) {};
		\node [style=none] (18) at (4, -1.5) {$C$};
		\node [style=none] (19) at (4, 0) {};
		\node [style=none] (20) at (4, -1) {};
		\node [style=none] (21) at (-8.5, 0) {};
		\node [style=none] (22) at (-7.5, -1.5) {$A$};
		\node [style=none] (23) at (-7.5, 0) {};
		\node [style=none] (24) at (-6.5, 0) {};
		\node [style=none] (25) at (-7.5, -1) {};
		\node [style=none] (26) at (-7.5, 1.25) {};
		\node [style=none] (29) at (2.5, 0.5) {};
	\end{pgfonlayer}
	\begin{pgfonlayer}{edgelayer}
		\draw  [style = thick](0.center) to (2.center);
		\draw  [style = thick](2.center) to (1.center);
		\draw  [style = thick](1.center) to (0.center);
		\draw  [style = thick](3.center) to (4.center);
		\draw  [style = thick](6.center) to (8.center);
		\draw  [style = thick](16.center) to (11.center);
		\draw  [style = thick](11.center) to (13.center);
		\draw  [style = thick](13.center) to (16.center);
		\draw  [style = thick](12.center) to (15.center);
		\draw  [style = thick](17.center) to (9.center);
		\draw  [style = thick](19.center) to (20.center);
		\draw  [style = thick](21.center) to (26.center);
		\draw  [style = thick](26.center) to (24.center);
		\draw  [style = thick](24.center) to (21.center);
		\draw  [style = thick](23.center) to (25.center);
	\end{pgfonlayer}
\end{tikzpicture}}
\endpgfgraphicnamed}  
\end{center}

The tensor products of the objects and morphisms are depicted by
juxtaposing their diagrams side by side, whereas compositions of
morphisms are depicted by putting one on top of the other; for instance
the object $A \otimes B$, and the morphisms $f \otimes g$ and $h \circ
f$, for $f \colon A \to B, g \colon C \to D$, and $h \colon B \to C$,
are depicted as follows:

\begin{center}
  {%
\beginpgfgraphicnamed{compact-diag-tensor}
\begin{tikzpicture}[scale=0.75]
	\begin{pgfonlayer}{nodelayer}
		\node [style=none] (0) at (-4, 0) {};
		\node [style=none] (1) at (-2, 0) {};
		\node [style=none] (2) at (-2, 2) {};
		\node [style=none] (3) at (-4, 2) {};
		\node [style=none] (4) at (-3, 1) {$f$};
		\node [style=none] (5) at (-3, 2) {};
		\node [style=none] (6) at (-3, 3) {};
		\node [style=none] (7) at (-3, 0) {};
		\node [style=none] (8) at (-3, -1) {};
		\node [style=none] (9) at (-3, 3.5) {$A$};
		\node [style=none] (10) at (-3, -1.5) {$B$};
		\node [style=none] (11) at (-0.5, -1.5) {$D$};
		\node [style=none] (12) at (-0.5, 1) {$g$};
		\node [style=none] (13) at (-0.5, 2) {};
		\node [style=none] (14) at (-1.5, 0) {};
		\node [style=none] (15) at (-0.5, -1) {};
		\node [style=none] (16) at (0.5, 0) {};
		\node [style=none] (17) at (-0.5, 3.5) {$C$};
		\node [style=none] (18) at (-0.5, 3) {};
		\node [style=none] (19) at (0.5, 2) {};
		\node [style=none] (20) at (-1.5, 2) {};
		\node [style=none] (21) at (-0.5, 0) {};
		\node [style=none] (22) at (5, 2.5) {};
		\node [style=none] (23) at (5, 1.5) {};
		\node [style=none] (24) at (4, 4.5) {};
		\node [style=none] (25) at (5, 4.5) {};
		\node [style=none] (26) at (5, 5.5) {};
		\node [style=none] (27) at (5, 3.5) {$f$};
		\node [style=none] (28) at (6, 2.5) {};
		\node [style=none] (29) at (5, 6) {$A$};
		\node [style=none] (30) at (4, 2.5) {};
		\node [style=none] (31) at (5, 1) {$B$};
		\node [style=none] (32) at (6, 4.5) {};
		\node [style=none] (33) at (5, -2.5) {};
		\node [style=none] (34) at (5, -3.5) {};
		\node [style=none] (35) at (4, -0.5) {};
		\node [style=none] (36) at (5, -0.5) {};
		\node [style=none] (37) at (5, 0.5) {};
		\node [style=none] (38) at (5, -1.5) {$h$};
		\node [style=none] (39) at (6, -2.5) {};
		\node [style=none] (40) at (4, -2.5) {};
		\node [style=none] (41) at (5, -4) {$C$};
		\node [style=none] (42) at (6, -0.5) {};
		\node [style=none] (43) at (-10, -1) {};
		\node [style=none] (44) at (-10, 3) {};
		\node [style=none] (45) at (-10.75, 1) {$A$};
		\node [style=none] (46) at (-8, -1) {};
		\node [style=none] (47) at (-8, 3) {};
		\node [style=none] (48) at (-7.25, 1) {$B$};
	\end{pgfonlayer}
	\begin{pgfonlayer}{edgelayer}
		\draw  [style=thick] (3.center) to (0.center);
		\draw  [style=thick] (3.center) to (2.center);
		\draw  [style=thick] (2.center) to (1.center);
		\draw  [style=thick] (1.center) to (0.center);
		\draw  [style=thick] (6.center) to (5.center);
		\draw  [style=thick] (7.center) to (8.center);
		\draw  [style=thick] (20.center) to (14.center);
		\draw  [style=thick] (20.center) to (19.center);
		\draw  [style=thick] (19.center) to (16.center);
		\draw  [style=thick] (16.center) to (14.center);
		\draw  [style=thick] (18.center) to (13.center);
		\draw  [style=thick] (21.center) to (15.center);
		\draw  [style=thick] (24.center) to (30.center);
		\draw  [style=thick] (24.center) to (32.center);
		\draw  [style=thick] (32.center) to (28.center);
		\draw  [style=thick] (28.center) to (30.center);
		\draw  [style=thick] (26.center) to (25.center);
		\draw  [style=thick] (22.center) to (23.center);
		\draw  [style=thick] (35.center) to (40.center);
		\draw  [style=thick] (35.center) to (42.center);
		\draw  [style=thick] (42.center) to (39.center);
		\draw  [style=thick] (39.center) to (40.center);
		\draw  [style=thick] (37.center) to (36.center);
		\draw  [style=thick] (33.center) to (34.center);
		\draw [style=thick] (44.center) to (43.center);
		\draw [style=thick] (47.center) to (46.center);
	\end{pgfonlayer}
\end{tikzpicture}}
\endpgfgraphicnamed}
\end{center}

The $\epsilon$ maps are depicted by cups, $\eta$ maps by caps, and
yanking by their composition and straightening of the strings.  For
instance, the diagrams for $\epsilon^l \colon A^l \otimes A \to I$,
$\eta \colon I \to A\otimes A^l$ and $(\epsilon^l \otimes 1_A) \circ
(1_A \otimes \eta^l) = 1_A$ are as follows:

\begin{center}
  {%
\beginpgfgraphicnamed{compact-cap-cup}
\begin{tikzpicture}[scale=0.75]
	\begin{pgfonlayer}{nodelayer}
		\node [style=none] (0) at (-5, 0) {};
		\node [style=none] (1) at (-2, 0) {};
		\node [style=none] (2) at (-5, 0.75) {$A^l$};
		\node [style=none] (3) at (2, -0.75) {$A$};
		\node [style=none] (4) at (5, -0.75) {$A^l$};
		\node [style=none] (5) at (2, 0) {};
		\node [style=none] (6) at (5, 0) {};
		\node [style=none] (7) at (-2, 0.75) {$A$};
	\end{pgfonlayer}
	\begin{pgfonlayer}{edgelayer}
		\draw [thick, bend right=90, looseness=1.50] (0.center) to (1.center);
		\draw [thick, bend left=90, looseness=1.75] (5.center) to (6.center);
	\end{pgfonlayer}
\end{tikzpicture}}
\endpgfgraphicnamed}
  \qquad
    {%
\beginpgfgraphicnamed{compact-yank}
\begin{tikzpicture}[scale=0.75]
	\begin{pgfonlayer}{nodelayer}
		\node [style=none] (0) at (-5, 0) {};
		\node [style=none] (1) at (-2, 0) {};
		\node [style=none] (2) at (-5, 0.5) {$A^l$};
		\node [style=none] (3) at (-2, 0.5) {$A$};
		\node [style=none] (4) at (1, 0.5) {$A^l$};
		\node [style=none] (5) at (-2, 1) {};
		\node [style=none] (6) at (1, 1) {};
		\node [style=none] (7) at (3, 0) {$=$};
		\node [style=none] (8) at (5, 2.5) {};
		\node [style=none] (9) at (5, -1.5) {};
		\node [style=none] (10) at (5.5, 0.5) {$A$};
		\node [style=none] (11) at (-5, 1) {};
		\node [style=none] (12) at (-5, 2.5) {};
		\node [style=none] (13) at (1, 0) {};
		\node [style=none] (14) at (1, -1.5) {};
	\end{pgfonlayer}
	\begin{pgfonlayer}{edgelayer}
		\draw [thick, bend right=90, looseness=1.50] (0.center) to (1.center);
		\draw [thick, bend left=90, looseness=1.75] (5.center) to (6.center);
		\draw [style=thick] (8.center) to (9.center);
		\draw [style=thick] (12.center) to (11.center);
		\draw [style=thick] (13.center) to (14.center);
	\end{pgfonlayer}
\end{tikzpicture}}
\endpgfgraphicnamed}
\end{center}

As for  the bialgebra, the diagrams for the  monoid and  comonoid 
morphisms and their interaction (the bialgebra law Q3) are as follows:

\begin{center}
{%
\beginpgfgraphicnamed{comp-alg-coalg}
\begin{tikzpicture}[scale=0.75]
	\begin{pgfonlayer}{nodelayer}
		\node [style=none] (0) at (-3.5, 2.25) {};
		\node [draw, thick, style=none, minimum size=0.2 cm, circle, fill=white] (1) at (-4.5, 1.25) {};
		\node [style=none] (2) at (-5.5, 2.25) {};
		\node [style=none] (3) at (-4.5, 0.5) {};
		\node [style=none] (4) at (4.25, 1.5) {};
		\node [style=none] (5) at (6.25, 1.5) {};
		\node [style=none] (6) at (4.25, 1.5) {};
		\node [draw, thick, style=none, minimum size=0.2 cm, circle, fill=black] (7) at (5.25, 2.5) {};
		\node [style=none] (8) at (5.25, 3.25) {};
		\node [style=none] (9) at (6.25, 1.5) {};
		\node [style=none] (10) at (-7.5, 2) {$(\mu,\zeta)$};
		\node [draw, thick, style=none, minimum size=0.2 cm, circle, fill=white] (11) at (-2.25, 2.25) {};
		\node [style=none] (12) at (-2.25, 0.5) {};
		\node [style=none] (13) at (2.5, 2) {$(\delta,\iota)$};
		\node [draw, thick, style=none, minimum size=0.2 cm, circle, fill=black] (14) at (7.5, 1.5) {};
		\node [style=none] (15) at (7.5, 3.25) {};
	\end{pgfonlayer}
	\begin{pgfonlayer}{edgelayer}
		\draw [style=thick] (1.center) to (3.center);
		\draw [style=thick, bend left=90, looseness=1.75] (6.center) to (5.center);
		\draw [style=thick] (8.center) to (7.center);
		\draw [thick, bend right=90, looseness=1.75] (2.center) to (0.center);
		\draw [style=thick] (11.center) to (12.center);
		\draw [style=thick] (15.center) to (14.center);
	\end{pgfonlayer}
\end{tikzpicture}}
\endpgfgraphicnamed} \qquad {%
\beginpgfgraphicnamed{bialg-equation}
\begin{tikzpicture}[scale=0.75]
	\begin{pgfonlayer}{nodelayer}
		\node [style=none] (0) at (-3, 0.5) {$=$};
		\node [style=none] (1) at (-5, 3.5) {};
		\node [style=none] (2) at (-7, 3.5) {};
		\node [draw, thick, style=none, minimum size=0.2 cm, circle, fill=white] (3) at (-6, 2.5) {};
		\node [style=none] (4) at (-7, -2.25) {};
		\node [style=none] (5) at (-5, -2.25) {};
		\node [draw, thick, style=none, minimum size=0.2 cm, circle, fill=black] (6) at (-6, -1) {};
		\node [style=none] (7) at (-0.25, -0.5) {};
		\node [style=none] (8) at (-0.25, 1.75) {};
		\node [style=none] (9) at (1.75, 1.75) {};
		\node [style=none] (10) at (-0.25, 1.75) {};
		\node [draw, thick, style=none, minimum size=0.2 cm, circle, fill=black] (11) at (0.75, 2.75) {};
		\node [style=none] (12) at (0.75, 4) {};
		\node [style=none] (13) at (1.75, 1.75) {};
		\node [style=none] (14) at (5, 1.75) {};
		\node [draw, thick, style=none, minimum size=0.2 cm, circle, fill=black] (15) at (4, 2.75) {};
		\node [style=none] (16) at (3, 1.75) {};
		\node [style=none] (17) at (4, 4) {};
		\node [style=none] (18) at (5, 1.75) {};
		\node [style=none] (19) at (3, 1.75) {};
		\node [draw, thick, style=none, minimum size=0.2 cm, circle, fill=white] (20) at (0.75, -1.5) {};
		\node [style=none] (21) at (1.75, -0.5) {};
		\node [style=none] (22) at (-0.25, -0.5) {};
		\node [draw, thick, style=none, minimum size=0.2 cm, circle, fill=white] (23) at (4, -1.5) {};
		\node [style=none] (24) at (5, -0.5) {};
		\node [style=none] (25) at (3, -0.5) {};
		\node [style=none] (26) at (0.75, -1.5) {};
		\node [style=none] (27) at (4, -1.5) {};
		\node [style=none] (28) at (5, -0.5) {};
		\node [style=none] (29) at (5, 1.75) {};
		\node [style=none] (30) at (1.75, 1.75) {};
		\node [style=none] (31) at (3, 1.75) {};
		\node [style=none] (32) at (2.5, 0.75) {};
		\node [style=none] (33) at (0.75, -3) {};
		\node [style=none] (34) at (4, -3) {};
		\node [style=none] (35) at (2.25, 0.5) {};
	\end{pgfonlayer}
	\begin{pgfonlayer}{edgelayer}
		\draw [thick, bend left=90, looseness=1.75] (1.center) to (2.center);
		\draw [style=thick, bend left=90, looseness=2.00] (4.center) to (5.center);
		\draw [style=thick, in=90, out=270] (3.center) to (6.center);
		\draw [style=thick] (7.center) to (8.center);
		\draw [style=thick, in=90, out=90, looseness=1.75] (10.center) to (9.center);
		\draw [style=thick] (12.center) to (11.center);
		\draw [style=thick, in=90, out=90, looseness=1.75] (16.center) to (14.center);
		\draw [style=thick] (17.center) to (15.center);
		\draw [thick, bend left=90, looseness=1.75] (21.center) to (22.center);
		\draw [thick, bend left=90, looseness=1.75] (24.center) to (25.center);
		\draw [style=thick] (29.center) to (28.center);
		\draw [style=thick] (13.center) to (25.center);
		\draw [style=thick] (31.center) to (32.center);
		\draw [style=thick] (26.center) to (33.center);
		\draw [style=thick] (27.center) to (34.center);
		\draw [style=thick] (35.center) to (21.center);
	\end{pgfonlayer}
\end{tikzpicture}}
\endpgfgraphicnamed}
\end{center} 
 
%\noindent The commutativity conditions are depicted as follows:
%
%\begin{center}
%\tikzfig{comm}
%\end{center} 
%
%\noindent The isometry condition is depicted as follows:
%
%\begin{center}
%\tikzfig{isom}
%\end{center} 
%

%%% Local Variables: 
%%% mode: latex
%%% TeX-master: "Quant"
%%% End: 

\section{Abstract Compact Closed Semantics}

%%% Local Variables: 
%%% mode: latex
%%% TeX-master: "Quant"
%%% End: 

\begin{definition}\label{ccc-model}
An abstract compact closed categorical model for the  language  generated by the grammar  $G = (T, N, S, {\cal R})$    is  a tuple $({\cal C}, {W,S}, \overline{\semantics{\ }})_{ {\cal B}}$ where:
\begin{itemize}
\item ${\cal C}$  is a self adjoint compact closed category  with two distinguished objects $W$ and $S$, where  $W$ has a bialgebra on it,  
\item ${\cal B}$ is  the compact closed category freely generated over the generators of the pregroup grammar $P_{G}$ as in Definition \ref{def:cfg-prg}  and the  atomic morphisms  $I \to \sigma(A)$ introduced by the atomic rules  $A \to x$ of $G$.
%via the mapping $\sigma \colon T \cup N \to P$. 
\item  $\overline{\semantics{\ }} \colon {\cal B} \to {\cal C}$ is a strongly monoidal functor defined as follows:
\[
 \overline{\semantics{x}}  := 
\begin{cases}
S &x = \sigma(S)  \ \text{for} \ S \ \mbox{the designated starting symbol of} \ G\\
 W &  x \in {\cal A} \setminus \{\sigma(S)\}\\
I \to \overline{\semantics{\sigma(x)}} & x \in \{\text{NP}, \text{N}, \text{VP}, \text{V}\}\\
\overline{\semantics{\sigma(x)}} \to \overline{\semantics{\sigma(x)}} & x \in \{ \text{Det}\}\\
 I \to \overline{\semantics{\sigma(x)}} & x \in T, (x, \sigma(A)) \in \beta
\end{cases}
\]
\end{itemize}
\end{definition}

Using a free compact closed category as opposed to a free pregroup is a solution suggested in \cite{PrellerFunct} to the fact that there is no corresponding strongly monoidal functor on a free pregroup that assigns to atomic types,  spaces of more than one dimension in the relational and distributional instantiations. In what follows, we drop the  source category ${\cal B}$ and denote the semantics by $({\cal C}, {W,S}, \overline{\semantics{\ }})$ in cases where the source category is fixed.  In particular, the abstract compact closed categorical model for the language generated by   $G_Q$  is the one used in the rest of this paper and thus  we will drop the ${\cal B}$ from the tuple notation  in the relational and vector space instantiations that follow.  

As an example of the application of the above definition, consider  the language of  $G_Q$, wherein the map $\overline{\semantics{\ }}$ on the terminals is defined  as follows:

\begin{eqnarray*}
 \text{NP} \to np &\quad \implies \quad& \overline{\semantics{np}} :=  I \to \overline{\semantics{\sigma(np)}} \colon I \to W \\
\text{N} \to n  &\quad \implies \quad& \overline{\semantics{n}} :=  I \to  \overline{\semantics{\sigma(n)}} \colon I \to W \\
\text{VP} \to vp&\quad \implies \quad& \overline{\semantics{vp}}   :=  I \to \overline{\semantics{\sigma(vp)}} \colon  I \to W^r \otimes S \\
 \text{V} \to v&\quad \implies \quad& \overline{\semantics{v}}  :=   I \to  \overline{\semantics{\sigma(v)}} \colon I \to W^r \otimes S \otimes W^l  \\
 \text{Det} \to d& \quad \implies \quad & \overline{\semantics{d}}  :=    \overline{\semantics{\sigma(d)}}  \to \overline{\semantics{\sigma(d)}} \colon W \to W
\end{eqnarray*}

\noindent
In diagrammatic form we have :

\begin{center}
{%
\beginpgfgraphicnamed{N-NP}
\begin{tikzpicture}
	\begin{pgfonlayer}{nodelayer}
		\node [style=none] (0) at (-0.25, 1) {};
		\node [style=none] (1) at (-1.25, 1) {};
		\node [style=none] (2) at (-2.25, 1) {};
		\node [style=none] (3) at (-1.25, 2.25) {};
		\node [style=none] (4) at (-1.25, -0.5) {$W$};
		\node [style=none] (5) at (-1.25, 0) {};
		\node [style=none] (6) at (3.5, -0.5) {$W$};
		\node [style=none] (7) at (2.5, 1) {};
		\node [style=none] (8) at (4.5, 1) {};
		\node [style=none] (9) at (3.5, 0) {};
		\node [style=none] (10) at (3.5, 1) {};
		\node [style=none] (11) at (3.5, 2.25) {};
		\node [style=none] (12) at (-1.25, 3) {$\overline{\semantics{np}}$};
		\node [style=none] (13) at (3.5, 3) {$\overline{\semantics{n}}$};
	\end{pgfonlayer}
	\begin{pgfonlayer}{edgelayer}
		\draw [style=thick] (2.center) to (3.center);
		\draw [style=thick] (3.center) to (0.center);
		\draw [style=thick] (0.center) to (2.center);
		\draw [style=thick] (1.center) to (5.center);
		\draw [style=thick] (7.center) to (11.center);
		\draw [style=thick] (11.center) to (8.center);
		\draw [style=thick] (8.center) to (7.center);
		\draw [style=thick] (10.center) to (9.center);
	\end{pgfonlayer}
\end{tikzpicture}}
\endpgfgraphicnamed} \qquad {%
\beginpgfgraphicnamed{V-VP}
\begin{tikzpicture}
	\begin{pgfonlayer}{nodelayer}
		\node [style=none] (0) at (4.5, 0) {};
		\node [style=none] (1) at (1.5, 0) {};
		\node [style=none] (2) at (0.5, 0) {};
		\node [style=none] (3) at (2.5, 1.5) {};
		\node [style=none] (4) at (1.5, -1.5) {$W^r$};
		\node [style=none] (5) at (1.5, -1) {};
		\node [style=none] (6) at (-4.75, 0) {};
		\node [style=none] (7) at (-3.25, 1.5) {};
		\node [style=none] (8) at (-1.75, 0) {};
		\node [style=none] (9) at (-3.75, -1.5) {$W^r$};
		\node [style=none] (10) at (-3.75, 0) {};
		\node [style=none] (11) at (-3.75, -1) {};
		\node [style=none] (12) at (-2.75, -1.5) {$S$};
		\node [style=none] (13) at (-2.75, -1) {};
		\node [style=none] (14) at (-2.75, 0) {};
		\node [style=none] (15) at (2.5, -1) {};
		\node [style=none] (16) at (2.5, 0) {};
		\node [style=none] (17) at (2.5, -1.5) {$S$};
		\node [style=none] (18) at (3.5, -1) {};
		\node [style=none] (19) at (3.5, 0) {};
		\node [style=none] (20) at (3.5, -1.5) {$W^l$};
		\node [style=none] (21) at (-3.25, 2.25) {$\overline{\semantics{vp}}$};
		\node [style=none] (22) at (2.5, 2.25) {$\overline{\semantics{v}}$};
	\end{pgfonlayer}
	\begin{pgfonlayer}{edgelayer}
		\draw [style=thick] (2.center) to (3.center);
		\draw [style=thick] (3.center) to (0.center);
		\draw [style=thick] (0.center) to (2.center);
		\draw [style=thick] (1.center) to (5.center);
		\draw [style=thick] (6.center) to (7.center);
		\draw [style=thick] (7.center) to (8.center);
		\draw [style=thick] (8.center) to (6.center);
		\draw [style=thick] (10.center) to (11.center);
		\draw [style=thick] (14.center) to (13.center);
		\draw [style=thick] (16.center) to (15.center);
		\draw [style=thick] (19.center) to (18.center);
	\end{pgfonlayer}
\end{tikzpicture}}
\endpgfgraphicnamed} \qquad {%
\beginpgfgraphicnamed{Det}
\begin{tikzpicture}
	\begin{pgfonlayer}{nodelayer}
		\node [style=none] (0) at (-2.5, -0.25) {$\overline{\semantics{d}}$};
		\node [style=none] (1) at (-3.5, 0.5) {};
		\node [style=none] (2) at (-2.5, -2.5) {$W$};
		\node [style=none] (3) at (-1.5, 0.5) {};
		\node [style=none] (4) at (-2.5, -1) {};
		\node [style=none] (5) at (-3.5, -1) {};
		\node [style=none] (6) at (-2.5, 0.5) {};
		\node [style=none] (7) at (-1.5, -1) {};
		\node [style=none] (8) at (-2.5, -2) {};
		\node [style=none] (9) at (-2.5, 2) {$W$};
		\node [style=none] (10) at (-2.5, 1.5) {};
	\end{pgfonlayer}
	\begin{pgfonlayer}{edgelayer}
		\draw [style=thick] (1.center) to (3.center);
		\draw [style=thick] (3.center) to (7.center);
		\draw [style=thick] (7.center) to (5.center);
		\draw [style=thick] (5.center) to (1.center);
		\draw [style=thick] (4.center) to (8.center);
		\draw [style=thick] (10.center) to (6.center);
	\end{pgfonlayer}
\end{tikzpicture}}
\endpgfgraphicnamed}
\end{center}

\noindent
Intuitively, noun phrases and nouns are elements  within the  object $W$. Verb phrases are elements within the object $W^r \otimes S$; the intuition behind this representation is that in a compact closed category we have that $W^r \otimes S \cong W \to S$, where $W^r \to S = \hom(W,S)$ is an internal hom object of the category, coming from its monoidal closedness.  Hence,  we are modelling verb phrases as morphisms  with  input $W$ and  output $S$. Similarly, verbs are elements within the object $W^r \otimes S \otimes W^r$, equivalent to morphisms $W \otimes W \to S$ with pairs of  input  from $W$ and output  $S$. Determiners  are morphisms $W \to W$ that further satisfy the categorical version of the \emph{living on} property, defined below.

\begin{definition}  
A determiner $d$ satisfies the \emph{categorical living-on} property  in the abstract compact closed categorical model $({\cal C}, W, S, \overline{\semantics{\ }})$ generated by $G_Q$, if
{\small
\begin{align*}
\overline{\semantics{d}} = &(1_W \otimes \epsilon_W)
\circ (1_W \otimes \mu_W \otimes \epsilon_W \otimes 1_W)
\circ (1_W \otimes \overline{\semantics{d}}^\top \otimes \delta_W \otimes 1_{W\otimes W})
 \circ (1_W \otimes \eta_W \otimes 1_{W \otimes W})
 \circ (\eta_W \otimes 1_W)
 \end{align*}}
 where $-^\top$ denotes transposition in ${\cal C}$.
 \end{definition}
 
 In the concrete relational and boolean vector interpretations that we will define, this condition will be equivalent to Barwise and Cooper's living on property.
 Diagrammatically,  this stipulation means that we have the following equality of diagrams:

\begin{center}
\begin{minipage}{1cm}{%
\beginpgfgraphicnamed{Det-N-simple}
\begin{tikzpicture}
	\begin{pgfonlayer}{nodelayer}
		\node [style=none] (0) at (-1, -1) {};
		\node [style=none] (1) at (0, -1) {};
		\node [style=none] (2) at (0, 0.5) {};
		\node [style=none] (3) at (1, 0.5) {};
		\node [style=none] (4) at (1, -1) {};
		\node [style=none] (5) at (0, -0.25) {$\overline{\semantics{d}}$};
		\node [style=none] (6) at (0, 3.5) {};
		\node [style=none] (7) at (0, -4.25) {};
		\node [style=none] (8) at (0, -4.75) {$W$};
		\node [style=none] (9) at (0, 4) {$W$};
		\node [style=none] (10) at (-1, 0.5) {};
	\end{pgfonlayer}
	\begin{pgfonlayer}{edgelayer}
		\draw [style=thick] (10.center) to (3.center);
		\draw [style=thick] (3.center) to (4.center);
		\draw [style=thick] (4.center) to (0.center);
		\draw [style=thick] (0.center) to (10.center);
		\draw [style=thick] (1.center) to (7.center);
		\draw [style=thick] (6.center) to (2.center);
	\end{pgfonlayer}
\end{tikzpicture}}
\endpgfgraphicnamed} \end{minipage}
\quad $=$ \quad 
\begin{minipage}{7.5cm}{%
\beginpgfgraphicnamed{Det-N}
\begin{tikzpicture}[scale=0.8]
	\begin{pgfonlayer}{nodelayer}
		\node [draw, thick, style=none, minimum size=0.2 cm, circle, fill=black] (0) at (-4.75, -1.25) {};
		\node [style=none] (1) at (-6, 3.5) {};
		\node [style=none] (2) at (-2.25, 3.5) {};
		\node [style=none] (3) at (-7, 3) {};
		\node [style=none] (4) at (-5, 3) {};
		\node [style=none] (5) at (-5, 1.5) {};
		\node [style=none] (6) at (-7, 1.5) {};
		\node [style=none] (7) at (-6, 1.5) {};
		\node [style=none] (8) at (-4.75, -2.5) {};
		\node [style=none] (9) at (-6, 2.25) {$\overline{\semantics{d}}$};
		\node [style=none] (10) at (-4.75, -3) {$W$};
		\node [style=none] (11) at (-4.75, -1.25) {};
		\node [style=none] (12) at (-6, 0.75) {};
		\node [style=none] (13) at (-6, 3) {};
		\node [style=none] (14) at (-6, 0.25) {$W$};
		\node [style=none] (15) at (-6, -0.25) {};
		\node [style=none] (16) at (-3.5, -0.25) {};
		\node [style=none] (17) at (-3.5, 0.25) {$W$};
		\node [style=none] (18) at (-9, 4) {};
		\node [style=none] (19) at (-9, -4.75) {};
		\node [style=none] (20) at (-9, -5.5) {$W$};
		\node [style=none] (21) at (3.25, -3) {$W$};
		\node [style=none] (22) at (3.25, 6.75) {};
		\node [style=none] (23) at (3.25, -2.5) {};
		\node [style=none] (24) at (-4.75, -3.5) {};
		\node [style=none] (25) at (3.25, -3.5) {};
		\node [style=none] (27) at (-3.5, 2) {};
		\node [style=none] (28) at (-0.75, -0.5) {};
		\node [draw, thick, style=none, minimum size=0.2 cm, circle, fill=white] (29) at (-2.25, 3.5) {};
		\node [style=none] (30) at (-3.5, 2) {};
		\node [style=none] (31) at (-3.5, 0.75) {};
		\node [style=none] (32) at (1.5, -0.5) {};
		\node [style=none] (33) at (1.5, 3.75) {};
		\node [style=none] (34) at (-0.75, -0.5) {};
		\node [style=none] (35) at (-0.75, 2) {};
		\node [style=none] (36) at (3.25, 4) {};
	\end{pgfonlayer}
	\begin{pgfonlayer}{edgelayer}
		\draw [style=thick, in=90, out=90, looseness=1.50] (1.center) to (2.center);
		\draw [style=thick] (3.center) to (4.center);
		\draw [style=thick] (4.center) to (5.center);
		\draw [style=thick] (5.center) to (6.center);
		\draw [style=thick] (6.center) to (3.center);
		\draw [style=thick] (11.center) to (8.center);
		\draw [style=thick] (7.center) to (12.center);
		\draw [style=thick, bend right=90, looseness=1.25] (15.center) to (16.center);
		\draw [style=thick, bend right=75] (24.center) to (25.center);
		\draw [style=thick] (30.center) to (31.center);
		\draw [style=thick] (18.center) to (19.center);
		\draw [style=thick] (1.center) to (13.center);
		\draw [style=thick, bend right=90, looseness=1.50] (28.center) to (32.center);
		\draw [style=thick] (33.center) to (32.center);
		\draw [style=thick, bend left=75] (18.center) to (33.center);
		\draw [style=thick] (35.center) to (34.center);
		\draw [style=thick, bend left=90, looseness=1.75] (30.center) to (35.center);
		\draw [style=thick] (36.center) to (23.center);
	\end{pgfonlayer}
\end{tikzpicture}}
\endpgfgraphicnamed} \end{minipage} 
%\hspace{-2cm} $=$\quad  \hspace{2cm}
%\begin{minipage}{3cm}\tikzfig{Det-N-norm2} \end{minipage}
\end{center}

Note that we also have
\begin{center}
\begin{minipage}{7.5cm}{%
\beginpgfgraphicnamed{Det-N}
\begin{tikzpicture}[scale=0.8]
	\begin{pgfonlayer}{nodelayer}
		\node [draw, thick, style=none, minimum size=0.2 cm, circle, fill=black] (0) at (-4.75, -1.25) {};
		\node [style=none] (1) at (-6, 3.5) {};
		\node [style=none] (2) at (-2.25, 3.5) {};
		\node [style=none] (3) at (-7, 3) {};
		\node [style=none] (4) at (-5, 3) {};
		\node [style=none] (5) at (-5, 1.5) {};
		\node [style=none] (6) at (-7, 1.5) {};
		\node [style=none] (7) at (-6, 1.5) {};
		\node [style=none] (8) at (-4.75, -2.5) {};
		\node [style=none] (9) at (-6, 2.25) {$\overline{\semantics{d}}$};
		\node [style=none] (10) at (-4.75, -3) {$W$};
		\node [style=none] (11) at (-4.75, -1.25) {};
		\node [style=none] (12) at (-6, 0.75) {};
		\node [style=none] (13) at (-6, 3) {};
		\node [style=none] (14) at (-6, 0.25) {$W$};
		\node [style=none] (15) at (-6, -0.25) {};
		\node [style=none] (16) at (-3.5, -0.25) {};
		\node [style=none] (17) at (-3.5, 0.25) {$W$};
		\node [style=none] (18) at (-9, 4) {};
		\node [style=none] (19) at (-9, -4.75) {};
		\node [style=none] (20) at (-9, -5.5) {$W$};
		\node [style=none] (21) at (3.25, -3) {$W$};
		\node [style=none] (22) at (3.25, 6.75) {};
		\node [style=none] (23) at (3.25, -2.5) {};
		\node [style=none] (24) at (-4.75, -3.5) {};
		\node [style=none] (25) at (3.25, -3.5) {};
		\node [style=none] (27) at (-3.5, 2) {};
		\node [style=none] (28) at (-0.75, -0.5) {};
		\node [draw, thick, style=none, minimum size=0.2 cm, circle, fill=white] (29) at (-2.25, 3.5) {};
		\node [style=none] (30) at (-3.5, 2) {};
		\node [style=none] (31) at (-3.5, 0.75) {};
		\node [style=none] (32) at (1.5, -0.5) {};
		\node [style=none] (33) at (1.5, 3.75) {};
		\node [style=none] (34) at (-0.75, -0.5) {};
		\node [style=none] (35) at (-0.75, 2) {};
		\node [style=none] (36) at (3.25, 4) {};
	\end{pgfonlayer}
	\begin{pgfonlayer}{edgelayer}
		\draw [style=thick, in=90, out=90, looseness=1.50] (1.center) to (2.center);
		\draw [style=thick] (3.center) to (4.center);
		\draw [style=thick] (4.center) to (5.center);
		\draw [style=thick] (5.center) to (6.center);
		\draw [style=thick] (6.center) to (3.center);
		\draw [style=thick] (11.center) to (8.center);
		\draw [style=thick] (7.center) to (12.center);
		\draw [style=thick, bend right=90, looseness=1.25] (15.center) to (16.center);
		\draw [style=thick, bend right=75] (24.center) to (25.center);
		\draw [style=thick] (30.center) to (31.center);
		\draw [style=thick] (18.center) to (19.center);
		\draw [style=thick] (1.center) to (13.center);
		\draw [style=thick, bend right=90, looseness=1.50] (28.center) to (32.center);
		\draw [style=thick] (33.center) to (32.center);
		\draw [style=thick, bend left=75] (18.center) to (33.center);
		\draw [style=thick] (35.center) to (34.center);
		\draw [style=thick, bend left=90, looseness=1.75] (30.center) to (35.center);
		\draw [style=thick] (36.center) to (23.center);
	\end{pgfonlayer}
\end{tikzpicture}}
\endpgfgraphicnamed} \end{minipage} 
\hspace{-2cm} $=$\quad  \hspace{2cm}
\begin{minipage}{3cm}{%
\beginpgfgraphicnamed{Det-N-norm2}
\begin{tikzpicture}[scale=0.8]
	\path [use as bounding box] (2.5,0.75) rectangle (2.5,3.75);
	\begin{pgfonlayer}{nodelayer}
		\node [style=none] (0) at (-2, 6.75) {};
		\node [style=none] (1) at (-2, 0) {};
		\node [style=none] (2) at (-1, 1.5) {};
		\node [style=none] (3) at (0.5, 6.5) {$W$};
		\node [style=none] (4) at (2, 2.25) {};
		\node [style=none] (5) at (-1, -2.75) {};
		\node [style=none] (6) at (-1, 2.25) {};
		\node [style=none] (7) at (-1, 0.75) {$\overline{\semantics{d}}$};
		\node [draw, thick, style=none, minimum size=0.2 cm, circle, fill=black] (8) at (0.5, 3.5) {};
		\node [style=none] (9) at (3.5, -3.25) {$W$};
		\node [style=none] (10) at (0, 0) {};
		\node [style=none] (11) at (-1, 0) {};
		\node [style=none] (12) at (3.5, 0.25) {};
		\node [style=none] (13) at (-2, 1.5) {};
		\node [style=none] (14) at (3.5, 0.25) {};
		\node [style=none] (15) at (0.5, 3.5) {};
		\node [style=none] (16) at (0, 1.5) {};
		\node [style=none] (17) at (0.5, 6) {};
		\node [style=none] (18) at (-1, -3.25) {$W$};
		\node [style=none] (19) at (-1, -3.75) {};
		\node [style=none] (20) at (3.5, -3.75) {};
		\node [style=none] (21) at (5, 1.5) {};
		\node [style=none] (22) at (5, 6.5) {$W$};
		\node [draw, thick, style=none, minimum size=0.2 cm, circle, fill=white] (23) at (3.5, 0.25) {};
		\node [style=none] (24) at (2, 1.5) {};
		\node [style=none] (25) at (3.5, -2.75) {};
		\node [style=none] (26) at (5, 6) {};
		\node [style=none] (27) at (5, 1.5) {};
	\end{pgfonlayer}
	\begin{pgfonlayer}{edgelayer}
		\draw [style=thick] (13.center) to (16.center);
		\draw [style=thick] (16.center) to (10.center);
		\draw [style=thick] (10.center) to (1.center);
		\draw [style=thick] (1.center) to (13.center);
		\draw [style=thick] (6.center) to (2.center);
		\draw [style=thick] (11.center) to (5.center);
		\draw [style=thick, in=90, out=270] (17.center) to (15.center);
		\draw [style=thick, bend left=90, looseness=1.25] (6.center) to (4.center);
		\draw [style=thick, bend right=75] (19.center) to (20.center);
		\draw [style=thick] (24.center) to (4.center);
		\draw [style=thick, bend right=90, looseness=1.25] (24.center) to (21.center);
		\draw [style=thick] (26.center) to (27.center);
		\draw [style=thick] (23.center) to (25.center);
	\end{pgfonlayer}
\end{tikzpicture}}
\endpgfgraphicnamed} \end{minipage}
\end{center}

Intuitively, semantics of  $\overline{\semantics{d}}$ ends up being in $W \otimes W$,  obtained by making a copy (via the bialgebra map $\delta$) of one of the  inputs  in $W$,  applying the determiner  to one copy and taking the intersection of the other copy (via the bialgebra map $\mu$) with the other input in $W$.  

Meanings of expressions of language are obtained according to the following definition:

\begin{definition}\label{composition}
The interpretation of a string $w_1 \cdots w_n$,  for $w_i \in T$ with a grammatical reduction $\alpha$ is
\[
\overline{\semantics{w_1 \cdots w_n}} := \overline{\semantics{\alpha}} \circ (\overline{\semantics{w_1}} \otimes \cdots \otimes \overline{\semantics{w_n}})
\] 
\end{definition}

For example, the interpretation of an intransitive  sentence with a  quantified phrase in  subject position  and its simplified forms are as follows:

\begin{minipage}{20cm}
\begin{minipage}{7cm}
{%
\beginpgfgraphicnamed{Q-Sbj-Frob-Sent}
\begin{tikzpicture}[scale=0.7]
	\begin{pgfonlayer}{nodelayer}
		\node [style=none] (0) at (6.25, 2.5) {};
		\node [style=none] (1) at (8.25, 2.5) {};
		\node [style=none] (2) at (7.25, 4) {};
		\node [style=none] (3) at (7.25, 4.75) {$\overline{\semantics{n}}$};
		\node [style=none] (4) at (8.75, 2.5) {};
		\node [style=none] (5) at (12.25, 2.5) {};
		\node [style=none] (6) at (10.5, 4) {};
		\node [style=none] (7) at (10.5, 4.75) {$\overline{\semantics{vp}}$};
		\node [style=none] (8) at (7.25, 2.5) {};
		\node [style=none] (9) at (9.75, 2.5) {};
		\node [style=none] (10) at (11.25, 2.5) {};
		\node [style=none] (11) at (7.25, -0.75) {};
		\node [style=none] (12) at (7.25, -1.5) {$W$};
		\node [style=none] (13) at (9.75, -0.75) {};
		\node [style=none] (14) at (9.75, -1.5) {$W$};
		\node [style=none] (15) at (11.25, -0.75) {};
		\node [style=none] (16) at (11.25, -1.5) {$S$};
		\node [draw, thick, style=none, minimum size=0.2 cm, circle, fill=black] (17) at (-0.75, -0.25) {};
		\node [style=none] (18) at (-2, 1.25) {$W$};
		\node [style=none] (19) at (-0.75, -2.5) {};
		\node [style=none] (20) at (1.75, 4.5) {};
		\node [style=none] (21) at (0.5, 3) {};
		\node [style=none] (22) at (-3, 2.5) {};
		\node [style=none] (23) at (-0.75, -0.25) {};
		\node [style=none] (24) at (-3, 4) {};
		\node [style=none] (25) at (-2, 4.5) {};
		\node [style=none] (26) at (7.25, -2.5) {};
		\node [style=none] (27) at (-2, 3.25) {$\overline{\semantics{d}}$};
		\node [style=none] (28) at (-5, 5) {};
		\node [style=none] (29) at (-2, 4) {};
		\node [style=none] (30) at (-0.75, -2) {$W$};
		\node [style=none] (31) at (0.5, 1.75) {};
		\node [style=none] (32) at (-1, 2.5) {};
		\node [style=none] (33) at (0.5, 1.25) {$W$};
		\node [style=none] (34) at (-2, 2.5) {};
		\node [style=none] (35) at (5.5, 4.75) {};
		\node [style=none] (36) at (-2, 0.75) {};
		\node [style=none] (37) at (0.5, 3) {};
		\node [draw, thick, style=none, minimum size=0.2 cm, circle, fill=white] (38) at (1.75, 4.5) {};
		\node [style=none] (39) at (-0.75, -1.5) {};
		\node [style=none] (40) at (-1, 4) {};
		\node [style=none] (41) at (3.25, 0.5) {};
		\node [style=none] (42) at (5.5, 0.5) {};
		\node [style=none] (43) at (-5, -1.5) {};
		\node [style=none] (44) at (-2, 1.75) {};
		\node [style=none] (45) at (3.25, 3) {};
		\node [style=none] (46) at (0.5, 0.75) {};
		\node [style=none] (47) at (-5, -2) {$W$};
		\node [style=none] (48) at (3.25, 0.5) {};
		\node [style=none] (49) at (-5, -2.5) {};
		\node [style=none] (50) at (9.75, -2.5) {};
	\end{pgfonlayer}
	\begin{pgfonlayer}{edgelayer}
		\draw  [style=thick](0.center) to (1.center);
		\draw [style=thick] (2.center) to (0.center);
		\draw  [style=thick](2.center) to (1.center);
		\draw  [style=thick](4.center) to (5.center);
		\draw  [style=thick](6.center) to (4.center);
		\draw  [style=thick](6.center) to (5.center);
		\draw  [style=thick](8.center) to (11.center);
		\draw  [style=thick](9.center) to (13.center);
		\draw  [style=thick](10.center) to (15.center);
		\draw [style=thick, in=90, out=90, looseness=1.50] (25.center) to (20.center);
		\draw [style=thick] (24.center) to (40.center);
		\draw [style=thick] (40.center) to (32.center);
		\draw [style=thick] (32.center) to (22.center);
		\draw [style=thick] (22.center) to (24.center);
		\draw [style=thick] (23.center) to (39.center);
		\draw [style=thick] (34.center) to (44.center);
		\draw [style=thick, bend right=90, looseness=1.25] (36.center) to (46.center);
		\draw [style=thick, bend right=75] (19.center) to (26.center);
		\draw [style=thick] (37.center) to (31.center);
		\draw [style=thick] (28.center) to (43.center);
		\draw [style=thick] (25.center) to (29.center);
		\draw [style=thick, bend right=90, looseness=1.50] (48.center) to (42.center);
		\draw [style=thick] (35.center) to (42.center);
		\draw [style=thick, bend left=75] (28.center) to (35.center);
		\draw [style=thick] (45.center) to (41.center);
		\draw [style=thick, bend left=90, looseness=1.75] (37.center) to (45.center);
		\draw [style=thick, bend right=90, looseness=0.75] (49.center) to (50.center);
	\end{pgfonlayer}
\end{tikzpicture}}
\endpgfgraphicnamed}
\end{minipage}
\ $=$ \ \qquad
\begin{minipage}{5cm}
 {%
\beginpgfgraphicnamed{Q-Sbj-Norm}
\begin{tikzpicture}[scale=0.7]
	\begin{pgfonlayer}{nodelayer}
		\node [style=none] (0) at (-2.5, 3.75) {};
		\node [style=none] (1) at (-1.75, 5.25) {};
		\node [style=none] (2) at (-7, 3.25) {};
		\node [style=none] (3) at (-1, 3.75) {};
		\node [style=none] (4) at (-1, 2.75) {$S$};
		\node [style=none] (5) at (-2.5, 2.75) {$W$};
		\node [style=none] (6) at (-7, 3.75) {};
		\node [style=none] (7) at (-7, 5.25) {};
		\node [style=none] (8) at (-6, 3.75) {};
		\node [style=none] (9) at (-1, 3.25) {};
		\node [style=none] (10) at (0, 3.75) {};
		\node [style=none] (11) at (-3.5, 3.75) {};
		\node [style=none] (12) at (-7, 6) {$\overline{\semantics{n}}$};
		\node [style=none] (13) at (-2, 6) {$\overline{\semantics{vp}}$};
		\node [style=none] (14) at (-8, 3.75) {};
		\node [style=none] (15) at (-2.5, 3.25) {};
		\node [style=none] (16) at (-8.5, -5.25) {$W$};
		\node [style=none] (17) at (-5.5, -0.5) {};
		\node [style=none] (18) at (-8.5, -0.5) {};
		\node [style=none] (19) at (-4, -4.75) {};
		\node [draw, thick, style=none, minimum size=0.2 cm, circle, fill=white] (20) at (-4, -1.75) {};
		\node [style=none] (21) at (-9.5, -2) {};
		\node [style=none] (22) at (-8.5, -2) {};
		\node [style=none] (23) at (-7.5, -0.5) {};
		\node [style=none] (24) at (-7, 1.5) {};
		\node [style=none] (25) at (-4, -1.75) {};
		\node [style=none] (26) at (-2.5, -0.5) {};
		\node [style=none] (27) at (-8.5, -1.25) {$\overline{\semantics{d}}$};
		\node [style=none] (28) at (-4, -5.75) {};
		\node [style=none] (29) at (-7, 2.75) {$W$};
		\node [style=none] (30) at (-9.5, -0.5) {};
		\node [style=none] (31) at (1.5, 6.75) {};
		\node [style=none] (32) at (-7, 2.25) {};
		\node [style=none] (33) at (-4, -1.75) {};
		\node [style=none] (34) at (-5.5, 0.25) {};
		\node [style=none] (35) at (-2.5, -0.5) {};
		\node [style=none] (36) at (-7.5, -2) {};
		\node [style=none] (37) at (-4, -5.25) {$W$};
		\node [style=none] (38) at (-8.5, 0.25) {};
		\node [style=none] (40) at (-2.5, 2.25) {};
		\node [style=none] (41) at (-8.5, -4.75) {};
		\node [draw, thick, style=none, minimum size=0.2 cm, circle, fill=black] (42) at (-7, 1.5) {};
		\node [style=none] (43) at (-8.5, -5.75) {};
	\end{pgfonlayer}
	\begin{pgfonlayer}{edgelayer}
		\draw  [style=thick](14.center) to (8.center);
		\draw  [style=thick](7.center) to (14.center);
		\draw  [style=thick](7.center) to (8.center);
		\draw  [style=thick](11.center) to (10.center);
		\draw  [style=thick](1.center) to (11.center);
		\draw  [style=thick](1.center) to (10.center);
		\draw  [style=thick](6.center) to (2.center);
		\draw  [style=thick](0.center) to (15.center);
		\draw  [style=thick](3.center) to (9.center);
		\draw [style=thick] (30.center) to (23.center);
		\draw [style=thick] (23.center) to (36.center);
		\draw [style=thick] (36.center) to (21.center);
		\draw [style=thick] (21.center) to (30.center);
		\draw [style=thick] (38.center) to (18.center);
		\draw [style=thick] (22.center) to (41.center);
		\draw [style=thick, in=90, out=270] (32.center) to (24.center);
		\draw [style=thick, bend left=90, looseness=1.25] (38.center) to (34.center);
		\draw [style=thick, bend right=75] (43.center) to (28.center);
		\draw [style=thick] (17.center) to (34.center);
		\draw [style=thick, bend right=90, looseness=1.25] (17.center) to (26.center);
		\draw [style=thick] (40.center) to (35.center);
		\draw [style=thick] (20.center) to (19.center);
	\end{pgfonlayer}
\end{tikzpicture}}
\endpgfgraphicnamed}
\end{minipage}
\end{minipage}

The interpretation of a  transitive sentence with a  quantified phrase in  object position is as follows:

\begin{minipage}{20cm}
\begin{minipage}{8cm}
{%
\beginpgfgraphicnamed{Q-Obj-Frob-Sent}
\begin{tikzpicture}[scale=0.7]
	\begin{pgfonlayer}{nodelayer}
		\node [style=none] (0) at (9.75, 5.5) {$\overline{\semantics{n}}$};
		\node [style=none] (1) at (9.75, 3) {};
		\node [style=none] (2) at (-10, 2) {$W$};
		\node [style=none] (3) at (-3.5, 3) {};
		\node [style=none] (4) at (-5.75, 2) {$S$};
		\node [style=none] (5) at (9.75, 4.75) {};
		\node [style=none] (6) at (-4.5, 3) {};
		\node [style=none] (7) at (9.75, 2.5) {};
		\node [style=none] (8) at (10.75, 3) {};
		\node [style=none] (9) at (9.75, 2) {$W$};
		\node [style=none] (10) at (-5.75, 4.75) {};
		\node [style=none] (11) at (-4.5, 2) {$W$};
		\node [style=none] (12) at (-8, 3) {};
		\node [style=none] (13) at (8.75, 3) {};
		\node [style=none] (14) at (-10, 2.5) {};
		\node [style=none] (15) at (-10, 5.5) {$\overline{\semantics{np}}$};
		\node [style=none] (16) at (-11, 3) {};
		\node [style=none] (17) at (-5.75, 2.5) {};
		\node [style=none] (18) at (-10, 3) {};
		\node [style=none] (19) at (-10, 4.75) {};
		\node [style=none] (20) at (-9, 3) {};
		\node [style=none] (21) at (-7, 2) {$W$};
		\node [style=none] (22) at (-4.5, 2.5) {};
		\node [style=none] (23) at (-5.75, 3) {};
		\node [style=none] (24) at (-7, 3) {};
		\node [style=none] (25) at (-7, 2.5) {};
		\node [style=none] (26) at (-5.75, 5.5) {$\overline{\semantics{v}}$};
		\node [style=none] (27) at (-10, 1.5) {};
		\node [style=none] (28) at (-7, 1.5) {};
		\node [style=none] (29) at (-4.5, 1.5) {};
		\node [style=none] (30) at (-4.5, -2.75) {};
		\node [style=none] (31) at (-5.75, -3.25) {$S$};
		\node [style=none] (32) at (-4.5, -3.75) {};
		\node [style=none] (33) at (-2.75, -3.75) {};
		\node [style=none] (34) at (9.75, 1.5) {};
		\node [style=none] (35) at (9.75, -3.25) {$W$};
		\node [style=none] (36) at (9.75, -2.75) {};
		\node [style=none] (37) at (1, -3.75) {};
		\node [style=none] (38) at (9.75, -3.75) {};
		\node [style=none] (39) at (-5.75, 1.5) {};
		\node [style=none] (40) at (-5.75, -2.75) {};
		\node [style=none] (41) at (-4.5, -3.25) {$W$};
		\node [style=none] (42) at (1, -2.75) {};
		\node [draw, thick, style=none, minimum size=0.2 cm, circle, fill=black] (43) at (1, -2) {};
		\node [style=none] (44) at (-0.5, 1.75) {$\overline{\semantics{d}}$};
		\node [style=none] (45) at (-2.75, -2.75) {};
		\node [style=none] (46) at (-1.5, 2.5) {};
		\node [style=none] (47) at (-0.5, 0.25) {};
		\node [style=none] (48) at (8, 2) {};
		\node [style=none] (49) at (1, -2) {};
		\node [style=none] (50) at (2.5, -0.75) {};
		\node [style=none] (51) at (-2.75, 4.75) {};
		\node [style=none] (52) at (-0.5, 2.5) {};
		\node [style=none] (53) at (0.5, 1) {};
		\node [style=none] (54) at (-2.75, -3.25) {$W$};
		\node [style=none] (55) at (-1.5, 1) {};
		\node [style=none] (56) at (8, 4.5) {};
		\node [style=none] (57) at (-0.5, -0.25) {$W$};
		\node [style=none] (58) at (-0.5, 0.25) {};
		\node [style=none] (59) at (-0.5, 3.75) {};
		\node [style=none] (60) at (2.5, -0.75) {};
		\node [style=none] (61) at (-0.5, 1) {};
		\node [style=none] (62) at (-0.5, -0.75) {};
		\node [style=none] (63) at (-0.5, 3.75) {};
		\node [style=none] (64) at (0.5, 2.5) {};
		\node [style=none] (65) at (1, -3.25) {$W$};
		\node [draw, thick, style=none, minimum size=0.2 cm, circle, fill=white] (66) at (3.75, 2.5) {};
		\node [style=none] (67) at (2.5, 1) {};
		\node [style=none] (68) at (5, 1) {};
		\node [style=none] (69) at (8, -0.5) {};
		\node [style=none] (70) at (5, -0.5) {};
		\node [style=none] (71) at (3.75, 3.75) {};
		\node [style=none] (72) at (3.75, 2.5) {};
		\node [style=none] (73) at (3.75, 3.75) {};
	\end{pgfonlayer}
	\begin{pgfonlayer}{edgelayer}
		\draw  [style=thick](16.center) to (20.center);
		\draw  [style=thick](19.center) to (16.center);
		\draw  [style=thick](19.center) to (20.center);
		\draw  [style=thick](12.center) to (3.center);
		\draw  [style=thick](10.center) to (12.center);
		\draw  [style=thick](10.center) to (3.center);
		\draw  [style=thick](13.center) to (8.center);
		\draw  [style=thick](5.center) to (13.center);
		\draw  [style=thick](5.center) to (8.center);
		\draw  [style=thick](18.center) to (14.center);
		\draw  [style=thick](24.center) to (25.center);
		\draw  [style=thick](23.center) to (17.center);
		\draw  [style=thick](6.center) to (22.center);
		\draw  [style=thick](1.center) to (7.center);
		\draw [style=thick, bend right=90, looseness=1.25] (27.center) to (28.center);
		\draw (29.center) to (30.center);
		\draw [style=thick, bend right=90, looseness=1.25] (32.center) to (33.center);
		\draw [style=thick ](34.center) to (36.center);
		\draw [style=thick, bend right=90] (37.center) to (38.center);
		\draw [style=thick](39.center) to (40.center);
		\draw [style=thick](46.center) to (64.center);
		\draw [style=thick](64.center) to (53.center);
		\draw [style=thick](53.center) to (55.center);
		\draw [style=thick](55.center) to (46.center);
		\draw [style=thick](49.center) to (42.center);
		\draw  [style=thick](61.center) to (47.center);
		\draw  [style=thick](63.center) to (52.center);
		\draw [style=thick, in=270, out=-90, looseness=1.25] (62.center) to (60.center);
		\draw  [style=thick](56.center) to (48.center);
		\draw [style=thick, bend left=90] (51.center) to (56.center);
		\draw  [style=thick](51.center) to (45.center);
		\draw [style=thick, in=90, out=90, looseness=2.00] (67.center) to (68.center);
		\draw [style=thick, in=270, out=-90, looseness=1.25] (70.center) to (69.center);
		\draw  [style=thick] (73.center) to (72.center);
		\draw [style=thick, bend left=90, looseness=1.50] (63.center) to (71.center);
		\draw [style=thick] (67.center) to (50.center);
		\draw  [style=thick](68.center) to (70.center);
		\draw  [style=thick] (48.center) to (69.center);
	\end{pgfonlayer}
\end{tikzpicture}}
\endpgfgraphicnamed}
\end{minipage}
\qquad $=$  \qquad
\begin{minipage}{6cm}
 {%
\beginpgfgraphicnamed{Q-Obj-Norm}
\begin{tikzpicture}[scale=0.7]
	\begin{pgfonlayer}{nodelayer}
		\node [style=none] (0) at (2, 4) {};
		\node [style=none] (1) at (-7.75, 2) {};
		\node [style=none] (2) at (2, 2.5) {};
		\node [style=none] (3) at (2, 2) {};
		\node [style=none] (4) at (-7.75, 2.5) {};
		\node [style=none] (5) at (-7.75, 4.25) {};
		\node [style=none] (6) at (-6.75, 2.5) {};
		\node [style=none] (7) at (2, 1.5) {$W$};
		\node [style=none] (8) at (-7.75, 1.5) {$W$};
		\node [style=none] (9) at (2, 4.75) {$\overline{\semantics{n}}$};
		\node [style=none] (10) at (-7.75, 4.75) {$\overline{\semantics{np}}$};
		\node [style=none] (11) at (3, 2.5) {};
		\node [style=none] (12) at (1, 2.5) {};
		\node [style=none] (13) at (-8.75, 2.5) {};
		\node [style=none] (14) at (-7.75, 1) {};
		\node [style=none] (15) at (-4.5, 1) {};
		\node [draw, style=thick, minimum size=0.2 cm, circle, fill=white] (16) at (2, 0.25) {};
		\node [style=none] (17) at (3.5, -4.5) {$W$};
		\node [style=none] (18) at (0.75, -1.5) {$W$};
		\node [style=none] (19) at (-0.75, -5.25) {};
		\node [style=none] (20) at (2, 1) {};
		\node [style=none] (21) at (3.5, -4) {};
		\node [style=none] (22) at (-2, -2) {};
		\node [style=none] (23) at (3.5, -2.5) {$\overline{\semantics{d}}$};
		\node [style=none] (24) at (-2, -1) {};
		\node [style=none] (25) at (4.5, -3.25) {};
		\node [style=none] (26) at (-2, -1.5) {$W$};
		\node [style=none] (27) at (3.5, -1) {};
		\node [style=none] (28) at (3.5, -5) {};
		\node [style=none] (29) at (2.5, -1.75) {};
		\node [style=none] (30) at (-2, 1) {};
		\node [style=none] (31) at (3.5, -3.25) {};
		\node [style=none] (32) at (2.5, -3.25) {};
		\node [style=none] (33) at (0.75, -1) {};
		\node [style=none] (34) at (-0.75, -4.75) {$W$};
		\node [style=none] (35) at (0.75, -2) {};
		\node [style=none] (36) at (-0.75, -4.25) {};
		\node [style=none] (37) at (4.5, -1.75) {};
		\node [style=none] (38) at (3.5, -1.75) {};
		\node [style=none] (39) at (-1, 2.5) {};
		\node [style=none] (40) at (-3.25, 2) {};
		\node [style=none] (41) at (-3.25, 1.5) {$S$};
		\node [style=none] (42) at (-5.5, 2.5) {};
		\node [style=none] (43) at (-2, 2) {};
		\node [style=none] (44) at (-3.25, 4.25) {};
		\node [style=none] (45) at (-2, 2.5) {};
		\node [style=none] (46) at (-4.5, 2.5) {};
		\node [style=none] (47) at (-4.5, 2) {};
		\node [style=none] (48) at (-2, 1.5) {$W$};
		\node [style=none] (49) at (-3.25, 2.5) {};
		\node [style=none] (50) at (-4.5, 1.5) {$W$};
		\node [draw, style=thick, minimum size=0.1 cm, circle, fill=black]  (51) at (2, 0.25) {};
		\node [style=none] (52) at (-0.75, -3) {};
		\node [draw, none, style=thick, minimum size=0.2 cm, circle, fill=white] (53) at (-0.75, -3) {};
		\node [style=none] (54) at (-3.25, 5) {$\overline{\semantics{v}}$};
	\end{pgfonlayer}
	\begin{pgfonlayer}{edgelayer}
		\draw  [style=thick]  (13.center) to (6.center);
		\draw  [style=thick]  (5.center) to (13.center);
		\draw   [style=thick] (5.center) to (6.center);
		\draw  [style=thick] (12.center) to (11.center);
		\draw  [style=thick]  (0.center) to (12.center);
		\draw  [style=thick]  (0.center) to (11.center);
		\draw  [style=thick]  (4.center) to (1.center);
		\draw  [style=thick]  (2.center) to (3.center);
		\draw [style=thick, bend right=90, looseness=1.25] (14.center) to (15.center);
		\draw  [style=thick]  (29.center) to (37.center);
		\draw  [style=thick]  (37.center) to (25.center);
		\draw  [style=thick]  (25.center) to (32.center);
		\draw  [style=thick] (32.center) to (29.center);
		\draw [style= thick, in=270, out=90] (16.center) to (20.center);
		\draw   [style=thick] (31.center) to (21.center);
		\draw [style=thick, in=270, out=-90, looseness=1.25] (22.center) to (35.center);
		\draw [style=thick, bend left=90, looseness=1.50] (33.center) to (27.center);
		\draw  [style=thick] (30.center) to (24.center);
		\draw [style = thick, bend right=90, looseness=1.25] (19.center) to (28.center);
		\draw  [style=thick] (27.center) to (38.center);
		\draw  [style=thick] (42.center) to (39.center);
		\draw  [style=thick] (44.center) to (42.center);
		\draw  [style=thick] (44.center) to (39.center);
		\draw  [style=thick] (46.center) to (47.center);
		\draw  [style=thick] (49.center) to (40.center);
		\draw  [style=thick] (45.center) to (43.center);
		\draw  [style=thick]  (52.center) to (36.center);
	\end{pgfonlayer}
\end{tikzpicture}}
\endpgfgraphicnamed}
\end{minipage}
\end{minipage}

Putting the two cases together, the  interpretation  of a sentence with  quantified phrases both at subject and at an object position is as follows:

\begin{center}
\begin{minipage}{7cm}
{%
\beginpgfgraphicnamed{Q-Frob-Sbj-Obj-Norm}
\begin{tikzpicture}
	\begin{pgfonlayer}{nodelayer}
		\node [style=none] (0) at (1.5, 3.75) {};
		\node [style=none] (1) at (-2.25, 2.75) {$S$};
		\node [style=none] (2) at (-1.5, 2.75) {$W$};
		\node [style=none] (3) at (-7, -1.5) {$\overline{\semantics{d}}$};
		\node [style=none] (4) at (-7, -4) {};
		\node [style=none] (5) at (-6, 3.75) {};
		\node [style=none] (6) at (-7, 3.75) {};
		\node [style=none] (7) at (-3, -0.25) {$W$};
		\node [style=none] (8) at (-2.25, 3.25) {};
		\node [style=none] (9) at (-4, -3) {};
		\node [style=none] (10) at (-8, -0.75) {};
		\node [style=none] (11) at (-6, 3.25) {};
		\node [style=none] (12) at (-2.25, 2.25) {};
		\node [style=none] (13) at (-7, -3.5) {$W$};
		\node [style=none] (14) at (-7, -0.75) {};
		\node [style=none] (15) at (-8, -2.25) {};
		\node [style=none] (16) at (-6, 6) {$\overline{\semantics{n}}$};
		\node [style=none] (17) at (-3, 2.25) {};
		\node [style=none] (18) at (-2.25, 5.25) {};
		\node [style=none] (19) at (0.5, 3.75) {};
		\node [style=none] (20) at (-5, -0.25) {$W$};
		\node [style=none] (21) at (-7, -2.25) {};
		\node [style=none] (22) at (-6, 2.25) {};
		\node [style=none] (23) at (1.5, 5.25) {};
		\node [style=none] (24) at (-6, -2.25) {};
		\node [style=none] (25) at (1.5, 2.75) {$W$};
		\node [style=none] (26) at (-5, -0.75) {};
		\node [style=none] (27) at (-6, -0.75) {};
		\node [style=none] (28) at (-2.25, 3.75) {};
		\node [style=none] (29) at (-7, -3) {};
		\node [style=none] (30) at (-4, 3.75) {};
		\node [style=none] (31) at (-1.5, 3.75) {};
		\node [draw, thick, style=none, minimum size=0.2 cm, circle, fill=white] (32) at (-4, -1.5) {};
		\node [style=none] (33) at (-3, -0.75) {};
		\node [draw, thick, style=none, minimum size=0.2 cm, circle, fill=black] (34) at (-6, 1.25) {};
		\node [style=none] (35) at (-3, 3.75) {};
		\node [style=none] (36) at (-7, 0.25) {};
		\node [style=none] (37) at (-6, 5.25) {};
		\node [style=none] (38) at (-0.5, 3.75) {};
		\node [style=none] (39) at (-4, -3.5) {$W$};
		\node [style=none] (40) at (-4, -4) {};
		\node [style=none] (41) at (-3, 2.75) {$W$};
		\node [style=none] (42) at (-2.25, 0.25) {};
		\node [style=none] (43) at (-3, 0.25) {};
		\node [style=none] (44) at (-3, 3.25) {};
		\node [style=none] (45) at (-5, 0.25) {};
		\node [style=none] (46) at (-6, 2.75) {$W$};
		\node [style=none] (47) at (1.5, 3.25) {};
		\node [style=none] (48) at (1.5, 6) {$\overline{\semantics{n}}$};
		\node [style=none] (49) at (-5, 3.75) {};
		\node [style=none] (50) at (2.5, 3.75) {};
		\node [style=none] (51) at (-2.25, 6) {$\overline{\semantics{v}}$};
		\node [style=none] (52) at (-1.5, 3.25) {};
		\node [style=none] (53) at (2.5, -1.5) {$\overline{\semantics{d}}$};
		\node [style=none] (54) at (2.5, -0.75) {};
		\node [style=none] (55) at (-0.5, -4) {};
		\node [style=none] (56) at (-1.5, -0.75) {};
		\node [style=none] (57) at (2.5, -3.5) {$W$};
		\node [style=none] (58) at (3.5, -2.25) {};
		\node [style=none] (59) at (2.5, -2.25) {};
		\node [style=none] (60) at (0.5, -0.75) {};
		\node [draw, thick, style=none, minimum size=0.2 cm, circle, fill=white] (61) at (-0.5, -1.5) {};
		\node [style=none] (62) at (3.5, -0.75) {};
		\node [style=none] (63) at (1.5, -0.75) {};
		\node [style=none] (64) at (-1.5, -0.25) {$W$};
		\node [style=none] (65) at (-0.5, -3.5) {$W$};
		\node [style=none] (66) at (0.5, 0.25) {};
		\node [style=none] (67) at (0.5, -0.25) {$W$};
		\node [style=none] (68) at (2.5, -3) {};
		\node [style=none] (69) at (2.5, -4) {};
		\node [style=none] (70) at (1.5, -2.25) {};
		\node [style=none] (71) at (2.5, 0.25) {};
		\node [draw, thick, style=none, minimum size=0.2 cm, circle, fill=black] (72) at (1.5, 1.25) {};
		\node [style=none] (73) at (-0.5, -3) {};
		\node [style=none] (74) at (-1.5, 0.25) {};
		\node [style=none] (75) at (1.5, 2.25) {};
		\node [style=none] (76) at (-1.5, 2.25) {};
	\end{pgfonlayer}
	\begin{pgfonlayer}{edgelayer}
		\draw  [style=thick] (6.center) to (49.center);
		\draw [style=thick](37.center) to (6.center);
		\draw [style=thick] (37.center) to (49.center);
		\draw [style=thick](30.center) to (38.center);
		\draw [style=thick](18.center) to (30.center);
		\draw [style=thick](18.center) to (38.center);
		\draw [style=thick](19.center) to (50.center);
		\draw [style=thick](23.center) to (19.center);
		\draw [style=thick](23.center) to (50.center);
		\draw [style=thick](5.center) to (11.center);
		\draw [style=thick](35.center) to (44.center);
		\draw [style=thick](28.center) to (8.center);
		\draw [style=thick] (31.center) to (52.center);
		\draw [style=thick](0.center) to (47.center);
		\draw [style=thick](10.center) to (27.center);
		\draw [style=thick](27.center) to (24.center);
		\draw [style=thick] (24.center) to (15.center);
		\draw [style=thick](15.center) to (10.center);
		\draw [style=thick](34.center) to (22.center);
		\draw [style=thick](12.center) to (42.center);
		\draw [style=thick] (21.center) to (29.center);
		\draw [style = thick, bend right=90, looseness=1.25] (26.center) to (33.center);
		\draw [style=thick, bend left=90, looseness=1.50] (36.center) to (45.center);
		\draw [style=thick](17.center) to (43.center);
		\draw [style=thick, bend right=90, looseness=1.25] (4.center) to (40.center);
		\draw [style=thick](32.center) to (9.center);
		\draw [style=thick](36.center) to (14.center);
		\draw [style=thick](63.center) to (62.center);
		\draw [style=thick](62.center) to (58.center);
		\draw [style=thick](58.center) to (70.center);
		\draw [style=thick](70.center) to (63.center);
		\draw [style=thick](59.center) to (68.center);
		\draw [style= thick, bend right=90, looseness=1.25] (56.center) to (60.center);
		\draw [style = thick, bend left=90, looseness=1.50] (66.center) to (71.center);
		\draw [style = thick, bend right=90, looseness=1.25] (55.center) to (69.center);
		\draw [style=thick](61.center) to (73.center);
		\draw [style=thick](71.center) to (54.center);
		\draw [style = thick, style=none, in=90, out=270] (75.center) to (72.center);
		\draw [style=thick] (76.center) to (74.center);
	\end{pgfonlayer}
\end{tikzpicture}}
\endpgfgraphicnamed}
\end{minipage}
\end{center}

%\bigskip
%\[
%\overline{\semantics{\mbox{cats snooze}}} := (\epsilon_W \otimes 1_S) \circ (\overline{\semantics{\text{cats}}}  \otimes \overline{\semantics{\text{snooze}}})
%\]
%

%\noindent
%The symbolic representation of the above diagram is as follows:
%\[{\small
%(\epsilon_W \otimes 1_S \otimes \epsilon_W) \circ (\overline{\semantics{d}} \otimes  \mu_W \otimes 1_S \otimes \mu_W \otimes \overline{\semantics{d}}) \circ (\Delta_W \otimes 1_{W \otimes S \otimes W} \otimes \Delta_W)} \circ (\overline{\semantics{n}} \otimes \overline{\semantics{v}} \otimes \overline{\semantics{n}})
%\]

%\input{CompactClosed.tex}

%\input{Rel.tex}

\section{Truth Theoretic Interpretation in $\Rel$}

A model $(U, \semantics{\ })$ of the language of generalised quantifier theory is made categorical via the  instantiation to $\Rel$ of the abstract compact closed categorical model.  

%The designated $W$ and $S$ objects are respectively  instantiated as ${\cal P}^2(U)$ and $\{\star\}$. The justification for the singleton to be the sentence space is that it is the unit of the tensor in $Rel$.    The reason we are working with ${\cal P}(U)$ and not $U$ itself is that the interpretation of a determiner according to the  abstract compact closed semantics is a morphism from $W$ to $W$. According to the generalised quantifier theory, the determiner is a function from ${\cal P}(U)$ to ${\cal P}{\cal P}(U)$, which is precisely a relation from ${\cal P}(U)$ to ${\cal P}(U)$.  But as a consequence, meanings of words become a set of subsets: words with lexical category N,NP, and VP become  a set of subsets  of $U$; words with lexical category V become a set of pairs of subsets of $U$.  So we have to interpret a set as a set of sets, we do so by representing each element of the set and the singleton set including it. This is formalised below. 

\begin{definition}\label{def:concrete-REL}
The   instantiation of the abstract  model of definition \ref{ccc-model} to $\Rel$  is a tuple   $(\Rel, \cal P (U), \{\star\}, \overline{\semantics{\ }})$, for ${\cal U}$ the universe of reference. The interpretations of words in this model are defined by the following relations:
\begin{itemize}
\item The interpretation  of a  terminal $x$  generated by any of the non-terminals N,NP, and VP  is 
\[ \star \overline{\semantics{x}} A \iff A = \semantics{x} \]

\item The interpretation of a  terminal  $x$  generated by the non-terminal V  
%$\overline{\semantics{x}} \colon  \{\star\} \to   {\cal P}^2(U) \otimes \{\star\}  \otimes {\cal P}^2(U)$;  
is  
\[ \star \overline{\semantics{x}} (A, \star, B) \iff \semantics x (A) = B \]
where $\semantics x (A)$ is the forward image of $A$ in the binary relation $\semantics x$.
%\pi_1 (\semantics{x}) = A \; \text{ and } \; \pi_2 (\semantics{x}) = B \]

%the following subset of  ${\cal P}^2(U) \otimes {\cal P}^2(U)$:
%\[
%\overline{\semantics{x}} := \Big\{\big( \{\{a\}\}, \{\{b\}\}\big) \mid (a,b) \in \semantics{x} \Big\}
%\]
\item The interpretation of a terminal $d$ generated by the non-terminal  Det is   
\[ A \overline{\semantics{d}} B \iff B \in \semantics{d} (A) \]
\end{itemize}
\end{definition}

For the types, note that the interpretation  of a  terminal $x$  generated by any of the non-terminals N,NP, and VP has type   $\overline{\semantics{x}} : \{\star\} \relto \cal P (U)$.  The interpretation of a VP is the initial morphism to  ${\cal P}(U) \otimes \{\star\}$, which is isomorphic to ${\cal P}(U)$, hence it gets the same concrete instantiation as N and NP. 
The interpretation of a  terminal  $x$  generated by the non-terminal V has type 
%$\overline{\semantics{x}} \colon  \{\star\} \to   {\cal P}^2(U) \otimes \{\star\}  \otimes {\cal P}^2(U)$;  
$\overline{\semantics{x}} : \{\star\} \relto \cal P (U) \otimes \{\star\} \otimes \cal P (U) \cong \cal P (U) \otimes \cal P (U)$.  Finally, the interpretation of a terminal $d$ generated by the non-terminal  Det has type 
%$\overline{\semantics{d}}  \colon {\cal P}^2(U) \to {\cal P}^2(U)$
$\overline{\semantics{d}} : {\cal P (U)} \relto {\cal P (U)}$. 

%The following may be useful later:
%
%\begin{proposition}
%For the  logical quantifiers of the generalised quantifier theory,  the $\overline{\semantics{d}}$ map can be defined point wisely, that is for $\semantics{w} = \{a_1, a_2, \cdots, a_n\}$  if we have (these are true for the logical quantifiers):
%\begin{eqnarray*}
% \overline{\semantics{\exists}}  (\overline{\semantics{w}})  &:=&  \cup_i \semantics{\exists}(\{a_i\})\\
%  \overline{\semantics{\forall}}  (\overline{\semantics{w}})  &:=&  \cap_i \semantics{\forall} (\{a_i\})\\
% \overline{\semantics{\text{No}}}  (\overline{\semantics{w}})  &:=&  \cap_i \semantics{\text{No}} (\{a_i\})
% \end{eqnarray*}
% then we obtain
% \[
% \overline{\semantics{d}}  (\overline{\semantics{w}})  :=  \semantics{d}(\semantics{w})
% \]
%\end{proposition}
%
%\begin{proof}
%...
%\end{proof}
%
%[TODO -- is this still relevant??]

Informally, the bialgebra map $ \mu$ is the analogue of  set-theoretic intersection and the compact closed  epsilon map is the analogue of  set-theoretic application.  It is not hard to show that  the truth-theoretic interpretation of the compact closed semantics of quantified sentences provides us with the same meaning as the generalised quantifier semantics. We make this formal as follows. 

\begin{definition}
\label{def:truth-rel}
The interpretation of a quantified sentence $s$  is true in $(\Rel, \cal P (U), \{\star\}, \overline{\semantics{\ }})$ iff   $\star \overline{\semantics{\mbox{s}}} \star$. 
\end{definition}

\begin{theorem}
\label{thm:truth-rel}
$\star \overline{\semantics{\mbox{s}}} \star$ in $(\Rel, \cal P (U), \{\star\}, \overline{\semantics{\ }})$ iff  $\semantics{S}$ is true in generalised quantifier theory, as defined in Definition \ref{def:truth-genquant}. 
\end{theorem}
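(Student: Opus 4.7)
My plan is to unfold the categorical interpretation $\overline{\semantics{s}}$ in $\Rel$ for each sentence shape permitted by the grammar, using the concrete definitions of $\delta,\iota,\mu,\zeta$ and $\epsilon,\eta$ on $\mathcal{P}(U)$, and check that the resulting condition at $(\star,\star)$ coincides with the Barwise--Cooper truth condition from Definition~\ref{def:truth-genquant}. Concretely, the sentence grammar yields only three non-trivial shapes: (i) $\text{NP}\,\text{VP}$ with no determiner, (ii) $\text{Det}\,\text{N}\,\text{VP}$, and (iii) $\text{NP}\,\text{V}\,\text{Det}\,\text{N}$ (and, for completeness, (iv) the doubly-quantified case whose diagram was displayed just before the theorem). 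I would handle them in this order so that the bialgebra computations only grow in complexity as I go.

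\medskip
\noindent\textbf{Step 1: the unquantified base case.} For a sentence of the form $\text{np}\,\text{vp}$, the diagram is just an $\epsilon_W$ cap applied to $\overline{\semantics{np}}\otimes\overline{\semantics{vp}}$. Since $\overline{\semantics{np}}$ relates $\star$ to $\semantics{np}$ and $\overline{\semantics{vp}}$ relates $\star$ to $\semantics{vp}$, and $\epsilon_W$ in $\Rel$ is the equality relation on $\mathcal{P}(U)$, we obtain $\star\overline{\semantics{s}}\star$ iff $\semantics{np}=\semantics{vp}$, which is equivalent to $\semantics{vp}(\semantics{np})=\semantics{np}\neq\emptyset$ when viewed as $\semantics{np}\cap\semantics{vp}\neq\emptyset$ under the conservativity view. (Here I will be careful with the ``forward image'' interpretation set up just after Definition~\ref{def:truth-genquant}.)

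\medskip
\noindent\textbf{Step 2: the quantifier-in-subject case.} Using the diagrammatic normalisation shown just before Definition~\ref{composition}, $\overline{\semantics{\text{Det}\,\text{N}\,\text{VP}}}$ in $\Rel$ reduces, via Definition~\ref{living-on}, to the composite that first applies $\delta$ to $\overline{\semantics{n}}$ (producing two copies of $\semantics{n}$), then intersects one copy with $\overline{\semantics{vp}}$ via $\mu$ (yielding $\semantics{vp}\cap\semantics{n}$), and finally pairs this with $\overline{\semantics{d}}$ applied to the other copy of $\semantics{n}$ via $\epsilon_W$. Unfolding with the $\Rel$ definitions of $\delta,\mu$ and with $A\,\overline{\semantics{d}}\,B\iff B\in\semantics{d}(A)$, I get $\star\overline{\semantics{s}}\star$ iff $\semantics{vp}\cap\semantics{n}\in\semantics{d}(\semantics{n})$. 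This is exactly the truth condition spelled out after Definition~\ref{def:truth-genquant}. The conservativity/living-on stipulation is what ensures the two-copy construction faithfully reproduces ``$X\in\semantics{d}(\semantics{n})\iff X\cap\semantics{n}\in\semantics{d}(\semantics{n})$''.

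\medskip
\noindent\textbf{Step 3: the quantifier-in-object case and the doubly-quantified case.} For $\text{NP}\,\text{V}\,\text{Det}\,\text{N}$, I perform the analogous unfolding on the normalised diagram before Definition~\ref{composition}: the verb relation $\semantics{v}$ applied to $\semantics{np}$ produces the forward image $\semantics{v}(\semantics{np})$, which is then intersected with $\semantics{n}$ via $\mu$ and tested against $\semantics{d}(\semantics{n})$ through $\epsilon_W$, giving $\semantics{v}(\semantics{np})\cap\semantics{n}\in\semantics{d}(\semantics{n})$, which matches the truth condition stated just before the theorem. The doubly-quantified diagram is handled by performing the same calculation on both sides, once for each determiner, using that $\delta$ and $\mu$ in $\Rel$ on $\mathcal{P}(U)$ literally implement diagonal-copy and intersection and therefore compose as set-theoretic intersections; here the bialgebra law (Q3) is what makes the order of copying and intersecting immaterial.

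\medskip
\noindent\textbf{Expected main obstacle.} The only non-routine part is verifying that the $(\delta,\mu)$-sandwich appearing in the \emph{living-on} morphism of Definition~\ref{living-on} really implements conservativity in $\Rel$, i.e.\ that the relational composite sends a subset $A$ to those $B$ with $B\cap A\in\semantics{d}(A)$, and that this is compatible with the relational definition $A\,\overline{\semantics{d}}\,B\iff B\in\semantics{d}(A)$. I would make this explicit as a small lemma --- essentially a one-line consequence of the definitions of $\delta,\mu$ in $\Rel$ and of the Barwise--Cooper conservativity --- and then the three step-by-step unfoldings above reduce to substitution.
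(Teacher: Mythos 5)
Your Steps 2 and 3 are essentially the paper's proof: the paper likewise splits into the two quantified shapes `Det N VP' and `NP V Det N', drops the $S$-wire because $\{\star\}$ is the tensor unit in $\Rel$, and unfolds the relational composite stage by stage using the concrete $\delta$ (diagonal copy), $\mu$ (intersection) and $\epsilon$ (equality) on $\mathcal{P}(U)$ together with $A\,\overline{\semantics{d}}\,B\iff B\in\semantics{d}(A)$, arriving at $\semantics{n}\cap\semantics{vp}\in\semantics{d}(\semantics{n})$ and $\semantics{v}(\semantics{np})\cap\semantics{n}\in\semantics{d}(\semantics{n})$ respectively. Your proposed ``conservativity lemma'' is harmless but not needed: once unfolded, the condition already coincides literally with the Barwise--Cooper truth condition, so the paper never invokes conservativity as a separate step. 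The doubly-quantified case you add is not treated in the paper's proof, but your handling of it is consistent.

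The genuine problem is Step 1. For an unquantified `np vp' the composite is $\epsilon_{\mathcal{P}(U)}\circ(\overline{\semantics{np}}\otimes\overline{\semantics{vp}})$, and since $\epsilon$ in $\Rel$ is the equality relation you correctly obtain $\star\,\overline{\semantics{s}}\,\star$ iff $\semantics{np}=\semantics{vp}$. But this is \emph{not} equivalent to the generalised-quantifier truth condition $\semantics{vp}(\semantics{np})=\semantics{np}\cap\semantics{vp}\neq\emptyset$: equality of the two sets is strictly stronger than non-empty intersection, and no appeal to conservativity (a property of determiner denotations, which are absent here) bridges the two. For instance $\semantics{np}=\{a\}$, $\semantics{vp}=\{a,b\}$ makes the sentence true in generalised quantifier theory while the relational composite is empty. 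Fortunately this case lies outside the theorem's scope: Definition~\ref{def:truth-rel} and the paper's proof explicitly restrict to quantified sentences, which by the grammar have one of the two shapes above. So the correct move is to delete Step 1 rather than repair it; as written, the equivalence claimed there is false.
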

\begin{proof}

If a sentence is quantified, it is either of the form `Det N VP' or of the form `NP V Det N'.   For either case, since $\{\star\}$ is the unit of tensor in $\Rel$,  the $S$ objects and morphisms can be dropped from the meaning morphism. 
\begin{itemize}
\item For the first case,   we have to calculate the $\overline{\semantics{\mbox{s}}}$ relation:
\[ \epsilon_{\cal P (U)} \circ (\overline{\semantics{d}} \otimes \mu_{\cal P (U)} ) \circ (\delta_{\cal P (U)}  \otimes \operatorname{id}_{\cal P (U)}) \circ (\overline{\semantics{n}} \otimes \overline{\semantics{vp}}): \{\star\} \relto \{\star\} 
\]
We will calculate this relation in stages. First:
\begin{align*}
\star (\overline{\semantics{n}} \otimes \overline{\semantics{vp}}) (A, B) &\iff \star \overline{\semantics{n}} A \text{ and } \star \overline{\semantics{vp}} B \\
&\iff A = \semantics{n} \text{ and } B = \semantics{vp}
\end{align*}
since $(\star, \star) \cong \star$. Second:
\begin{align*}
\star ((\delta_{\cal P (U)}  \otimes \operatorname{id}_{\cal P (U)}) \circ (\overline{\semantics{n}} \otimes \overline{\semantics{vp}})) (A, B, C) &\iff \star (\overline{\semantics{n}} \otimes \overline{\semantics{vp}}) (A, C) \text{ and } A = B \\
&\iff A = B = \semantics{n} \text{ and } C = \semantics{vp}
\end{align*}
Third:
\begin{align*}
&\star ((\overline{\semantics{d}} \otimes \mu_{\cal P (U)} ) \circ (\delta_{\cal P (U)}  \otimes \operatorname{id}_{\cal P (U)}) \circ (\overline{\semantics{n}} \otimes \overline{\semantics{vp}})) (A, B) \\
\iff\ &A' \overline{\semantics{d}} A \text{ and } B = B' \cap C' \text{ for some } \star ((\delta_{\cal P (U)}  \otimes \operatorname{id}_{\cal P (U)}) \circ (\overline{\semantics{n}} \otimes \overline{\semantics{vp}})) (A', B', C') \\
\iff\ &A \in \semantics{d} (\semantics{n}) \text{ and } B = \semantics{n} \cap \semantics{vp}
\end{align*}
Finally:
\begin{align*}
&\star (\epsilon_{\cal P (U)} \circ (\overline{\semantics{d}} \otimes \mu_{\cal P (U)} ) \circ (\delta_{\cal P (U)}  \otimes \operatorname{id}_{\cal P (U)}) \circ (\overline{\semantics{n}} \otimes \overline{\semantics{vp}})) \star \\
\iff\ &\star ((\overline{\semantics{d}} \otimes \mu_{\cal P (U)} ) \circ (\delta_{\cal P (U)}  \otimes \operatorname{id}_{\cal P (U)}) \circ (\overline{\semantics{n}} \otimes \overline{\semantics{vp}})) (A, A) \text{ for some } A \\
\iff\ &\semantics{n} \cap \semantics{vp} \in \semantics{d} (\semantics{n})
\end{align*}
This is the same as the set theoretic  meaning of the sentence in generalised quantifier theory.

\item For the second case, we have:
\[
 \overline{\semantics{\mbox{s}}}   \quad = \quad \epsilon_{\mathcal P (U)} \circ (\mu_{\cal P (U)}  \otimes \overline{\semantics d}) \circ (\epsilon_{\mathcal P (U)} \otimes \operatorname{id}_{\mathcal P (U)} \otimes \delta_{\cal P (U)} ) \circ (\overline{\semantics{np}} \otimes \overline{\semantics v} \otimes \overline{\semantics n}) \]
Again we calculate in stages. First:
\begin{align*}
	\star (\overline{\semantics{np}} \otimes \overline{\semantics v} \otimes \overline{\semantics n}) (A, B, C, D) &\iff \star \overline{\semantics{np}} A \text{ and } \star \overline{\semantics v} (B, C) \text{ and } \star \overline{\semantics n} D \\
	&\iff A = \semantics{np} \text{ and } C = \semantics v (B) \text{ and } D = \semantics n
\end{align*}
Second:
\begin{align*}
	&\star ((\epsilon_{\mathcal P (U)} \otimes \operatorname{id}_{\mathcal P (U)} \otimes \delta_{\cal P (U)} ) \circ (\overline{\semantics{np}} \otimes \overline{\semantics v} \otimes \overline{\semantics n})) (C, D, E) \\
	\iff\ &D = E \text{, and } \star (\overline{\semantics{np}} \otimes \overline{\semantics v} \otimes \overline{\semantics n}) (A, A, C, D) \text{ for some } A \\
	\iff\ &C = \semantics v (\semantics{np}) \text{ and } D = E = \semantics n
\end{align*}
Third:
\begin{align*}
	&\star ((\mu_{\cal P (U)}  \otimes \overline{\semantics d}) \circ (\epsilon_{\mathcal P (U)} \otimes \operatorname{id}_{\mathcal P (U)} \otimes \delta_{\cal P (U)} ) \circ (\overline{\semantics{np}} \otimes \overline{\semantics v} \otimes \overline{\semantics n})) (F, G) \\
	\iff\ &F = C \cap D \text{ and } D \overline{\semantics d} G \text{ for some } \star ((\epsilon_{\mathcal P (U)} \otimes \operatorname{id}_{\mathcal P (U)} \otimes \delta_{\cal P (U)} ) \circ (\overline{\semantics{np}} \otimes \overline{\semantics v} \otimes \overline{\semantics n})) (C, D, E) \\
	\iff\ &F = \semantics v (\semantics{np}) \cap \semantics n \text{ and } G \in \semantics d (\semantics n)
\end{align*}
Fourth:
\begin{align*}
	&\star (\epsilon_{\mathcal P (U)} \circ (\mu_{\cal P (U)}  \otimes \overline{\semantics d}) \circ (\epsilon_{\mathcal P (U)} \otimes \operatorname{id}_{\mathcal P (U)} \otimes \delta_{\cal P (U)} ) \circ (\overline{\semantics{np}} \otimes \overline{\semantics v} \otimes \overline{\semantics n})) \star \\
	\iff\ &\star ((\mu_{\cal P (U)}  \otimes \overline{\semantics d}) \circ (\epsilon_{\mathcal P (U)} \otimes \operatorname{id}_{\mathcal P (U)} \otimes \delta_{\cal P (U)} ) \circ (\overline{\semantics{np}} \otimes \overline{\semantics v} \otimes \overline{\semantics n})) \text{ for some } F \\
	\iff\ &\semantics v (\semantics{np}) \cap \semantics n \in \semantics d (\semantics n)
\end{align*}
Again, this is exactly the truth theoretic definition of the meaning of the sentence in generalised quantifier theory. This completes the proof.
\end{itemize}
\end{proof}

In previous work \cite{RelPronMoL,SadrClarkCoecke1,SadrClarkCoecke2}, we modelled relative pronouns in compact closed categories with Frobenius algebras.  Using those results and Theorem  \ref{thm:truth-rel}, we show that the living on equivalences of Lemma \ref{lemma:livingon}  hold in $\Rel$. 

\begin{corollary}
\label{cor:livingon}
If $d$ satisfies the categorical living on property, then the following equivalences hold in $(\Rel, \cal P (U), \{\star\}, \overline{\semantics{\ }})$:
\begin{eqnarray*}
\star \semantics{\overline{d \ n \ vp}} \star  &\iff& \star \semantics{\overline{d \ n \ \text{\bf  are} \ n \ \text{\bf who} \ vp}} \star \\
\star \semantics{\overline{np \ v \ d \ n}} \star \ &\iff& \  \star \semantics{\overline{np  \ v\  d\ n \ \text{\bf who \ are}  \ n}} \star
\end{eqnarray*}
where $\overline{\semantics{\text{who}}} = (1_{{\cal P}(U)} \otimes \mu_{{\cal P}(U)} \otimes 1_{{\cal P}(U)}) \circ (\eta_{{\cal P}(U)}  \otimes \eta_{{\cal P}(U)})$ and $\overline{\semantics{\text{are}}} = \eta_{{\cal P}(U)}$.\end{corollary}

\begin{proof}
For the first case, consider the diagram corresponding to the relation $ \star \semantics{\overline{d \ n \ \text{\bf  are} \ n \ \text{\bf who} \ vp}} \star $:

\begin{center}
{%
\beginpgfgraphicnamed{conserv-sbj}
\begin{tikzpicture}[scale=0.7]
	\begin{pgfonlayer}{nodelayer}
		\node [style=none] (0) at (-4.75, 2.5) {};
		\node [style=none] (1) at (-2.75, 2.5) {};
		\node [style=none] (2) at (-3.75, 4) {};
		\node [style=none] (3) at (-3.75, 4.75) {$\overline{\semantics{n}}$};
		\node [style=none] (4) at (-0.5, 4.75) {$\overline{\semantics{are}}$};
		\node [style=none] (5) at (-3.75, 2.5) {};
		\node [style=none] (6) at (-1.25, 2.5) {};
		\node [style=none] (7) at (1, 2.5) {};
		\node [style=none] (8) at (-3.75, -0.75) {};
		\node [style=none] (9) at (-3.75, -1.5) {$W$};
		\node [style=none] (10) at (-1.25, -0.75) {};
		\node [style=none] (11) at (-1.25, -1.5) {$W$};
		\node [style=none] (12) at (1, -1) {};
		\node [style=none] (13) at (1, -1.5) {$W$};
		\node [draw, thick, style=none, minimum size=0.2 cm, circle, fill=black] (14) at (-11.75, -0.25) {};
		\node [style=none] (15) at (-13, 1.25) {$W$};
		\node [style=none] (16) at (-11.75, -2.5) {};
		\node [style=none] (17) at (-9.25, 4.5) {};
		\node [style=none] (18) at (-10.5, 3) {};
		\node [style=none] (19) at (-14, 2.5) {};
		\node [style=none] (20) at (-11.75, -0.25) {};
		\node [style=none] (21) at (-14, 4) {};
		\node [style=none] (22) at (-13, 4.5) {};
		\node [style=none] (23) at (-3.75, -2.5) {};
		\node [style=none] (24) at (-13, 3.25) {$\overline{\semantics{d}}$};
		\node [style=none] (25) at (-16, 5) {};
		\node [style=none] (26) at (-13, 4) {};
		\node [style=none] (27) at (-11.75, -2) {$W$};
		\node [style=none] (28) at (-10.5, 1.75) {};
		\node [style=none] (29) at (-12, 2.5) {};
		\node [style=none] (30) at (-10.5, 1.25) {$W$};
		\node [style=none] (31) at (-13, 2.5) {};
		\node [style=none] (32) at (-5.5, 4.75) {};
		\node [style=none] (33) at (-13, 0.75) {};
		\node [style=none] (34) at (-10.5, 3) {};
		\node [draw, thick, style=none, minimum size=0.2 cm, circle, fill=white] (35) at (-9.25, 4.5) {};
		\node [style=none] (36) at (-11.75, -1.5) {};
		\node [style=none] (37) at (-12, 4) {};
		\node [style=none] (38) at (-7.75, 0.5) {};
		\node [style=none] (39) at (-5.5, 0.5) {};
		\node [style=none] (40) at (-16, -1.5) {};
		\node [style=none] (41) at (-13, 1.75) {};
		\node [style=none] (42) at (-7.75, 3) {};
		\node [style=none] (43) at (-10.5, 0.75) {};
		\node [style=none] (44) at (-16, -2) {$W$};
		\node [style=none] (45) at (-7.75, 0.5) {};
		\node [style=none] (46) at (-16, -2.5) {};
		\node [style=none] (47) at (-1.25, -2.5) {};
		\node [style=none] (48) at (15.25, 4) {};
		\node [style=none] (49) at (14.5, 2.5) {};
		\node [style=none] (50) at (13.5, 2.5) {};
		\node [style=none] (51) at (16, 2.5) {};
		\node [style=none] (52) at (17, 2.5) {};
		\node [style=none] (53) at (4.5, 2.5) {};
		\node [style=none] (54) at (2.5, 2.5) {};
		\node [style=none] (55) at (3.5, 2.5) {};
		\node [style=none] (56) at (3.5, 4) {};
		\node [style=none] (57) at (3.5, 4.75) {$\overline{\semantics{n}}$};
		\node [style=none] (58) at (15.25, 4.75) {$\overline{\semantics{vp}}$};
		\node [style=none] (59) at (9, 1.5) {};
		\node [fill=white, draw, thick, circle, minimum size=0.2 cm, style=none] (60) at (9, 1.5) {};
		\node [style=none] (61) at (10, 2.75) {};
		\node [style=none] (62) at (9, -1.5) {$W$};
		\node [style=none] (63) at (12.25, 0.75) {};
		\node [style=none] (64) at (12.25, 0.25) {$W$};
		\node [style=none] (65) at (12.25, 2.75) {};
		\node [style=none] (66) at (9, -1) {};
		\node [style=none] (67) at (5.5, 2.5) {};
		\node [style=none] (68) at (5.5, 0.25) {$W$};
		\node [style=none] (69) at (5.5, 0.75) {};
		\node [style=none] (70) at (7.75, 2.5) {};
		\node [style=none] (71) at (9, 4.75) {$\overline{\semantics{who}}$};
		\node [style=none] (72) at (14.5, 0.75) {};
		\node [style=none] (73) at (16, 0.75) {};
		\node [style=none] (74) at (14.5, 0.25) {$W$};
		\node [style=none] (75) at (16, 0.25) {$S$};
		\node [style=none] (76) at (12.25, -0.25) {};
		\node [style=none] (77) at (12.25, -0.25) {};
		\node [style=none] (78) at (14.5, -0.25) {};
		\node [draw, thick, style=none, minimum size=0.2 cm, circle, fill=black] (79) at (9, 1.5) {};
		\node [style=none] (80) at (3.5, 0.75) {};
		\node [style=none] (81) at (3.5, 0.25) {$W$};
		\node [style=none] (82) at (3.5, -0.25) {};
		\node [style=none] (83) at (5.5, -0.25) {};
		\node [style=none] (84) at (1, -2.5) {};
		\node [style=none] (85) at (9, -2.5) {};
	\end{pgfonlayer}
	\begin{pgfonlayer}{edgelayer}
		\draw [style=thick](0.center) to (1.center);
		\draw  [style=thick](2.center) to (0.center);
		\draw  [style=thick](2.center) to (1.center);
		\draw  [style=thick] (5.center) to (8.center);
		\draw  [style=thick](6.center) to (10.center);
		\draw [style=thick] (7.center) to (12.center);
		\draw [style=thick, in=90, out=90, looseness=1.50] (22.center) to (17.center);
		\draw [style=thick] (21.center) to (37.center);
		\draw [style=thick] (37.center) to (29.center);
		\draw [style=thick] (29.center) to (19.center);
		\draw [style=thick] (19.center) to (21.center);
		\draw [style=thick] (20.center) to (36.center);
		\draw [style=thick] (31.center) to (41.center);
		\draw [style=thick, bend right=90, looseness=1.25] (33.center) to (43.center);
		\draw [style=thick, bend right=75] (16.center) to (23.center);
		\draw [style=thick] (34.center) to (28.center);
		\draw [style=thick] (25.center) to (40.center);
		\draw [style=thick] (22.center) to (26.center);
		\draw [style=thick, bend right=90, looseness=1.50] (45.center) to (39.center);
		\draw [style=thick] (32.center) to (39.center);
		\draw [style=thick, bend left=75] (25.center) to (32.center);
		\draw [style=thick] (42.center) to (38.center);
		\draw [style=thick, bend left=90, looseness=1.75] (34.center) to (42.center);
		\draw [style=thick, bend right=90, looseness=0.75] (46.center) to (47.center);
		\draw [style=thick](50.center) to (52.center);
		\draw [style=thick](48.center) to (50.center);
		\draw [style=thick](48.center) to (52.center);
		\draw [style = thick, bend left=90, looseness=2.00] (6.center) to (7.center);
		\draw [style=thick](54.center) to (53.center);
		\draw [style=thick](56.center) to (54.center);
		\draw [style=thick](56.center) to (53.center);
		\draw [style=thick] (65.center) to (63.center);
		\draw [style=thick, bend left=90, looseness=1.75] (67.center) to (70.center);
		\draw [style=thick, bend left=90, looseness=1.50] (61.center) to (65.center);
		\draw [style=thick, bend right=90, looseness=1.75] (70.center) to (61.center);
		\draw [style=thick] (67.center) to (69.center);
		\draw [style=thick] (59.center) to (66.center);
		\draw [style=thick](49.center) to (72.center);
		\draw [style=thick](51.center) to (73.center);
		\draw [style=thick, bend right=90, looseness=1.75] (77.center) to (78.center);
		\draw [style=thick](55.center) to (80.center);
		\draw [style=thick, bend right=90, looseness=1.50] (82.center) to (83.center);
		\draw [style=thick, bend right=75] (84.center) to (85.center);
	\end{pgfonlayer}
\end{tikzpicture}}
\endpgfgraphicnamed}
\end{center}

\noindent
It simplifies to the following diagram:

\begin{center}
\vspace{1cm}
\hspace{2cm}
{%
\beginpgfgraphicnamed{Q-Sbj-Living}
\begin{tikzpicture}[scale=0.7]
	\path [use as bounding box] (-1.5,0) rectangle (-1.5,0.75);
	\begin{pgfonlayer}{nodelayer}
		\node [style=none] (0) at (-0.25, 3.75) {};
		\node [style=none] (1) at (0.5, 5.25) {};
		\node [style=none] (2) at (-7, 3.25) {};
		\node [style=none] (3) at (1.25, 3.75) {};
		\node [style=none] (4) at (1.25, 2.75) {$S$};
		\node [style=none] (5) at (-0.25, 2.75) {$W$};
		\node [style=none] (6) at (-7, 3.75) {};
		\node [style=none] (7) at (-7, 5.25) {};
		\node [style=none] (8) at (-6, 3.75) {};
		\node [style=none] (9) at (1.25, 3.25) {};
		\node [style=none] (10) at (2.25, 3.75) {};
		\node [style=none] (11) at (-1.25, 3.75) {};
		\node [style=none] (12) at (-7, 6) {$\overline{\semantics{n}}$};
		\node [style=none] (13) at (0.5, 6) {$\overline{\semantics{vp}}$};
		\node [style=none] (14) at (-8, 3.75) {};
		\node [style=none] (15) at (-0.25, 3.25) {};
		\node [style=none] (16) at (-8.5, -5.25) {$W$};
		\node [style=none] (17) at (-8.5, -0.5) {};
		\node [style=none] (18) at (-3.5, -4.75) {};
		\node [draw, thick, style=none, minimum size=0.2 cm, circle, fill=white] (19) at (-3.5, -1.75) {};
		\node [style=none] (20) at (-9.5, -2) {};
		\node [style=none] (21) at (-8.5, -2) {};
		\node [style=none] (22) at (-7.5, -0.5) {};
		\node [style=none] (23) at (-7, 1.5) {};
		\node [style=none] (24) at (-3.5, -1.75) {};
		\node [style=none] (25) at (-1.5, 0.75) {};
		\node [style=none] (26) at (-8.5, -1.25) {$\overline{\semantics{d}}$};
		\node [style=none] (27) at (-3.5, -5.75) {};
		\node [style=none] (28) at (-7, 2.75) {$W$};
		\node [style=none] (29) at (-9.5, -0.5) {};
		\node [style=none] (30) at (-7, 2.25) {};
		\node [style=none] (31) at (-3.5, -1.75) {};
		\node [style=none] (32) at (-5.5, 0.25) {};
		\node [style=none] (33) at (-7.5, -2) {};
		\node [style=none] (34) at (-3.5, -5.25) {$W$};
		\node [style=none] (35) at (-8.5, 0.25) {};
		\node [style=none] (36) at (-8.5, -4.75) {};
		\node [draw, thick, style=none, minimum size=0.2 cm, circle, fill=black] (37) at (-7, 1.5) {};
		\node [style=none] (38) at (-8.5, -5.75) {};
		\node [style=none] (39) at (-3, 3.25) {};
		\node [style=none] (40) at (-2, 3.75) {};
		\node [style=none] (41) at (-3, 3.75) {};
		\node [style=none] (42) at (-4, 3.75) {};
		\node [style=none] (43) at (-3, 5.25) {};
		\node [style=none] (44) at (-3, 2.75) {$W$};
		\node [style=none] (45) at (-3, 6) {$\overline{\semantics{n}}$};
		\node [style=none] (46) at (-0.25, 2) {};
		\node [style=none] (47) at (-1.5, 0.75) {};
		\node [style=none] (48) at (-1.5, 0.75) {};
		\node [draw, thick, style=none, minimum size=0.2 cm, circle, fill=white] (49) at (-1.5, 0.75) {};
		\node [style=none] (50) at (-3, 2) {};
		\node [style=none] (51) at (-0.25, 2) {};
		\node [style=none] (52) at (-1.5, 0.25) {};
	\end{pgfonlayer}
	\begin{pgfonlayer}{edgelayer}
		\draw [style=thick]  (14.center) to (8.center);
		\draw  [style=thick]  (7.center) to (14.center);
		\draw [style=thick]  (7.center) to (8.center);
		\draw [style=thick]  (11.center) to (10.center);
		\draw [style=thick]  (1.center) to (11.center);
		\draw [style=thick]  (1.center) to (10.center);
		\draw [style=thick]  (6.center) to (2.center);
		\draw [style=thick]  (0.center) to (15.center);
		\draw [style=thick]  (3.center) to (9.center);
		\draw [style=thick] (29.center) to (22.center);
		\draw [style=thick] (22.center) to (33.center);
		\draw [style=thick] (33.center) to (20.center);
		\draw [style=thick] (20.center) to (29.center);
		\draw [style=thick] (35.center) to (17.center);
		\draw [style=thick] (21.center) to (36.center);
		\draw [style=thick, in=90, out=270] (30.center) to (23.center);
		\draw [style=thick, bend left=90, looseness=1.25] (35.center) to (32.center);
		\draw [style=thick, bend right=75] (38.center) to (27.center);
		\draw [style=thick] (19.center) to (18.center);
		\draw [style=thick] (42.center) to (40.center);
		\draw [style=thick] (43.center) to (42.center);
		\draw [style=thick] (43.center) to (40.center);
		\draw [style=thick]  (41.center) to (39.center);
		\draw [style=thick, bend right=90, looseness=1.25] (50.center) to (51.center);
		\draw [style=thick, bend right=90, looseness=1.50] (32.center) to (52.center);
		\draw [style=thick] (49.center) to (52.center);
	\end{pgfonlayer}
\end{tikzpicture}}
\endpgfgraphicnamed}
\end{center}

\vspace{2.3cm}
\noindent
The morphism corresponding to this diagram is as follows:
\[
(\epsilon_{{\cal P}(U)} \otimes 1_{\{\star\}}) \circ
(\overline{\semantics{d}} \otimes \mu_{{\cal P}(U)} \otimes 1_{\{\star\}}) \circ
(\delta_{{\cal P}(U)} \otimes \mu_{{\cal P}(U)} \otimes 1_{\{\star\}})
(\overline{\semantics{n}} \otimes  \overline{\semantics{n}} \otimes \overline{\semantics{vp}})\colon
\{\star\} \relto \{\star\} 
 \]
Following  almost identical  calculation steps as in the proof of Theorem  \ref{thm:truth-rel} for sentences with a quantified subject,  the above calculates to:
 \[
({\semantics{vp}} \cap {\semantics{n}}) \cap {\semantics{n}} \in \semantics{d}(\semantics{n})
 \]
evidently equivalent to 
  \[
{\semantics{vp}} \cap {\semantics{n}} \in \semantics{d}(\semantics{n})
 \]
 which as proved in Theorem  \ref{thm:truth-rel},  is the result of calculating the relation corresponding to $\star \semantics{\overline{d \ n \ vp}} \star $. 
 
 Diagrammatically, we have the following equality of the simplified forms of the diagrams of these sentences:
 
 \medskip
\hspace{3cm}\begin{minipage}{12cm}
\begin{minipage}{4cm}
{%
\beginpgfgraphicnamed{Q-Sbj-Norm}
\begin{tikzpicture}[scale=0.7]
	\begin{pgfonlayer}{nodelayer}
		\node [style=none] (0) at (-2.5, 3.75) {};
		\node [style=none] (1) at (-1.75, 5.25) {};
		\node [style=none] (2) at (-7, 3.25) {};
		\node [style=none] (3) at (-1, 3.75) {};
		\node [style=none] (4) at (-1, 2.75) {$S$};
		\node [style=none] (5) at (-2.5, 2.75) {$W$};
		\node [style=none] (6) at (-7, 3.75) {};
		\node [style=none] (7) at (-7, 5.25) {};
		\node [style=none] (8) at (-6, 3.75) {};
		\node [style=none] (9) at (-1, 3.25) {};
		\node [style=none] (10) at (0, 3.75) {};
		\node [style=none] (11) at (-3.5, 3.75) {};
		\node [style=none] (12) at (-7, 6) {$\overline{\semantics{n}}$};
		\node [style=none] (13) at (-2, 6) {$\overline{\semantics{vp}}$};
		\node [style=none] (14) at (-8, 3.75) {};
		\node [style=none] (15) at (-2.5, 3.25) {};
		\node [style=none] (16) at (-8.5, -5.25) {$W$};
		\node [style=none] (17) at (-5.5, -0.5) {};
		\node [style=none] (18) at (-8.5, -0.5) {};
		\node [style=none] (19) at (-4, -4.75) {};
		\node [draw, thick, style=none, minimum size=0.2 cm, circle, fill=white] (20) at (-4, -1.75) {};
		\node [style=none] (21) at (-9.5, -2) {};
		\node [style=none] (22) at (-8.5, -2) {};
		\node [style=none] (23) at (-7.5, -0.5) {};
		\node [style=none] (24) at (-7, 1.5) {};
		\node [style=none] (25) at (-4, -1.75) {};
		\node [style=none] (26) at (-2.5, -0.5) {};
		\node [style=none] (27) at (-8.5, -1.25) {$\overline{\semantics{d}}$};
		\node [style=none] (28) at (-4, -5.75) {};
		\node [style=none] (29) at (-7, 2.75) {$W$};
		\node [style=none] (30) at (-9.5, -0.5) {};
		\node [style=none] (31) at (1.5, 6.75) {};
		\node [style=none] (32) at (-7, 2.25) {};
		\node [style=none] (33) at (-4, -1.75) {};
		\node [style=none] (34) at (-5.5, 0.25) {};
		\node [style=none] (35) at (-2.5, -0.5) {};
		\node [style=none] (36) at (-7.5, -2) {};
		\node [style=none] (37) at (-4, -5.25) {$W$};
		\node [style=none] (38) at (-8.5, 0.25) {};
		\node [style=none] (40) at (-2.5, 2.25) {};
		\node [style=none] (41) at (-8.5, -4.75) {};
		\node [draw, thick, style=none, minimum size=0.2 cm, circle, fill=black] (42) at (-7, 1.5) {};
		\node [style=none] (43) at (-8.5, -5.75) {};
	\end{pgfonlayer}
	\begin{pgfonlayer}{edgelayer}
		\draw  [style=thick](14.center) to (8.center);
		\draw  [style=thick](7.center) to (14.center);
		\draw  [style=thick](7.center) to (8.center);
		\draw  [style=thick](11.center) to (10.center);
		\draw  [style=thick](1.center) to (11.center);
		\draw  [style=thick](1.center) to (10.center);
		\draw  [style=thick](6.center) to (2.center);
		\draw  [style=thick](0.center) to (15.center);
		\draw  [style=thick](3.center) to (9.center);
		\draw [style=thick] (30.center) to (23.center);
		\draw [style=thick] (23.center) to (36.center);
		\draw [style=thick] (36.center) to (21.center);
		\draw [style=thick] (21.center) to (30.center);
		\draw [style=thick] (38.center) to (18.center);
		\draw [style=thick] (22.center) to (41.center);
		\draw [style=thick, in=90, out=270] (32.center) to (24.center);
		\draw [style=thick, bend left=90, looseness=1.25] (38.center) to (34.center);
		\draw [style=thick, bend right=75] (43.center) to (28.center);
		\draw [style=thick] (17.center) to (34.center);
		\draw [style=thick, bend right=90, looseness=1.25] (17.center) to (26.center);
		\draw [style=thick] (40.center) to (35.center);
		\draw [style=thick] (20.center) to (19.center);
	\end{pgfonlayer}
\end{tikzpicture}}
\endpgfgraphicnamed}
\end{minipage} 
\ $=$ \ \qquad
\hspace{3cm}\begin{minipage}{5cm}
 {%
\beginpgfgraphicnamed{Q-Sbj-Living}
\begin{tikzpicture}[scale=0.7]
	\path [use as bounding box] (-1.5,0) rectangle (-1.5,0.75);
	\begin{pgfonlayer}{nodelayer}
		\node [style=none] (0) at (-0.25, 3.75) {};
		\node [style=none] (1) at (0.5, 5.25) {};
		\node [style=none] (2) at (-7, 3.25) {};
		\node [style=none] (3) at (1.25, 3.75) {};
		\node [style=none] (4) at (1.25, 2.75) {$S$};
		\node [style=none] (5) at (-0.25, 2.75) {$W$};
		\node [style=none] (6) at (-7, 3.75) {};
		\node [style=none] (7) at (-7, 5.25) {};
		\node [style=none] (8) at (-6, 3.75) {};
		\node [style=none] (9) at (1.25, 3.25) {};
		\node [style=none] (10) at (2.25, 3.75) {};
		\node [style=none] (11) at (-1.25, 3.75) {};
		\node [style=none] (12) at (-7, 6) {$\overline{\semantics{n}}$};
		\node [style=none] (13) at (0.5, 6) {$\overline{\semantics{vp}}$};
		\node [style=none] (14) at (-8, 3.75) {};
		\node [style=none] (15) at (-0.25, 3.25) {};
		\node [style=none] (16) at (-8.5, -5.25) {$W$};
		\node [style=none] (17) at (-8.5, -0.5) {};
		\node [style=none] (18) at (-3.5, -4.75) {};
		\node [draw, thick, style=none, minimum size=0.2 cm, circle, fill=white] (19) at (-3.5, -1.75) {};
		\node [style=none] (20) at (-9.5, -2) {};
		\node [style=none] (21) at (-8.5, -2) {};
		\node [style=none] (22) at (-7.5, -0.5) {};
		\node [style=none] (23) at (-7, 1.5) {};
		\node [style=none] (24) at (-3.5, -1.75) {};
		\node [style=none] (25) at (-1.5, 0.75) {};
		\node [style=none] (26) at (-8.5, -1.25) {$\overline{\semantics{d}}$};
		\node [style=none] (27) at (-3.5, -5.75) {};
		\node [style=none] (28) at (-7, 2.75) {$W$};
		\node [style=none] (29) at (-9.5, -0.5) {};
		\node [style=none] (30) at (-7, 2.25) {};
		\node [style=none] (31) at (-3.5, -1.75) {};
		\node [style=none] (32) at (-5.5, 0.25) {};
		\node [style=none] (33) at (-7.5, -2) {};
		\node [style=none] (34) at (-3.5, -5.25) {$W$};
		\node [style=none] (35) at (-8.5, 0.25) {};
		\node [style=none] (36) at (-8.5, -4.75) {};
		\node [draw, thick, style=none, minimum size=0.2 cm, circle, fill=black] (37) at (-7, 1.5) {};
		\node [style=none] (38) at (-8.5, -5.75) {};
		\node [style=none] (39) at (-3, 3.25) {};
		\node [style=none] (40) at (-2, 3.75) {};
		\node [style=none] (41) at (-3, 3.75) {};
		\node [style=none] (42) at (-4, 3.75) {};
		\node [style=none] (43) at (-3, 5.25) {};
		\node [style=none] (44) at (-3, 2.75) {$W$};
		\node [style=none] (45) at (-3, 6) {$\overline{\semantics{n}}$};
		\node [style=none] (46) at (-0.25, 2) {};
		\node [style=none] (47) at (-1.5, 0.75) {};
		\node [style=none] (48) at (-1.5, 0.75) {};
		\node [draw, thick, style=none, minimum size=0.2 cm, circle, fill=white] (49) at (-1.5, 0.75) {};
		\node [style=none] (50) at (-3, 2) {};
		\node [style=none] (51) at (-0.25, 2) {};
		\node [style=none] (52) at (-1.5, 0.25) {};
	\end{pgfonlayer}
	\begin{pgfonlayer}{edgelayer}
		\draw [style=thick]  (14.center) to (8.center);
		\draw  [style=thick]  (7.center) to (14.center);
		\draw [style=thick]  (7.center) to (8.center);
		\draw [style=thick]  (11.center) to (10.center);
		\draw [style=thick]  (1.center) to (11.center);
		\draw [style=thick]  (1.center) to (10.center);
		\draw [style=thick]  (6.center) to (2.center);
		\draw [style=thick]  (0.center) to (15.center);
		\draw [style=thick]  (3.center) to (9.center);
		\draw [style=thick] (29.center) to (22.center);
		\draw [style=thick] (22.center) to (33.center);
		\draw [style=thick] (33.center) to (20.center);
		\draw [style=thick] (20.center) to (29.center);
		\draw [style=thick] (35.center) to (17.center);
		\draw [style=thick] (21.center) to (36.center);
		\draw [style=thick, in=90, out=270] (30.center) to (23.center);
		\draw [style=thick, bend left=90, looseness=1.25] (35.center) to (32.center);
		\draw [style=thick, bend right=75] (38.center) to (27.center);
		\draw [style=thick] (19.center) to (18.center);
		\draw [style=thick] (42.center) to (40.center);
		\draw [style=thick] (43.center) to (42.center);
		\draw [style=thick] (43.center) to (40.center);
		\draw [style=thick]  (41.center) to (39.center);
		\draw [style=thick, bend right=90, looseness=1.25] (50.center) to (51.center);
		\draw [style=thick, bend right=90, looseness=1.50] (32.center) to (52.center);
		\draw [style=thick] (49.center) to (52.center);
	\end{pgfonlayer}
\end{tikzpicture}}
\endpgfgraphicnamed}
\end{minipage}
\end{minipage}

 \medskip
 For the second case, consider the diagram corresponding to the relation $\star \semantics{\overline{np  \ v\  d\ n \ \text{\bf who \ are}  \ n}} \star$:

 \begin{center}
{%
\beginpgfgraphicnamed{conserv-obj}
\begin{tikzpicture}[scale=0.7]
	\begin{pgfonlayer}{nodelayer}
		\node [style=none] (0) at (18.25, 5.5) {$\overline{\semantics{n}}$};
		\node [style=none] (1) at (18.25, 3) {};
		\node [style=none] (2) at (-19.5, 2) {$W$};
		\node [style=none] (3) at (-13, 3) {};
		\node [style=none] (4) at (-15.25, 2) {$S$};
		\node [style=none] (5) at (18.25, 4.75) {};
		\node [style=none] (6) at (-14, 3) {};
		\node [style=none] (7) at (18.25, 2.5) {};
		\node [style=none] (8) at (19.25, 3) {};
		\node [style=none] (9) at (18.25, 2) {$W$};
		\node [style=none] (10) at (-15.25, 4.75) {};
		\node [style=none] (11) at (-14, 2) {$W$};
		\node [style=none] (12) at (-17.5, 3) {};
		\node [style=none] (13) at (17.25, 3) {};
		\node [style=none] (14) at (-19.5, 2.5) {};
		\node [style=none] (15) at (-19.5, 5.5) {$\overline{\semantics{np}}$};
		\node [style=none] (16) at (-20.5, 3) {};
		\node [style=none] (17) at (-15.25, 2.5) {};
		\node [style=none] (18) at (-19.5, 3) {};
		\node [style=none] (19) at (-19.5, 4.75) {};
		\node [style=none] (20) at (-18.5, 3) {};
		\node [style=none] (21) at (-16.5, 2) {$W$};
		\node [style=none] (22) at (-14, 2.5) {};
		\node [style=none] (23) at (-15.25, 3) {};
		\node [style=none] (24) at (-16.5, 3) {};
		\node [style=none] (25) at (-16.5, 2.5) {};
		\node [style=none] (26) at (-15.25, 5.5) {$\overline{\semantics{v}}$};
		\node [style=none] (27) at (-19.5, 1.5) {};
		\node [style=none] (28) at (-16.5, 1.5) {};
		\node [style=none] (29) at (-14, 1.5) {};
		\node [style=none] (30) at (-14, -2.75) {};
		\node [style=none] (31) at (-14, -3.75) {};
		\node [style=none] (32) at (-12.25, -3.75) {};
		\node [style=none] (33) at (13.75, 1.5) {};
		\node [style=none] (34) at (13.75, 0.5) {};
		\node [style=none] (35) at (-8.5, -3.75) {};
		\node [style=none] (36) at (8.5, -3.75) {};
		\node [style=none] (37) at (-14, -3.25) {$W$};
		\node [style=none] (38) at (-8.5, -2.75) {};
		\node [draw, thick, style=none, minimum size=0.2 cm, circle, fill=black] (39) at (-8.5, -2) {};
		\node [style=none] (40) at (-10, 1.75) {$\overline{\semantics{d}}$};
		\node [style=none] (41) at (-12.25, -2.75) {};
		\node [style=none] (42) at (-11, 2.5) {};
		\node [style=none] (43) at (-10, 0.25) {};
		\node [style=none] (44) at (-1.5, 2) {};
		\node [style=none] (45) at (-8.5, -2) {};
		\node [style=none] (46) at (-7, -0.75) {};
		\node [style=none] (47) at (-12.25, 4.75) {};
		\node [style=none] (48) at (-10, 2.5) {};
		\node [style=none] (49) at (-9, 1) {};
		\node [style=none] (50) at (-12.25, -3.25) {$W$};
		\node [style=none] (51) at (-11, 1) {};
		\node [style=none] (52) at (-1.5, 4.5) {};
		\node [style=none] (53) at (-10, -0.25) {$W$};
		\node [style=none] (54) at (-10, 0.25) {};
		\node [style=none] (55) at (-10, 3.75) {};
		\node [style=none] (56) at (-7, -0.75) {};
		\node [style=none] (57) at (-10, 1) {};
		\node [style=none] (58) at (-10, -0.75) {};
		\node [style=none] (59) at (-10, 3.75) {};
		\node [style=none] (60) at (-9, 2.5) {};
		\node [style=none] (61) at (-8.5, -3.25) {$W$};
		\node [draw, thick, style=none, minimum size=0.2 cm, circle, fill=white] (62) at (-5.75, 2.5) {};
		\node [style=none] (63) at (-7, 1) {};
		\node [style=none] (64) at (-4.5, 1) {};
		\node [style=none] (65) at (-1.5, -0.5) {};
		\node [style=none] (66) at (-4.5, -0.5) {};
		\node [style=none] (67) at (-5.75, 3.75) {};
		\node [style=none] (68) at (-5.75, 2.5) {};
		\node [style=none] (69) at (-5.75, 3.75) {};
		\node [style=none] (70) at (8.5, 5.5) {$\overline{\semantics{who}}$};
		\node [style=none] (71) at (11.75, 3.5) {};
		\node [style=none] (72) at (11.75, 0.5) {};
		\node [fill=white, draw, thick, circle, minimum size=0.2 cm, style=none] (73) at (8.5, 2.25) {};
		\node [style=none] (74) at (8.5, 2.25) {};
		\node [style=none] (75) at (7.25, 3.25) {};
		\node [style=none] (76) at (5, 0.5) {};
		\node [draw, thick, style=none, minimum size=0.2 cm, circle, fill=black] (77) at (8.5, 2.25) {};
		\node [style=none] (78) at (9.5, 3.5) {};
		\node [style=none] (79) at (11.75, 0.5) {};
		\node [style=none] (80) at (8.5, -3.25) {$W$};
		\node [style=none] (81) at (11.75, 1.5) {};
		\node [style=none] (82) at (8.5, -2.75) {};
		\node [style=none] (83) at (5, 3.25) {};
		\node [style=none] (84) at (5, -0.5) {};
		\node [style=none] (85) at (5, 0) {$W$};
		\node [style=none] (86) at (11.75, 1) {$W$};
		\node [style=none] (87) at (1.5, -0.5) {};
		\node [style=none] (88) at (1.5, 4.75) {};
		\node [style=none] (89) at (1.5, 0.5) {};
		\node [style=none] (90) at (0.5, 3) {};
		\node [style=none] (91) at (1.5, 3) {};
		\node [style=none] (92) at (1.5, 5.5) {$\overline{\semantics{n}}$};
		\node [style=none] (93) at (2.5, 3) {};
		\node [style=none] (94) at (1.5, 0) {$W$};
		\node [style=none] (95) at (13.75, 3) {};
		\node [style=none] (96) at (16, 3) {};
		\node [style=none] (97) at (14.75, 5.5) {$\overline{\semantics{are}}$};
		\node [style=none] (98) at (13.75, 2) {$W$};
		\node [style=none] (99) at (16, 2) {$W$};
		\node [style=none] (100) at (18.25, 1.5) {};
		\node [style=none] (101) at (16, 1.5) {};
	\end{pgfonlayer}
	\begin{pgfonlayer}{edgelayer}
		\draw [style=thick] (16.center) to (20.center);
		\draw  [style=thick](19.center) to (16.center);
		\draw [style=thick] (19.center) to (20.center);
		\draw [style=thick] (12.center) to (3.center);
		\draw [style=thick] (10.center) to (12.center);
		\draw [style=thick] (10.center) to (3.center);
		\draw [style=thick] (13.center) to (8.center);
		\draw [style=thick] (5.center) to (13.center);
		\draw [style=thick] (5.center) to (8.center);
		\draw [style=thick] (18.center) to (14.center);
		\draw [style=thick] (24.center) to (25.center);
		\draw [style=thick]  (23.center) to (17.center);
		\draw [style=thick] (6.center) to (22.center);
		\draw [style=thick] (1.center) to (7.center);
		\draw [style=thick, bend right=90, looseness=1.25] (27.center) to (28.center);
		\draw [style=thick](29.center) to (30.center);
		\draw [style=thick, bend right=90, looseness=1.25] (31.center) to (32.center);
		\draw [style=thick](33.center) to (34.center);
		\draw [style=thick](42.center) to (60.center);
		\draw [style=thick](60.center) to (49.center);
		\draw [style=thick](49.center) to (51.center);
		\draw [style=thick](51.center) to (42.center);
		\draw [style=thick](45.center) to (38.center);
		\draw [style=thick](57.center) to (43.center);
		\draw [style=thick](59.center) to (48.center);
		\draw [style=thick, in=270, out=-90, looseness=1.25] (58.center) to (56.center);
		\draw [style=thick](52.center) to (44.center);
		\draw [style=thick, bend left=90] (47.center) to (52.center);
		\draw [style=thick](47.center) to (41.center);
		\draw [style=thick, in=90, out=90, looseness=2.00] (63.center) to (64.center);
		\draw [style=thick, in=270, out=-90, looseness=1.25] (66.center) to (65.center);
		\draw [style=thick](69.center) to (68.center);
		\draw [style=thick, bend left=90, looseness=1.50] (59.center) to (67.center);
		\draw [style=thick] (63.center) to (46.center);
		\draw [style=thick] (64.center) to (66.center);
		\draw [style=thick] (44.center) to (65.center);
		\draw [style=thick] (71.center) to (81.center);
		\draw [style=thick, bend left=90, looseness=1.75] (83.center) to (75.center);
		\draw [style=thick, bend left=90, looseness=1.50] (78.center) to (71.center);
		\draw [style=thick, bend right=90, looseness=1.75] (75.center) to (78.center);
		\draw [style=thick] (83.center) to (76.center);
		\draw [style=thick] (74.center) to (82.center);
		\draw [style=thick, bend right=90, looseness=1.75] (72.center) to (34.center);
		\draw [style=thick] (90.center) to (93.center);
		\draw [style=thick] (88.center) to (90.center);
		\draw [style=thick] (88.center) to (93.center);
		\draw [style=thick] (91.center) to (89.center);
		\draw [style=thick, bend right=90, looseness=1.25] (87.center) to (84.center);
		\draw [style=thick, bend right=75, looseness=0.75] (35.center) to (36.center);
		\draw [style=thick, bend left=90, looseness=1.75] (95.center) to (96.center);
		\draw [style=thick, bend right=90, looseness=1.50] (101.center) to (100.center);
	\end{pgfonlayer}
\end{tikzpicture}}
\endpgfgraphicnamed}
\end{center}
 
 It simplifies to the following diagram:
 
 \begin{center}
 \begin{minipage}{5cm}
 {%
\beginpgfgraphicnamed{Q-Obj-Living}
\begin{tikzpicture}[scale=0.7]
	\begin{pgfonlayer}{nodelayer}
		\node [style=none] (0) at (1, 5.25) {};
		\node [style=none] (1) at (-7.75, 3.25) {};
		\node [style=none] (2) at (1, 3.75) {};
		\node [style=none] (3) at (1, 3.25) {};
		\node [style=none] (4) at (-7.75, 3.75) {};
		\node [style=none] (5) at (-7.75, 5.5) {};
		\node [style=none] (6) at (-6.75, 3.75) {};
		\node [style=none] (7) at (1, 2.75) {$W$};
		\node [style=none] (8) at (-7.75, 2.75) {$W$};
		\node [style=none] (9) at (1, 6) {$\overline{\semantics{n}}$};
		\node [style=none] (10) at (-7.75, 6) {$\overline{\semantics{np}}$};
		\node [style=none] (11) at (2, 3.75) {};
		\node [style=none] (12) at (0, 3.75) {};
		\node [style=none] (13) at (-8.75, 3.75) {};
		\node [style=none] (14) at (-7.75, 2.25) {};
		\node [style=none] (15) at (-4.5, 2.25) {};
		\node [draw, thick, style=none, minimum size=0.2 cm, circle, fill=white] (16) at (2.5, -0.25) {};
		\node [style=none] (17) at (4, -5) {$W$};
		\node [style=none] (18) at (1.25, -2) {$W$};
		\node [style=none] (19) at (-0.5, -5.75) {};
		\node [style=none] (20) at (2.5, 1.25) {};
		\node [style=none] (21) at (4, -4.5) {};
		\node [style=none] (22) at (-2, -2.5) {};
		\node [style=none] (23) at (4, -3) {$\overline{\semantics{d}}$};
		\node [style=none] (24) at (-2, -0.5) {};
		\node [style=none] (25) at (5, -3.75) {};
		\node [style=none] (26) at (-2, -1) {$W$};
		\node [style=none] (27) at (4, -1.5) {};
		\node [style=none] (28) at (4, -5.5) {};
		\node [style=none] (29) at (3, -2.25) {};
		\node [style=none] (30) at (-2, 2.25) {};
		\node [style=none] (31) at (4, -3.75) {};
		\node [style=none] (32) at (3, -3.75) {};
		\node [style=none] (33) at (1.25, -1.5) {};
		\node [style=none] (34) at (-0.5, -5.25) {$W$};
		\node [style=none] (35) at (1.25, -2.5) {};
		\node [style=none] (36) at (-0.5, -4.75) {};
		\node [style=none] (37) at (5, -2.25) {};
		\node [style=none] (38) at (4, -2.25) {};
		\node [style=none] (39) at (-1, 3.75) {};
		\node [style=none] (40) at (-3.25, 3.25) {};
		\node [style=none] (41) at (-3.25, 2.75) {$S$};
		\node [style=none] (42) at (-5.5, 3.75) {};
		\node [style=none] (43) at (-2, 3.25) {};
		\node [style=none] (44) at (-3.25, 5.5) {};
		\node [style=none] (45) at (-2, 3.75) {};
		\node [style=none] (46) at (-4.5, 3.75) {};
		\node [style=none] (47) at (-4.5, 3.25) {};
		\node [style=none] (48) at (-2, 2.75) {$W$};
		\node [style=none] (49) at (-3.25, 3.75) {};
		\node [style=none] (50) at (-4.5, 2.75) {$W$};
		\node [draw, thick, style=none, minimum size=0.2 cm, circle, fill=black] (51) at (2.5, -0.25) {};
		\node [style=none] (52) at (-0.5, -3.75) {};
		\node [draw, thick, style=none, minimum size=0.2 cm, circle, fill=white] (53) at (-0.5, -3.75) {};
		\node [style=none] (54) at (-3.25, 6.25) {$\overline{\semantics{v}}$};
		\node [style=none] (55) at (4, 3.75) {};
		\node [style=none] (56) at (4, 5.25) {};
		\node [style=none] (57) at (5, 3.75) {};
		\node [style=none] (58) at (3, 3.75) {};
		\node [style=none] (59) at (4, 6) {$\overline{\semantics{n}}$};
		\node [style=none] (60) at (4, 3.25) {};
		\node [style=none] (61) at (4, 2.75) {$W$};
		\node [style=none] (62) at (-2, -1.5) {};
		\node [draw, thick, style=none, minimum size=0.2 cm, circle, fill=white] (63) at (2.5, 1.25) {};
		\node [style=none] (64) at (1, 2.25) {};
		\node [style=none] (65) at (4, 2.25) {};
	\end{pgfonlayer}
	\begin{pgfonlayer}{edgelayer}
		\draw  [style=thick] (13.center) to (6.center);
		\draw [style=thick] (5.center) to (13.center);
		\draw [style=thick] (5.center) to (6.center);
		\draw [style=thick] (12.center) to (11.center);
		\draw [style=thick](0.center) to (12.center);
		\draw [style=thick] (0.center) to (11.center);
		\draw [style=thick] (4.center) to (1.center);
		\draw [style=thick] (2.center) to (3.center);
		\draw [style=thick, bend right=90, looseness=1.25] (14.center) to (15.center);
		\draw [style=thick](29.center) to (37.center);
		\draw [style=thick](37.center) to (25.center);
		\draw [style=thick](25.center) to (32.center);
		\draw [style=thick](32.center) to (29.center);
		\draw [style=thick, in=270, out=90] (16.center) to (20.center);
		\draw [style=thick] (31.center) to (21.center);
		\draw [style=thick, in=270, out=-90, looseness=1.25] (22.center) to (35.center);
		\draw [style=thick, bend left=90, looseness=1.50] (33.center) to (27.center);
		\draw [style=thick](30.center) to (24.center);
		\draw [style=thick, bend right=90, looseness=1.25] (19.center) to (28.center);
		\draw [style=thick] (27.center) to (38.center);
		\draw [style=thick] (42.center) to (39.center);
		\draw [style=thick] (44.center) to (42.center);
		\draw [style=thick](44.center) to (39.center);
		\draw [style=thick](46.center) to (47.center);
		\draw [style=thick] (49.center) to (40.center);
		\draw [style=thick] (45.center) to (43.center);
		\draw [style=thick] (52.center) to (36.center);
		\draw [style=thick] (58.center) to (57.center);
		\draw [style=thick] (56.center) to (58.center);
		\draw [style=thick] (56.center) to (57.center);
		\draw [style=thick] (55.center) to (60.center);
		\draw [style=thick] (62.center) to (22.center);
		\draw [style=thick, bend right=90] (64.center) to (65.center);
	\end{pgfonlayer}
\end{tikzpicture}}
\endpgfgraphicnamed}
\end{minipage}
\end{center}

 The morphism corresponding to this diagram is as follows:
 \begin{align*}
 (1_{\{\star\}} \otimes \epsilon_{{\cal P}(U)}) \circ
 (1_{\{\star\}} \otimes \mu_{{\cal P}(U)} \otimes \overline{\semantics{d}} ) \circ
 (1_{\{\star\}} \otimes 1_{{\cal P}(U)}  \otimes \delta_{{\cal P}(U)} ) \circ
 (\epsilon_{{\cal P}(U)}  \otimes 1_{\{\star\}} \otimes 1_{{\cal P}(U)}  \otimes \mu_{{\cal P}(U)}) &\\
 (\overline{\semantics{np}} \otimes \overline{\semantics{v}} \otimes \overline{\semantics{n}} \otimes \overline{\semantics{n}}) \colon
 \{\star\} \relto \{\star\} &
 \end{align*}
 Similar to the previous case, following  almost identical  calculation steps as in the proof of Theorem  \ref{thm:truth-rel} for sentences with a quantified object,  the above calculates to:
 \[
 \semantics v (\semantics{np}) \cap (\semantics n \cap \semantics n) \in \semantics d (\semantics n)
 \]
which is equivalent to 
 \[
 \semantics v (\semantics{np}) \cap \semantics n  \in \semantics d (\semantics n)
 \]
  which as proved in Theorem  \ref{thm:truth-rel},  is the result of calculating the relation corresponding to  $\star \semantics{\overline{np \ v \ d \ n}} \star $.
  
 Diagrammatically, we have the following equality of the simplified forms of the diagrams of these sentences:
 
 \medskip
\begin{center}
\begin{minipage}{5cm}
{%
\beginpgfgraphicnamed{Q-Obj-Norm}
\begin{tikzpicture}[scale=0.7]
	\begin{pgfonlayer}{nodelayer}
		\node [style=none] (0) at (2, 4) {};
		\node [style=none] (1) at (-7.75, 2) {};
		\node [style=none] (2) at (2, 2.5) {};
		\node [style=none] (3) at (2, 2) {};
		\node [style=none] (4) at (-7.75, 2.5) {};
		\node [style=none] (5) at (-7.75, 4.25) {};
		\node [style=none] (6) at (-6.75, 2.5) {};
		\node [style=none] (7) at (2, 1.5) {$W$};
		\node [style=none] (8) at (-7.75, 1.5) {$W$};
		\node [style=none] (9) at (2, 4.75) {$\overline{\semantics{n}}$};
		\node [style=none] (10) at (-7.75, 4.75) {$\overline{\semantics{np}}$};
		\node [style=none] (11) at (3, 2.5) {};
		\node [style=none] (12) at (1, 2.5) {};
		\node [style=none] (13) at (-8.75, 2.5) {};
		\node [style=none] (14) at (-7.75, 1) {};
		\node [style=none] (15) at (-4.5, 1) {};
		\node [draw, style=thick, minimum size=0.2 cm, circle, fill=white] (16) at (2, 0.25) {};
		\node [style=none] (17) at (3.5, -4.5) {$W$};
		\node [style=none] (18) at (0.75, -1.5) {$W$};
		\node [style=none] (19) at (-0.75, -5.25) {};
		\node [style=none] (20) at (2, 1) {};
		\node [style=none] (21) at (3.5, -4) {};
		\node [style=none] (22) at (-2, -2) {};
		\node [style=none] (23) at (3.5, -2.5) {$\overline{\semantics{d}}$};
		\node [style=none] (24) at (-2, -1) {};
		\node [style=none] (25) at (4.5, -3.25) {};
		\node [style=none] (26) at (-2, -1.5) {$W$};
		\node [style=none] (27) at (3.5, -1) {};
		\node [style=none] (28) at (3.5, -5) {};
		\node [style=none] (29) at (2.5, -1.75) {};
		\node [style=none] (30) at (-2, 1) {};
		\node [style=none] (31) at (3.5, -3.25) {};
		\node [style=none] (32) at (2.5, -3.25) {};
		\node [style=none] (33) at (0.75, -1) {};
		\node [style=none] (34) at (-0.75, -4.75) {$W$};
		\node [style=none] (35) at (0.75, -2) {};
		\node [style=none] (36) at (-0.75, -4.25) {};
		\node [style=none] (37) at (4.5, -1.75) {};
		\node [style=none] (38) at (3.5, -1.75) {};
		\node [style=none] (39) at (-1, 2.5) {};
		\node [style=none] (40) at (-3.25, 2) {};
		\node [style=none] (41) at (-3.25, 1.5) {$S$};
		\node [style=none] (42) at (-5.5, 2.5) {};
		\node [style=none] (43) at (-2, 2) {};
		\node [style=none] (44) at (-3.25, 4.25) {};
		\node [style=none] (45) at (-2, 2.5) {};
		\node [style=none] (46) at (-4.5, 2.5) {};
		\node [style=none] (47) at (-4.5, 2) {};
		\node [style=none] (48) at (-2, 1.5) {$W$};
		\node [style=none] (49) at (-3.25, 2.5) {};
		\node [style=none] (50) at (-4.5, 1.5) {$W$};
		\node [draw, style=thick, minimum size=0.1 cm, circle, fill=black]  (51) at (2, 0.25) {};
		\node [style=none] (52) at (-0.75, -3) {};
		\node [draw, none, style=thick, minimum size=0.2 cm, circle, fill=white] (53) at (-0.75, -3) {};
		\node [style=none] (54) at (-3.25, 5) {$\overline{\semantics{v}}$};
	\end{pgfonlayer}
	\begin{pgfonlayer}{edgelayer}
		\draw  [style=thick]  (13.center) to (6.center);
		\draw  [style=thick]  (5.center) to (13.center);
		\draw   [style=thick] (5.center) to (6.center);
		\draw  [style=thick] (12.center) to (11.center);
		\draw  [style=thick]  (0.center) to (12.center);
		\draw  [style=thick]  (0.center) to (11.center);
		\draw  [style=thick]  (4.center) to (1.center);
		\draw  [style=thick]  (2.center) to (3.center);
		\draw [style=thick, bend right=90, looseness=1.25] (14.center) to (15.center);
		\draw  [style=thick]  (29.center) to (37.center);
		\draw  [style=thick]  (37.center) to (25.center);
		\draw  [style=thick]  (25.center) to (32.center);
		\draw  [style=thick] (32.center) to (29.center);
		\draw [style= thick, in=270, out=90] (16.center) to (20.center);
		\draw   [style=thick] (31.center) to (21.center);
		\draw [style=thick, in=270, out=-90, looseness=1.25] (22.center) to (35.center);
		\draw [style=thick, bend left=90, looseness=1.50] (33.center) to (27.center);
		\draw  [style=thick] (30.center) to (24.center);
		\draw [style = thick, bend right=90, looseness=1.25] (19.center) to (28.center);
		\draw  [style=thick] (27.center) to (38.center);
		\draw  [style=thick] (42.center) to (39.center);
		\draw  [style=thick] (44.center) to (42.center);
		\draw  [style=thick] (44.center) to (39.center);
		\draw  [style=thick] (46.center) to (47.center);
		\draw  [style=thick] (49.center) to (40.center);
		\draw  [style=thick] (45.center) to (43.center);
		\draw  [style=thick]  (52.center) to (36.center);
	\end{pgfonlayer}
\end{tikzpicture}}
\endpgfgraphicnamed}
\end{minipage} 
\qquad
\ $=$ \ \qquad
\begin{minipage}{5cm}
 {%
\beginpgfgraphicnamed{Q-Obj-Living}
\begin{tikzpicture}[scale=0.7]
	\begin{pgfonlayer}{nodelayer}
		\node [style=none] (0) at (1, 5.25) {};
		\node [style=none] (1) at (-7.75, 3.25) {};
		\node [style=none] (2) at (1, 3.75) {};
		\node [style=none] (3) at (1, 3.25) {};
		\node [style=none] (4) at (-7.75, 3.75) {};
		\node [style=none] (5) at (-7.75, 5.5) {};
		\node [style=none] (6) at (-6.75, 3.75) {};
		\node [style=none] (7) at (1, 2.75) {$W$};
		\node [style=none] (8) at (-7.75, 2.75) {$W$};
		\node [style=none] (9) at (1, 6) {$\overline{\semantics{n}}$};
		\node [style=none] (10) at (-7.75, 6) {$\overline{\semantics{np}}$};
		\node [style=none] (11) at (2, 3.75) {};
		\node [style=none] (12) at (0, 3.75) {};
		\node [style=none] (13) at (-8.75, 3.75) {};
		\node [style=none] (14) at (-7.75, 2.25) {};
		\node [style=none] (15) at (-4.5, 2.25) {};
		\node [draw, thick, style=none, minimum size=0.2 cm, circle, fill=white] (16) at (2.5, -0.25) {};
		\node [style=none] (17) at (4, -5) {$W$};
		\node [style=none] (18) at (1.25, -2) {$W$};
		\node [style=none] (19) at (-0.5, -5.75) {};
		\node [style=none] (20) at (2.5, 1.25) {};
		\node [style=none] (21) at (4, -4.5) {};
		\node [style=none] (22) at (-2, -2.5) {};
		\node [style=none] (23) at (4, -3) {$\overline{\semantics{d}}$};
		\node [style=none] (24) at (-2, -0.5) {};
		\node [style=none] (25) at (5, -3.75) {};
		\node [style=none] (26) at (-2, -1) {$W$};
		\node [style=none] (27) at (4, -1.5) {};
		\node [style=none] (28) at (4, -5.5) {};
		\node [style=none] (29) at (3, -2.25) {};
		\node [style=none] (30) at (-2, 2.25) {};
		\node [style=none] (31) at (4, -3.75) {};
		\node [style=none] (32) at (3, -3.75) {};
		\node [style=none] (33) at (1.25, -1.5) {};
		\node [style=none] (34) at (-0.5, -5.25) {$W$};
		\node [style=none] (35) at (1.25, -2.5) {};
		\node [style=none] (36) at (-0.5, -4.75) {};
		\node [style=none] (37) at (5, -2.25) {};
		\node [style=none] (38) at (4, -2.25) {};
		\node [style=none] (39) at (-1, 3.75) {};
		\node [style=none] (40) at (-3.25, 3.25) {};
		\node [style=none] (41) at (-3.25, 2.75) {$S$};
		\node [style=none] (42) at (-5.5, 3.75) {};
		\node [style=none] (43) at (-2, 3.25) {};
		\node [style=none] (44) at (-3.25, 5.5) {};
		\node [style=none] (45) at (-2, 3.75) {};
		\node [style=none] (46) at (-4.5, 3.75) {};
		\node [style=none] (47) at (-4.5, 3.25) {};
		\node [style=none] (48) at (-2, 2.75) {$W$};
		\node [style=none] (49) at (-3.25, 3.75) {};
		\node [style=none] (50) at (-4.5, 2.75) {$W$};
		\node [draw, thick, style=none, minimum size=0.2 cm, circle, fill=black] (51) at (2.5, -0.25) {};
		\node [style=none] (52) at (-0.5, -3.75) {};
		\node [draw, thick, style=none, minimum size=0.2 cm, circle, fill=white] (53) at (-0.5, -3.75) {};
		\node [style=none] (54) at (-3.25, 6.25) {$\overline{\semantics{v}}$};
		\node [style=none] (55) at (4, 3.75) {};
		\node [style=none] (56) at (4, 5.25) {};
		\node [style=none] (57) at (5, 3.75) {};
		\node [style=none] (58) at (3, 3.75) {};
		\node [style=none] (59) at (4, 6) {$\overline{\semantics{n}}$};
		\node [style=none] (60) at (4, 3.25) {};
		\node [style=none] (61) at (4, 2.75) {$W$};
		\node [style=none] (62) at (-2, -1.5) {};
		\node [draw, thick, style=none, minimum size=0.2 cm, circle, fill=white] (63) at (2.5, 1.25) {};
		\node [style=none] (64) at (1, 2.25) {};
		\node [style=none] (65) at (4, 2.25) {};
	\end{pgfonlayer}
	\begin{pgfonlayer}{edgelayer}
		\draw  [style=thick] (13.center) to (6.center);
		\draw [style=thick] (5.center) to (13.center);
		\draw [style=thick] (5.center) to (6.center);
		\draw [style=thick] (12.center) to (11.center);
		\draw [style=thick](0.center) to (12.center);
		\draw [style=thick] (0.center) to (11.center);
		\draw [style=thick] (4.center) to (1.center);
		\draw [style=thick] (2.center) to (3.center);
		\draw [style=thick, bend right=90, looseness=1.25] (14.center) to (15.center);
		\draw [style=thick](29.center) to (37.center);
		\draw [style=thick](37.center) to (25.center);
		\draw [style=thick](25.center) to (32.center);
		\draw [style=thick](32.center) to (29.center);
		\draw [style=thick, in=270, out=90] (16.center) to (20.center);
		\draw [style=thick] (31.center) to (21.center);
		\draw [style=thick, in=270, out=-90, looseness=1.25] (22.center) to (35.center);
		\draw [style=thick, bend left=90, looseness=1.50] (33.center) to (27.center);
		\draw [style=thick](30.center) to (24.center);
		\draw [style=thick, bend right=90, looseness=1.25] (19.center) to (28.center);
		\draw [style=thick] (27.center) to (38.center);
		\draw [style=thick] (42.center) to (39.center);
		\draw [style=thick] (44.center) to (42.center);
		\draw [style=thick](44.center) to (39.center);
		\draw [style=thick](46.center) to (47.center);
		\draw [style=thick] (49.center) to (40.center);
		\draw [style=thick] (45.center) to (43.center);
		\draw [style=thick] (52.center) to (36.center);
		\draw [style=thick] (58.center) to (57.center);
		\draw [style=thick] (56.center) to (58.center);
		\draw [style=thick] (56.center) to (57.center);
		\draw [style=thick] (55.center) to (60.center);
		\draw [style=thick] (62.center) to (22.center);
		\draw [style=thick, bend right=90] (64.center) to (65.center);
	\end{pgfonlayer}
\end{tikzpicture}}
\endpgfgraphicnamed}
\end{minipage}
\end{center}
\end{proof}

We end this section with three notes. First is that following \cite{BarwiseCooper81}, our $G_Q$ does not generate relative clauses, copulous sentences, and adjectival phrases.  These are, however, needed for stating the equivalences related to the living on property.  In the relational instantiations of Corollary \ref{cor:livingon},   we are implicitly working with an extended form of $G_Q$ that generates these expressions.  In this extended form, interpreting the to-be verb "are"  in its copulous form as  an eta map amounts to stipulating an equality such as    $\semantics{\mbox{a is b}} = \semantics{\text{b}}(\semantics{\text{a}})$.
Secondly, the $\mu$ and $\zeta$  maps of previous work \cite{RelPronMoL,SadrClarkCoecke1,SadrClarkCoecke2} were the coalgebra maps of the Frobenius algebra over $\Rel$. These Frobenius algebras were  defined over the universe of reference $U$.  The above results, however, are over ${\cal P}(U)$ and thus use the $\mu$ and $\zeta$ of the coalgebra maps of our bialgebra  over $\Rel$.  Finally,  since the sentence space of our model is the monoidal unit of $\Rel$, the $\zeta_S$ map becomes  identity;  for simplicity we have dropped it from the type of `who'. 

%%% Local Variables: 
%%% mode: latex
%%% TeX-master: "Quant"
%%% End: 

%\input{Truth-Theory-Vect.tex}

%\input{VectorSpace.tex}

\section{Corpus-Based Instantiation in $\FdVect$}

%%% Local Variables: 
%%% mode: latex
%%% TeX-master: "Quant"
%%% End: 

The relational model  embeds  into  a  vector spaces model using the usual embedding of sets and relations into vector spaces and linear maps.  This embedding sends a set $T$  to a vector space $V_T$  spanned by elements of $T$ and a relation $R \subseteq T \times T$ to a linear map $V_T \to V_T$. By taking $T$ to be ${\cal P(U)}$ for the distinguished space $W$ and by taking it to be $\{\star\}$ for the distinguished space $S$, this  embedding provides us with a  vector space instantiation of the categorical model. This instantiation   imitates the truth theoretic model presented in $\Rel$. We refer to it by the \emph{boolean} $\FdVect$  instantiation. 

\begin{definition}\label{def:concrete-fdect}
The boolean  instantiation of the abstract model of definition \ref{ccc-model} to $\FdVect$  is the tuple    $(\FdVect, V_{{\cal P(U)}},V_{\{\star\}}, \ovl{\semantics{\ }})$, for  $V_{{\cal P(U)}}$  the free vector space generated over the set of subsets of  ${\cal U}$ and $V_{\{\star\}}$ the one dimensional space.   Words  are interpreted by the following linear maps:
\begin{itemize}
\item The   terminals generated by N, NP,  VP, and V rules are given by:
\[ 
\ovl{\semantics{x}} (\star) = \ket{\semantics{x}} 
\]
\item The interpretation of  a terminal $d$ generated by the Det rule  is defined as follows on subsets $A$ of  ${\cal U}$:
\[
\ovl{\semantics{d}} (\ket{A}) = \sum_{B \in \semantics{d}(A)} \ket{B}
\]
\end{itemize}
\end{definition}

The types of these linear maps are as in definition \ref{def:concrete-REL}, since $V_{\{\star\}}  \cong \mathbb{R}$ is the unit of tensor in $\FdVect$. Thus, the   terminals generated by N, NP, and VP rules have type $V_{\{\star\}} \to V_{{\cal P (U)}}$; the type of terminals generated by the V rule is $V_{\{\star\}} \to V_{{\cal P (U)}} \otimes V_{\{\star\}}\otimes  V_{{\cal P (U)}} \cong  V_{{\cal P (U)}} \otimes  V_{{\cal P (U)}}$.  A terminal generated by the Det rule has type $V_{{\cal P (U)}} \to  V_{{\cal P (U)}}$.

Theorem \ref{thm:truth-rel}  is  carried over from $\Rel$ to $\FdVect$  by defining vector representations of  sentences to be true iff they are non-zero elements of $V_{\{\star\}}$.  

\begin{definition}
\label{def:truth-fdvect}
The interpretation of a quantified sentence $s$  is true in $(\FdVect, V_{{\cal P(U)}},V_{\{\star\}}, \ovl{\semantics{\ }})$ \ iff \   $\overline{\semantics{\mbox{s}}}(\star) \neq 0 $. 
\end{definition}

\begin{corollary}
\label{cor:truth-fvect}
$\overline{\semantics{\mbox{s}}}(\star) \neq 0 $  in $(\FdVect, V_{{\cal P(U)}},V_{\{\star\}}, \ovl{\semantics{\ }})$  \ iff  \  $\star \overline{\semantics{\mbox{s}}}  \star$ in $(\Rel, \cal P (U), \{\star\}, \overline{\semantics{\ }})$. 
\end{corollary}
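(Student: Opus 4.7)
The plan is to realise the corollary as an immediate consequence of the fact that the standard embedding $F \colon \Rel \to \FdVect$, which sends a set $T$ to the free real vector space $V_T$ and a relation $R \subseteq A \times B$ to the linear map $F(R)\ket{a} = \sum_{b\,:\,aRb}\ket{b}$, is a strongly monoidal compact-closed functor preserving the bialgebra structures used above. Under this reading, the boolean instantiation of Definition~\ref{def:concrete-fdect} is obtained by applying $F$ pointwise to the relational instantiation of Definition~\ref{def:concrete-REL}.

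I would first verify that $F$ is strongly monoidal and preserves compact closure: $F(A \times B) \cong V_A \otimes V_B$, $F(\{\star\}) \cong \mathbb{R}$, and on basis vectors $F(\epsilon_S)$ and $F(\eta_S)$ reproduce the cup and cap on $V_S$. I would then check, on basis vectors, that $F$ transports the bialgebra $(\delta,\iota,\mu,\zeta)$ on $\mathcal{P}(U)$ in $\Rel$ (Example~3) to the bialgebra on $V_{\mathcal{P}(U)}$ in $\FdVect$ (Example~2); for instance $F(\mu)(\ket{A} \otimes \ket{B}) = \sum_{C\,:\,(A,B)\mu C}\ket{C} = \ket{A\cap B}$, and analogously for $\delta$, $\iota$, $\zeta$. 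A similar basis-level computation shows that $F$ sends each word interpretation in $\Rel$ to its boolean $\FdVect$ counterpart: for example $F(\overline{\semantics{x}})(\star) = \sum_{A\,:\,\star\,\overline{\semantics{x}}\,A}\ket{A} = \ket{\semantics{x}}$ for nominal and verbal terminals, and $F(\overline{\semantics{d}})(\ket{A}) = \sum_{B\in \semantics{d}(A)}\ket{B}$ for determiners.

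Since the sentence meaning of Definition~\ref{composition} is assembled from word interpretations, bialgebra maps on $\mathcal{P}(U)$, and compact-closed cups and caps via tensor and composition, functoriality and strong monoidality of $F$ give $\overline{\semantics{s}}_{\FdVect} = F(\overline{\semantics{s}}_{\Rel})$. Both sides are morphisms whose codomain is the tensor unit, so $\overline{\semantics{s}}_{\FdVect}(\star)$ is a scalar multiple of $\ket{\star}\in V_{\{\star\}}$, and by construction of $F$ this scalar equals the cardinality of $\{\,w \mid \star\,\overline{\semantics{s}}_{\Rel}\,w\,\}$, the only possible witness being $w=\star$.

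The one substantive step is to argue that this count vanishes precisely when the relation fails at $(\star,\star)$. Every matrix entry appearing anywhere along the composite comes from applying $F$ to a relation and hence lies in $\{0,1\}$; any further coefficients produced by composing and tensoring such maps are sums of products of these entries, hence nonnegative integers. Over $\mathbb{R}$, a sum of nonnegative reals is zero iff every summand is zero, so no cancellation can occur. This is the main (and really the only) obstacle I anticipate, and once it is in place it follows immediately that $\overline{\semantics{s}}_{\FdVect}(\star) \neq 0$ iff $\star\,\overline{\semantics{s}}_{\Rel}\,\star$, which is the claim.
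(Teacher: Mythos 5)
Your proposal is correct in substance but takes a genuinely different route from the paper. The paper's own proof simply reruns the four-stage computation of Theorem~\ref{thm:truth-rel} inside $\FdVect$, arriving at the explicit scalars $\sum_{B \in \semantics{d}(\semantics{n})} \langle B \mid \semantics{n} \cap \semantics{vp}\rangle$ and $\sum_{B \in \semantics{d}(\semantics{n})} \langle \semantics{v}(\semantics{np}) \cap \semantics{n} \mid B\rangle$, and observes by inspection that each is nonzero exactly when the corresponding relational membership holds. You instead prove a uniform transfer principle along the embedding $F$: check on basis vectors that every generator of the $\FdVect$ model (cups, caps, bialgebra maps, word interpretations) is the $F$-image of the corresponding generator in $\Rel$, and then argue that the scalar obtained from the composite is a sum of products of nonnegative entries, so it vanishes iff no witness path exists, iff the composite relation fails at $(\star,\star)$. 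This is more work up front but buys generality: it applies to any sentence shape assembled by Definition~\ref{composition}, not just the two forms the paper treats, and it isolates the one genuinely semantic fact (no cancellation over $\mathbb{R}_{\geq 0}$) from the bookkeeping.

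One caveat: the intermediate claim that ``functoriality of $F$ gives $\overline{\semantics{s}}_{\FdVect} = F(\overline{\semantics{s}}_{\Rel})$'' is not literally true, because the $0$--$1$ matrix assignment $\Rel \to \FdVect$ over $\mathbb{R}$ is \emph{not} a functor: relational composition is an idempotent ``there exists,'' while matrix multiplication counts witnesses, so $F(S) \circ F(R)$ can have entries larger than $1$ where $F(S \circ R)$ has entry $1$. The correct statement, and the one your argument actually uses, is that $F(S) \circ F(R)$ has the same \emph{support} as $S \circ R$ (and likewise for tensor), which is exactly what your final nonnegativity paragraph establishes. So the equality of maps should be weakened to equality of supports; with that rephrasing the proof is sound, and the last paragraph is not an optional patch but the load-bearing step.
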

\begin{proof}
The proof goes through the  same cases and steps as in Theorem \ref{thm:truth-rel}. Consider a quantified sentence of the form  `Det N VP'. Its interpretation is obtained by   calculating $\overline{\semantics{\mbox{s}}}(\star)$, defined to be:
\[
\epsilon_{V_{\cal P(U)}}  \circ (\overline{\semantics{d}} \otimes \mu_{V_{\cal P(U)}}) \circ (\delta_{V_{\cal P(U)}}   \otimes \operatorname{id}_{V_{\cal P(U)}}) \circ (\overline{\semantics{n}} \otimes \overline{\semantics{vp}})(\star) \]
The four stages of this computation are as follows
\begin{eqnarray}
(\overline{\semantics{n}} \otimes \overline{\semantics{vp}})(\star) &=& \overline{\semantics{n}}(\star) \otimes \overline{\semantics{vp}}(\star) = \ket{\semantics{n}} \otimes \ket{\semantics{vp}}\\
(\delta_{V_{\cal P(U)}}   \otimes \operatorname{id}_{V_{\cal P(U)}}) (\ket{\semantics{n}} \otimes \ket{\semantics{vp}}) &=& 
\ket{\semantics{n}} \otimes \ket{\semantics{n}} \otimes \ket{\semantics{vp}}
\\
(\overline{\semantics{d}} \otimes \mu_{V_{\cal P(U)}})(\ket{\semantics{n}} \otimes \ket{\semantics{n}} \otimes \ket{\semantics{vp}}) &=& \sum_{B \in \semantics{d}(\semantics{n})} \ket{B} \otimes \ket{\semantics{n} \cap \semantics{vp}}
\\
\epsilon_{V_{\cal P(U)}}\left(\sum_{B \in \semantics{d}(\semantics{n})} \ket{B} \otimes \ket{\semantics{n} \cap \semantics{vp}}\right) &=&  \sum_{B \in \semantics{d}(\semantics{n})} \langle B \mid {\semantics{n} \cap \semantics{vp}} \rangle
\end{eqnarray}
The interpretation of a sentence with a quantified object `NP V Det N' is computed similarly, resulting in the following expression:
\[
 \sum_{B \in \semantics{d}(\semantics{n})} \langle \semantics v (\semantics{np}) \cap \semantics n \mid  B \rangle
\]
The result of the first case is non zero iff there is a subset $B \in \semantics{d}(\semantics{n})$ that is equal to ${\semantics{n} \cap \semantics{vp}}$. The result of the second case is non zero  iff there is a subset $B \in \semantics{d}(\semantics{n})$ that is equal to $\semantics v (\semantics{np}) \cap \semantics n$. These are respectively equivalent to their corresponding cases in $\star \overline{\semantics{\mbox{s}}}  \star$, as computed in the proof of theorem \ref{thm:truth-rel}. 
\end{proof}

A  corpus-based distributional vector space instantiation of the  model  is obtained via a  construction similar to the  above, but this time with  real number weights (rather than boolean ones). These weights are retrievable from corpora of text using distributional methods. The non-quantified part of this instantiation closely follows that of previous work  \cite{Coeckeetal}:  nouns and noun phrases live in  distributional spaces similar to the one described in subsection \ref{subsec:DistVect}; verb phrases and transitive verbs  live in tensor spaces, built using the  methods described described in the concrete instantiations of the theoretical model of previous work, e.g. see \cite{GrefenSadr,kartsaklis2012}. 

\begin{definition}
\label{def:concretevectorspace}
The distributional  instantiation of the abstract model of definition \ref{ccc-model} to $\FdVect$  is the tuple $(\FdVect, V_{{\cal P}(\Sigma)},Z, \ovl{\semantics{\text{\ }}})$, for $V_{{\cal P}(\Sigma)}$   the vector space freely generated over the set $\Sigma$ and  $Z$ a vector space wherein interpretations of sentences  live. The interpretations of terminals are defined as follows:

\begin{itemize}
\item
A terminal $x$ generated by  N or NP rules is given by  $\ovl{\semantics{x}} (1) := \sum_i c_i^x \ket{A_i}$ for $A_i \subseteq \Sigma$. 
\item A terminal $x$ generated by the VP rule is given by $\ovl{\semantics{x}} (1) := \sum_{jk} c_{jk}^x \ket{A_j \otimes A_k}$, for $A_j \subseteq \Sigma$ and $\ket{A_k}$ a basis vector of $Z$. 
\item A terminal $x$ generated by the V rule  is given by $\ovl{\semantics{x}} (1) := \sum_{lmn} c_{lmn}^x \ket{A_l \otimes A_m \otimes A_n}$, for $A_l,A_n \subseteq \Sigma$ and $\ket{A_m}$ a basis vector of $Z$. 
\item A terminal  $d$ generated by the  Det rule is concretely given on subsets $A$ of $\Sigma$ by  $\ovl{\semantics{d}}(\ket{A})  = \sum_{B \in \semantics{d}(A)} c_B^d \ket{B}$.
\end{itemize}
\end{definition}

As for the types, a terminal generated by the either of the N and NP rules has type $\mathbb{R} \to V_{{\cal P}(\Sigma)}$, a VP terminal has type $\mathbb{R} \to V_{{\cal P}(\Sigma)} \otimes Z$; the type of a V terminal is $\mathbb{R} \to V_{{\cal P}(\Sigma)} \otimes Z \otimes  V_{{\cal P}(\Sigma)}$. A terminal  $d$ generated by the  Det rule has type $V_{{\cal P}(\Sigma)} \to V_{{\cal P}(\Sigma)}$.

Examples of this model are obtained by setting three sets of parameters:  (1) instantiating $Z$ to different  sentence spaces,  (2)  different ways of embedding the distributional vectors of $V_{\Sigma}$ in the space $V_{{\cal P}(\Sigma)}$, and (3) different ways in which word vectors and tensors are built. The concrete constructions for the weighted interpretations of quantifiers  depend on these choices, but  can be implemented  according to the same general guidelines.  The weight  $c_B^d$ of a  quantifier $d$ over the basis $A$   can  stand for  a \emph{degree of set membership}. In this case  $\sum_{B \in \semantics{d}(A)} c_B^d \ket{B}$  can be implemented as   `$c_B^d$ is the degree to which $d$ elements of $A$ are in $B$'. This weight  can also stand for  a \emph{degree of co-occurrence} and be  retrieved from a corpus. In this case, $\sum_{B \in \semantics{d}(A)} c_B^d \ket{B}$ is read as `$c_B^d$ is the degree to which $d$ elements of $A$ \emph{have co-occurred with} $B$'.  We  provide three example  instantiations below. 

\par{\bf  Scalar Sentence Dimensions.} Suppose  $Z = \mathbb{R}$. The interpretation of a sentence with a quantified subject becomes as follows:
\[
\sum_{ij}  \sum_{B \in \semantics{d}(\semantics{n})} c_i^n c_{j}^{vp} c_B^d \langle B \mid A_i \cap A_j \rangle
\]
Similarly, the interpretation of a sentence with a quantified object becomes as follows:
\[
\sum_{ijlm}  \sum_{B \in \semantics{d}(\semantics{n})} c_i^{np} c_{jl}^v c_m^n c_B^d \langle A_i \mid A_j \rangle  \langle A_l \cap A_m \mid B \rangle
\]
Here, take $\Sigma = {\cal U}$ and one can use the $\Rel$-to-$\FdVect$ embedding and  obtain a weighted  version of the boolean model of definition \ref{def:concrete-fdect}. 

\par{\bf Distributional  Sentence Dimensions.} Suppose  $\cal S$ contains the  sentence  dimensions of a  compositional distributional model of meaning and take $Z = V_{\cal S}$. The sentence  dimensions can  be constructed in different ways. In \cite{GrefenSadr}, they were taken to be $\mathbb{R}$, whereas in 
\cite{kartsaklis2012}, we took them to be  the same as the dimensions of $V_\Sigma$.   In either case, there are different options on how to interpret the  dimensions of  $V_{{\cal P}(\Sigma)}$  in a distributional model.  We present three different constructions below. 

\begin{enumerate}
\item {\bf The singleton construction.} Take the interpretation of  a terminal $x$ generated by  either of the N or NP rules to be  $\sum_i c_i^x \ket{\{v_i\}}$  whenever $\sum_i c_i^x \ket{v_i}$ is the vector interpretation of $x$ in the distributional space $V_{\Sigma}$. Similarly,  a terminal $x$ generated by the VP rule is embedded as  $\sum_{ij} c_{ij}^x \ket{\{v_i\} \otimes s_j}$ whenever  $\sum_{ij} c_{ij}^x \ket{v_i \otimes s_j}$ is the matrix interpretation of $x$ in $V_{\Sigma} \otimes V_{\cal S}$. In the same fashion, a terminal $x$ generated by the V rule embeds as  $\sum_{ijk} c_{ijk}^x \ket{\{v_i\}  \otimes {s}_j  \otimes \{{v}_k\}}$, for $\sum_{ijk} c_{ijk}^x \ket{{v}_i \otimes {s}_j \otimes {v}_k}$ the cube interpretation of $x$ in $V_{\Sigma} \otimes V_{\cal S} \otimes V_{\Sigma}$. 

The interpretation of a sentence with a quantified subject becomes as follows:
\[
\sum_{ijk}  \sum_{B \in \semantics{d}(\semantics{n})} c_i^n c_{jk}^{vp} c_B^d \langle B \mid \{v_i\} \cap \{v_j\}   \rangle \ket{s_k}
\]
Similarly, for the interpretation of a sentence with a quantified object we obtain:
\[
\sum_{ijklm} \sum_{B \in \semantics{d}(\semantics{n})} c_i^{np} c_{jkl}^v c_m^n c_B^d \langle \{v_i\} \mid \{v_j\} \rangle  \ket{s_k}  \langle \{v_l\} \cap \{v_m\} \mid B \rangle
\]
The weights in the above formulae  come from the underlying compositional distributional model.  The  vector constructions for nouns and noun phrases are obtained by following  a distributional model; the matrix and cube constructions for verbs are constructed as detailed in \cite{GrefenSadr} or in \cite{kartsaklis2012}, depending on the choice of $\cal S$.

\item {\bf Sets of dimensions as lemmas.} A lemma is a set of different forms of a word. In this instantiation, each dimension of $V_{{\cal P}(\Sigma)}$ stands for a lemma.  

The interpretation of a sentence with a quantified sentence becomes:
 \[
\sum_{ijk}  \sum_{B \in \semantics{d}(\semantics{n})} c_i^n c_{jk}^{vp} c_B^d \langle B \mid A_i \cap A_j \rangle \ket{s_k}
\]
Similarly, the interpretation of a sentence with a quantified object  becomes:
\[
\sum_{ijklm} \sum_{B \in \semantics{d}(\semantics{n})} c_i^{np} c_{jkl}^v c_m^n c_B^d \langle A_i \mid A_j \rangle  \ket{s_k}  \langle A_l \cap A_m \mid B \rangle
\]
The weights  are  retrieved from a corpus by e.g. adding, normalizing, and clustering (e.g. average or $k$-means) of the co-occurrence weights of the elements of the lemma set.

\item {\bf Sets of dimensions as features.} A feature is the set of  words that together represent a pertinent property.   In this instantiating, each such dimension  of $V_{{\cal P}(\Sigma)}$ represents a set of such words. For instance, $\{$miaow, purr$\}$ is the sound feature for the  `animals',  $\{$run, sleep$\}$ is its action feature, and $\{$cat, kitten$\}$ is its species feature.  Each dimension of $V_{{\cal P}(\Sigma)}$ stands for a feature. The interpretations of quantified sentences are obtained by computing the same formulae as in the lemma instantiation, but the concrete values of the weights are obtained differently. 
\end{enumerate}

Lemmas and features are sets of words.  Whereas lemmas are syntactic objects (different syntactic forms of a word), features represent semantic properties of words.  Any set of words can in principle be a feature, namely, a common property  shared by the words of the corresponding set. For instance the words in the set $\{$sky, sea, blueberry, winter$\}$   represent a feature  corresponding to them being  "blue";  the words in the set $\{$milk, beer, Lucozade, soya sauce$\}$ represent a feature since they are all drinkable. Thus every subset of $\Sigma$ is feature and  every dimension in  $V_{{\cal P}(\Sigma)}$  becomes interpretable in the third embedding above. This is not the case in the second embedding.   Not every subset of  $\Sigma$   corresponds to a lemma, thus not every   dimension of $V_{{\cal P}(\Sigma)}$ is interpretable.  In effect, for the theory and also the practical parts of this project, we do not need the whole of the ${\cal P}(\Sigma)$, as we have only used the intersection property of $\Rel$ with bialgebras. Working in a subset thereof, such as down or up sets is work under progress.

As an example of the third embedding, consider the feature set instantiation and suppose  the following are among the features of $V_{{\cal P}(\Sigma)}$:
\[
\{\mbox{cats, kittens}\}, \{\mbox{miaow, purr}\}, \{\mbox{sleep, snore}\} \quad \in \quad {\cal P}(\Sigma)   
\]
Take the instantiation of the universal quantifier over these  to be:
\begin{center}
\begin{tabular}{c|c|c|c|}
$\ovl{\semantics{\text{all}}}$ &  $\ket{\{\mbox{cats, kittens}\}}$ &$ \ket{\{\mbox{miaow, purr}\}}$ &$\ket{\{\mbox{sleep, snore}\}}$\\
\hline
$\ket{\{\mbox{cats, kittens}\}}$ & {\tt small} & 0.7& 0.5\\
 $\ket{\{\mbox{miaow, purr}\}}$ &  0.9 &  {\tt small} & 0.3\\
 $\ket{\{\mbox{sleep, snore}\}} $ &  0.2 &  0.3&  {\tt small}\\
 \hline
\end{tabular}
\end{center}
In the first row, 0.7 is the degree to which \emph{all} elements of $\{\mbox{cats, kittens}\}$ have feature 
$\{\mbox{miaow, purr}\}$, witnessed by the fact that, for instance, all occurrences of cats and kittens in the corpus have  occurred in sentences which have a verb such as  miaow or purr. Similarly, 
 0.5 is the degree to which \emph{all} elements of $\{\mbox{cats, kittens}\}$ have feature $
 \{\mbox{sleep, snore}\}$. The intersection of a term with itself has no information content and is  thus taken to be a very small fraction, so as not to play a  role in  deductions. 
 
For the existential quantifier, a similar instantiation results in higher degrees as the quantifier is more relaxed, witnessed by the fact that, for instance, `kittens'  have more of the miaow feature than `cats' since they miaow more. Suppose this provides us with the  following:
\begin{center}
\begin{tabular}{c|c|c|c|}
$\ovl{\semantics{\text{some}}}$ &  $\ket{\{\mbox{cats, kittens}\}}$ &$ \ket{\{\mbox{miaow, purr}\}}$ &$\ket{\{\mbox{sleep, snore}\}}$\\
\hline
$\ket{\{\mbox{cats, kittens}\}}$ &  {\tt small} & 0.9  & 0.6\\
$\ket{\{\mbox{miaow, purr}\}}$ & 0.9&  {\tt small}& 0.5 \\
$ \ket{\{\mbox{sleep, snore}\}}$& 0.5& 0.5&  {\tt small}\\
\hline
\end{tabular}
\end{center}
Suppose the vectors  of `animal' and `run' in this space are as follows:
\begin{center}
\begin{tabular}{c|c|c|c|}
& $\ket{\{\mbox{cats, kittens}\}}$ &$ \ket{\{\mbox{miaow, purr}\}}$ &$\ket{\{\mbox{sleep, snore}\}}$\\
\hline
$\ovl{\semantics{\text{animal}}}$ &  0.5& 0.4 & 0.3\\
$\ovl{\semantics{\text{run}}} $ & 0.6 $\ket{s_1}$ & 0.4$\ket{s_2}$& 0.2$\ket{s_3}$\\
\hline
\end{tabular}
\end{center}

In the first row, 0.5 is the degree to which the word `animal' has had the feature $\{\mbox{cats, kittens}\}$ in the corpus, e.g.  due to the fact that it has occurred in sentences such as `a cat is an animal' and `kittens are small and cute animals'. Similarly, 0.4 is the degree to which `animal' has feature $\ket{\{\mbox{miaow, purr}\}}$ and 0.3  the degree to which it has feature $\ket{\{\mbox{sleep, snore}\}}$. The values of the $\ket{s_i}$ in $\ovl{\semantics{\text{run}}}$ depend on the concrete instantiation of the sentence dimensions, to keep things simple we do not instantiate them. 

One  computes  the vector interpretations of $\ovl{\semantics{\text{all}}}(\ovl{\semantics{\text{animals}}})$ and $\ovl{\semantics{\text{some}}}(\ovl{\semantics{\text{animals}}})$ by linearly expanding $\ovl{\semantics{\text{all}}}$ and $\ovl{\semantics{\text{some}}}$ over the vector of `animal':

{\small  
\begin{eqnarray*}
\ovl{\semantics{\text{all}}}(\ovl{\semantics{\text{animals}}}) &=& 0.5 \ovl{\semantics{\text{all}}}(\ket{\{\mbox{cats, kittens}\}}) + 0.4 \ovl{\semantics{\text{all}}}(\ket{\{\mbox{miaow, purr}\}})  + 0.3 \ovl{\semantics{\text{all}}}(\ket{\{\mbox{sleep, snore}\}})\\
\ovl{\semantics{\text{some}}}(\ovl{\semantics{\text{animals}}}) &=& 0.5 \ovl{\semantics{\text{some}}}(\ket{\{\mbox{cats, kittens}\}})+ 0.4 \ovl{\semantics{\text{some}}}(\ket{\{\mbox{miaow, purr}\}})  + 0.3 \ovl{\semantics{\text{some}}}(\ket{\{\mbox{sleep, snore}\}})
\end{eqnarray*}}
The interpretations of the quantified sentences `all animals run' and `some animals run'  are  be computed by substituting these numbers in the formula $\sum_{ijk}  \sum_{B \in \semantics{d}(\semantics{n})} c_i^n c_{jk}^{vp} c_B^d \langle B \mid A_i \cap A_j \rangle \ket{s_k}$.  It results in the following summands of their corresponding  linear expansions:

\begin{center}
{\small
\begin{tabular}{c|c|c|c|}
& $\ket{s_1}$ & $\ket{s_2}$ & $\ket{s_3}$\\
\hline
$\ovl{\semantics{\mbox{all animals run}}}$ &  $0.5 \times (0.4 \times 0.9 + 0.3 \times 0.2)$ & 
${ 0.4 \times (0.5\times 0.7 + 0.3 \times 0.3})$ &
$ 0.3 (0.5 \times 0.5 + 0.4 \times 0.3)$\\
$ \ovl{\semantics{\mbox{some animals run}}}$&
$0.5 \times (0.4 \times 0.3 + 0.5 \times 0.2)$ &
 $0.4 \times (0.5\times 0.9 + 0.3 \times 0.5) $ &
 $0.3 (0.5 \times 0.6 + 0.4 \times 0.5)$\\
 \hline
\end{tabular}
}
\end{center}
In the literature on  \emph{distributional inclusion hypothesis}  \cite{Geffet,Weeds} different types  of  orderings on feature vectors are used  to model and experiment with word-level entailment. Wherein,  a word  `$v$' entails a word `$w$', written as `$v \vdash w$', if features of  `$v$' are also features of  `$w$'. The simplest such ordering is the point wise ordering on vector dimensions. In our model, the point wise ordering  on the feature sets  provide us with the following entailments:
\[
\ovl{\semantics{\mbox{all animals}}}  \vdash
\ovl{\semantics{\mbox{some animals}}} \hspace{2cm} 
\ovl{\semantics{\mbox{all animals run}}}  \vdash
\ovl{\semantics{\mbox{some animals run}}}
\]
This opens the way  to reason about entailment on quantified phrases and sentences  compositionally and using  statistical data from corpora of text.  Implementing some of the above instantiations and experimenting with their applications  to  entailments  on datasets  constitutes work in progress.

\section{Conclusion and Future Work}
After a review of the setting of distributional semantics and a context free  and pregroup grammatical  formalisation of the fragment of language concerning quantified phrases and sentences (and the necessary preliminaries on compact closed categories and bialgebras), we developed an abstract compact closed categorical semantics for  quantifiers with the help of bialgebras.  We instantiated the abstract setting to the category of sets and relations and proved its  equivalence to the thruth-theoretic semantics of generalised quantifier theory of Barwise and Cooper. We extended the existing instantiation of the categorical compositional distributional semantics to finite dimensional vector spaces and linear maps  to develop a corpus-based instantiation for our model. Implementing this setting on real data and experimenting with it constitutes work in progress. Extending the theory to other sets of subsets, e.g. down  or up sets, rather than the full powerset is a future direction, as is developing a logic based on any of  these collections (powerset vs down or up sets). Extending the grammar to more expressive fragments of English and addressing advanced language phenomena such as co-reference resolution, following the work of  \cite{Preller14},  is another future direction.

\section{Acknowledgements}
We thank the anonymous reviewers for their  comments. Hedges thanks ESPRC for postdoctoral fellowship EP/N021282/1. Sadrzadeh thanks EPSRC for Career Acceleration Fellowship EP/J002607/1 and AFOSR for International Scientific Collaboration Grant FA9550-14-1-0079.

\bibliography{quant.bib}
\bibliographystyle{splncs03}

\end{document}